\documentclass[journal]{IEEEtran}

\usepackage{bm}
\usepackage{amsmath}
\usepackage{epsf}
\usepackage{graphics}
\usepackage{ amssymb }
\usepackage{stackrel}
\usepackage[dvips]{graphicx}
\usepackage{mathtools}
\usepackage{epsfig}
\usepackage{cite}
\usepackage{colortbl}
\usepackage{color}
\usepackage{enumitem}
\usepackage{soul,xcolor}
\allowdisplaybreaks

\usepackage{bm}
\usepackage{amsmath}
\usepackage{epsf}
\usepackage{graphics}
\usepackage{ amssymb }
\usepackage[dvips]{graphicx}
\usepackage{epsfig}
\usepackage{cite}
\usepackage{graphicx}
\usepackage{epsfig}
\usepackage{latexsym}
\usepackage{amsfonts}
\usepackage{here}
\usepackage{rawfonts}

\usepackage[utf8]{inputenc}
\usepackage[english]{babel}
\usepackage{amsmath}
\usepackage{amsfonts}
\usepackage{amssymb}
\usepackage{color} 
\usepackage{bm}
\usepackage{listings}
\usepackage{caption}
\usepackage{amssymb}
\usepackage{amsthm}
\usepackage{graphicx}
\usepackage{epstopdf}
\usepackage{listings}
\usepackage{float}
\usepackage{amsmath}
\usepackage{amssymb}
\usepackage{amsfonts}
\usepackage{epstopdf}

\usepackage{multirow}
\usepackage{amscd}
\usepackage{mathrsfs}
\usepackage{graphicx}
\usepackage{makecell}
\usepackage{color}
\usepackage{url}
\usepackage{bm}
\usepackage{algorithm}
\usepackage{algorithmic}
\usepackage{setspace}
\usepackage{footnote}
\usepackage{xcolor}
\lstloadlanguages{Python}

\DeclareMathOperator*{\argmin}{arg\,min}
\newtheorem{theorem}{Theorem}

\newtheorem{definition}{Definition}
\newtheorem{proposition}{Proposition}
\newtheorem{corollary}{Corollary}

\newtheorem{remark}{Remark}

\newtheorem{assumption}{Assumption}

\addto\captionsenglish{}
\usepackage{bbm}

\newcommand{\blue}{\color{black}}

\usepackage{multicol}
\usepackage{hyperref}
\usepackage{mathtools}

\usepackage{lipsum,graphicx,subcaption}

\captionsetup[subfigure]{labelformat=parens, labelsep=colon}

\usepackage{diagbox}
\usepackage[protrusion=true,expansion=true]{microtype}
\pdfoutput=1
\usepackage[font=footnotesize]{caption}
\captionsetup[sub]{font=footnotesize}

\setlength{\textfloatsep}{1pt plus 2.0pt minus 2.0pt}
\usepackage{graphicx}
\usepackage{grffile}
\usepackage{tabularx}
\usepackage{booktabs}
\makeatletter
\newcommand{\vast}{\bBigg@{4}}

\newcommand{\Vast}{\bBigg@{5}}
\makeatother
\newcolumntype{Y}{>{\centering\arraybackslash}X}
\makeatletter 
\newcommand\semiHuge{\@setfontsize\semiHuge{22.72}{27.38}}
\makeatother 
\begin{document}

\title{\semiHuge Multi-Edge Server-Assisted Dynamic Federated Learning with an Optimized Floating Aggregation Point}


\author{Bhargav Ganguly,~\IEEEmembership{Student Member,~IEEE}, Seyyedali Hosseinalipour,~\IEEEmembership{Member,~IEEE}, Kwang Taik Kim,~\IEEEmembership{Member,~IEEE}, Christopher G. Brinton,~\IEEEmembership{Senior~Member,~IEEE}, Vaneet Aggarwal,~\IEEEmembership{Senior~Member,~IEEE}, David~J.~Love,~\IEEEmembership{Fellow,~IEEE}, and Mung~Chiang,~\IEEEmembership{Fellow,~IEEE}
\vspace{-8mm}}
\maketitle

\begin{abstract}
We propose cooperative edge-assisted dynamic federated learning ({\tt CE-FL}). {\tt CE-FL} introduces a distributed machine learning (ML) architecture, where data collection is carried out at the end devices, while the model training is conducted \textit{cooperatively} at the end devices and the edge servers, enabled via data offloading from the end devices to the edge servers through base stations. {\tt CE-FL} also introduces floating aggregation point, where the local models generated at the devices and the servers are aggregated at an edge server, which varies from one model training round to another to cope with the network evolution in terms of data distribution and users' mobility. {\tt CE-FL} considers the heterogeneity of network elements in terms of communication/computation models and the proximity to one another. {\tt CE-FL} further presumes a \textit{dynamic} environment with online variation of data at the network devices which causes a \textit{drift} at the ML model performance. We model the processes taken during {\tt CE-FL}, and conduct analytical convergence analysis of its ML model training. We then formulate network-aware {\tt CE-FL} which aims to adaptively optimize all the network elements via tuning their contribution to the learning process, which turns out to be a non-convex mixed integer  problem. Motivated by the large scale of the system, we propose a \textit{distributed optimization solver} to break down the computation of the solution across the network elements. We finally demonstrate the effectiveness of our framework with the data collected from a real-world testbed.
\end{abstract}

\vspace{-1.5mm}
\begin{IEEEkeywords}
Fog learning, federated learning, distributed machine learning, cooperative learning, network optimization.
\end{IEEEkeywords}

\vspace{-3mm}
\section{Introduction}
\vspace{-.3mm}
\noindent Recent advancements in smart devices (e.g., new chip-sets in phones and smart cars) have made them  capable of collecting large amounts of data in real-time. In the example case of smart cars, collection of more than 4 TB of data per day is predicted~\cite{miller2017autonomous,8759041}. Utilizing this data for training a machine learning (ML) model (e.g., for driving assistance) is the  main motivation for implementing distributed ML techniques over the network edge in 6G-and-beyond systems~\cite{ali20206g}. 

Federated learning (FedL) has been promoted as one of the key distributed ML techniques~\cite{yang2021federated}. Its conventional model training architecture~\cite{konevcny2017federated} is depicted in Fig.~\ref{FLsim}. In FedL, each device trains a local model using its own dataset. The local models of devices are periodically transmitted to a server and aggregated together, e.g., via weighted averaging, to form a global model. The server broadcasts the global model among the devices to initiate the next round of local model training. FedL keeps the dataset of the devices local, which is desired in applications with user privacy concerns, e.g., healthcare, and only uses the server as an \textit{aggregator}. However, implementing FedL using its conventional architecture  over large-scale wireless networks with multiple edge servers and large number of devices with heterogeneous communication/computation capabilities faces major challenges discussed next. 

\begin{figure}[t]
\centering
\includegraphics[width=3.1in,height=.7in]{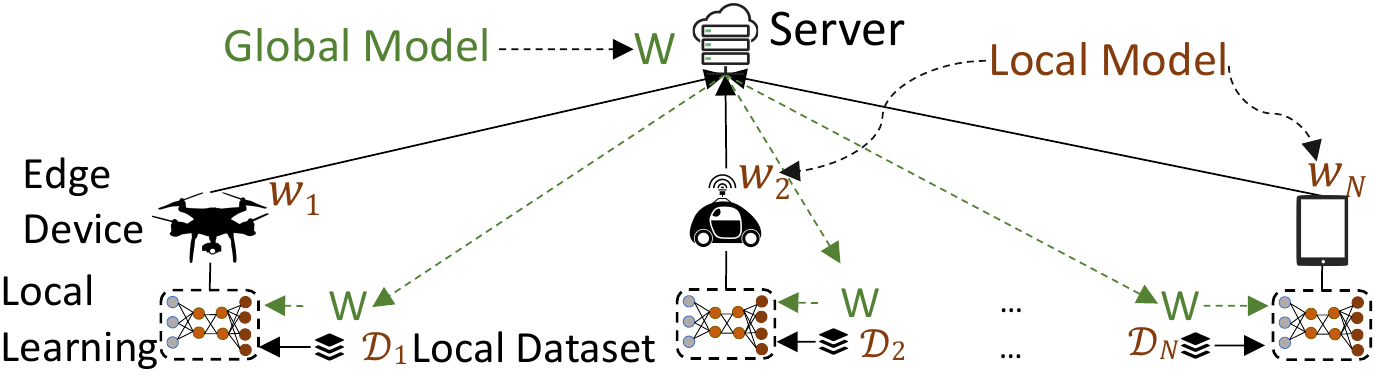}
\vspace{-.8mm}
\caption{Conventional architecture of federated learning.\label{FLsim}}
\vspace{-1.2mm}
\end{figure}

\vspace{-1mm}
\subsection{Motivations and Challenges}
\vspace{-.2mm}
{\color{black}As depicted in Fig.~\ref{fig:1}, we consider the implementation of distributed ML over a network of multiple user equipments (UEs),  multiple edge data centers (DCs), and multiple base stations (BSs),  which is a realistic model for the  network edge~\cite{6678114}.
Our foremost goal is {to introduce a new ML model training architecture to execute distributed ML across the continuum of UEs to DCs}. We further optimize the network-element orchestration, through a distributed algorithm, to have an efficient distributed ML model training procedure.
Our system design is motivated by challenges faced with the implementation of FedL model training architecture over our hierarchical network of interest summarized below:}
\begin{enumerate} [label={(C}{{\arabic*})}]
        \item {\color{black} Conventional FedL neglects the computation power of the  DCs and conducts ML model training  solely at the UEs. Meanwhile,
            some of the UEs may have  large volumes of data they are unable to process. At the same time,  these resource-constrained  UEs might be located near BSs with  access to powerful DCs that can efficiently process data. Also,  excessive computation power of DCs can out-weigh the added latency of
data transfer (i.e., 
 the latency caused by data transfer from the straggling UEs to the DCs can be compensated
by faster data processing at DCs).} \label{c1}
    \item It is not clear which DC should be selected as the model aggregator since the distribution of data changes across time at the UEs. Also, UEs have different channels to the BSs, while different BSs have different delay of data/parameter transfer to different DCs. \label{c2}
    \item {\color{black} There exists heterogeneity across the DCs and the UEs in terms of computation and communication capability. Thus, model training cannot be conducted arbitrary across the network elements. Also, it is not practical to consider the existence of a central controller with global knowledge about the characteristics of the UEs, BSs, and DCs that can solely obtain all the network orchestration decisions pertaining to local and network-wide communication and computational resource allocation.}
    \label{c3}
    \item {\color{black} Although data offloading is not considered in the conventional FedL architecture \cite{konevcny2016federated,mcmahan2017communication}, it is a viable mechanism for many applications of FedL over wireless edge. This is due to the fact that in many FedL applications users do not have strict privacy concerns on their local data. For example, autonomous driving is one of the envisioned applications of FedL, wherein
data collected by the cars are processed for training~\cite{9492062}. In such a setting,
with some form of incentivization (e.g., cashback points and gas credits), vehicle users may become willing
to share their non-private data such as automobile sensor measurements and pictures of
traffic signs.  Furthermore, in privacy-sensitive applications, FedL can be combined with research in \textit{private representation learning}~\cite{azam2022can}, which aims to tackle the privacy concerns associated with the transfer of raw data by obfuscating sensitive information attributes of the users.
Also,  encryption methods~\cite{arora2013secure} can help to facilitate end-to-end data transfer between
end devices and trusted servers to enable the deployment
of efficient data offloading techniques in FedL.\label{C4}}
\end{enumerate}

To respond to the aforementioned challenges, we propose cooperative edge-assisted dynamic/online federated learning ({\tt CE-FL}). {\tt CE-FL} addresses~\ref{c1}\&\ref{C4} via exploiting the computation capability of the edge servers in ML model training, and considering \textit{cooperative process of data points} between the end devices and the servers, where the devices offload a portion of their datasets to the BSs, which in turn disperse the received data across the edge servers. 
To address~\ref{c2}, {\tt CE-FL} introduces the idea of \textit{floating aggregation point}, where the edge server in charge of aggregating the models varies from one model training round to another and is chosen efficiently based on the dynamics of data variations across the devices captured via \textit{model/concept drift}, and the innate characteristics of the servers, BSs, and the end devices. To account for~\ref{c3}, {\tt CE-FL} considers heterogeneity of (i) the number of stochastic gradient descent (SGD) iterations used for model computation across  the end devices and the edge servers, (ii) the mini-batch sizes of SGDs conducted across the servers and end devices, and (iii) the load and power consumption models across the devices and the servers. 
Finally, {\tt CE-FL} introduces a \textit{distributed network orchestration} technique, under which the contribution of all the network elements to ML model training (e.g., number of local SGD iterations, mini-batch sizes, data offloading configuration and routing across the network hierarchy, and the floating aggregator server) is optimized in a distributed fashion.

To put it succinctly, {\tt CE-FL} conducts cooperative model training exploiting computation capability of both the devices and the servers,  while achieving (a) a balanced computation load in the device layer and edge server layer; (b) an efficient data dispersion/routing from the device layer to the BS layer and from the BS layer to the edge server layer; and (iii) an efficient parameter aggregation via selecting a floating aggregation point.

\vspace{-3mm}
\subsection{Related Work}
\textbf{Conventional FedL.}  FedL has attracted tremendous attention from both ML and wireless networking communities. In the former literature~\cite{mohri2019agnostic,huang2021fl,li2021ditto,li2019convergence,haddadpour2019convergence}, fundamental convergence of FedL upon having non-iid data across the clients has been revealed. Also, a variety of new distributed learning techniques inspired by FedL have been invented, e.g., fully decentralized learning architectures~\cite{haddadpour2019convergence}. In the networking and systems literature~\cite{8664630,dinh2019federated,ang2020robust,shlezinger2020uveqfed,ozfatura2021time,yang2020federated,9488906,9163301}, researchers have been mostly studying the performance of FedL under communication and computation heterogeneity. For example, researchers have studied the model training performance under noisy channels~\cite{ang2020robust}, limited energy devices~\cite{8664630}, limited bandwidth~\cite{dinh2019federated}, quantization and sparsification~\cite{shlezinger2020uveqfed, ozfatura2021time}, and wireless aggregation of signals over the air~\cite{yang2020federated}. Also,  device sampling~\cite{9488906} and data sampling~\cite{9163301} has been topics of research. Furthermore, a part of literature focuses on adapting FedL for a variety of new technologies, such as unmanned aerial vehicles~\cite{9039589,wang2021uav}, intelligent reflecting surfaces~\cite{wang2021federated}, and massive MIMO~\cite{vu2020cell}. {\color{black} As compared to this literature, we consider a different network model focusing on the distribution of ML model training across the UE-BS-DC hierarchy.}

\textbf{New Network Architectures for FedL.} Some recent works promote migrating from FedL to new distributed ML architectures considering the characteristics of edge networks. In~\cite{hosseinalipour2020federated}, fog learning is proposed that incorporates the cooperation among the devices and multi-layer architecture of fog systems into ML. In~\cite{chen2020wireless}, collaborative FedL via device-to-device (D2D) communications for model relaying is introduced. In~\cite{lin2021two,hosseinalipour2020multi}, semi-decentralized/hybrid FedL is proposed, which augments the global aggregations of FedL with local aggregations conducted via D2D communications. In~\cite{nguyen2020self}, democratized FedL is studied to exploit the innate capability of heterogeneous devices to train an ML model. Finally, parallel successive learning has been developed in~\cite{psl} as a dynamic/online cooperative learning method that optimizes the utilization of D2D communications according to device heterogeneity. {\color{black} We contribute to this literature via proposing a novel ML network architecture consisting of multiple UEs, BSs, DCs, for which we study efficient network-element orchestration.}

\begin{figure}[t]
\includegraphics[width=.47\textwidth]{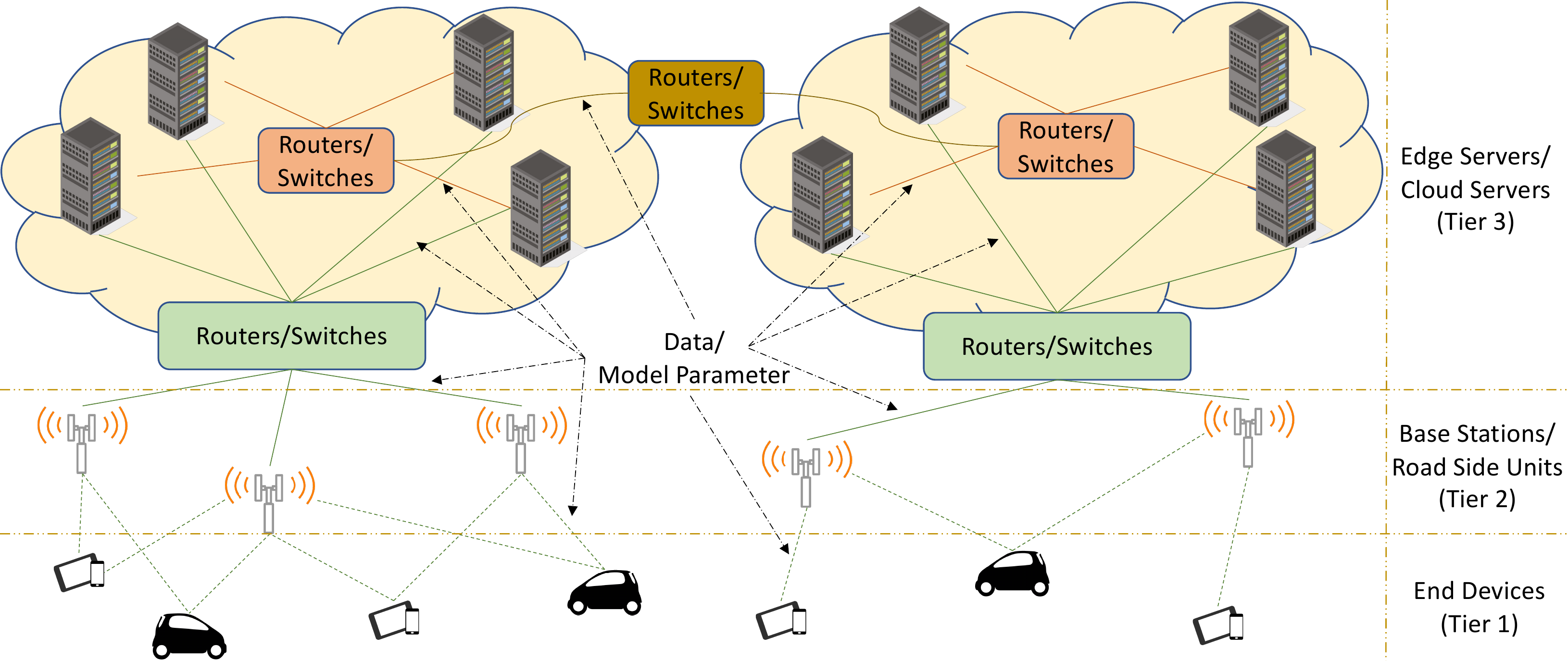}
\centering
\vspace{-1.2mm}
\caption{A schematic of our network model. We consider a three tiered network consisting of end devices, base stations, and edge servers.}
\label{fig:1}
\vspace{-1.2mm}
\end{figure}

\textbf{Hierarchical FedL.} One of the recent architectures of FedL has been hierarchical FedL~\cite{TobeAdded2,TobeAdded4,wang2020local}. The literature in this area is focused on managing the multi-stage model aggregations across the end device-to-cloud continuum. In this literature, researchers aim to reduce the latency of model training via conducting multiple rapid local aggregations at edge servers or lower layers of the network before conducting one global aggregation at the cloud server. However, they only use the edge/cloud servers as ``model aggregators" which do not conduct any data processing as in conventional FedL. 
This neglects the abundant computation capability of servers.

\textcolor{black}{\textbf{FedL with Data Offloading.} Although data sharing is not considered in the original FedL architecture~\cite{konevcny2016federated,mcmahan2017communication}, it has been promoted and investigated by the current literature. In the popular work~\cite{zhao2018federated}, a data sharing mechanism is deployed at the edge devices to mitigate the performance drop encountered as a consequence of non-i.i.d. local datasets.
In~\cite{hosseinalipour2020federated,9488906}, device-to-device (D2D) data offloading is leveraged to improve the efficiency of FedL over heterogeneous networks.
In~\cite{9492062}, a data offloading scheme is proposed for FedL to assist straggler vehicles through offloading their data to edge servers. Compared to these works, we build a novel distributed network orchestration scheme, in which the devices offload a portion of their data to  edge servers through  base stations. In particular, we consider joint data processing at the end devices and edge servers. This leads us to consider  joint load-balancing across the end devices and the edge servers to process the data for the ML task, and further consider  optimal data routing  to transfer data across the network hierarchy. We further formalize the floating aggregator notion into the learning architecture of FedL, where the aggregator server is carefully selected to optimize a trade-off between model performance, energy consumption, and delay. We also develop a distributed network element orchestration scheme to optimize ML model training. 
These make our system model and analysis unique and different from existing works.}



\vspace{-4mm}
 \subsection{Outline and Summary of Contributions}
 Our contributions can be summarizes as follows:
\begin{itemize}[leftmargin=4mm]
    \item We introduce and develop cooperative edge-assisted multi-edge server FedL ({\tt CE-FL}). We analytically characterize the convergence characteristics of {\tt CE-FL}, which leads to new convergence bounds in distributed ML.
    \item We model the processes taken through {\tt CE-FL} and investigate intrinsic heterogeneities existing at different network layers.  Our modeling gives insights on finding the optimized device and server orchestration scheme to train an ML model, which is carried out in conjunction with the floating aggregator selection, to optimize the performance of  {\tt CE-FL}. 
    \item We propose network-aware {\tt CE-FL} optimization problem  that captures the trade offs between
    ML performance, energy consumption, and delay.
    The optimization problem aims to configure the \textit{macro-decisions} across the network (e.g., data offloading and routing configuration across the network, and the floating aggregator), which leads to (i) \textit{load balancing} across the devices and servers, and (ii) optimized data routing across the network layers. It also obtains the \textit{micro-decisions}, which tunes each network element  (e.g., the CPU frequency cycles of the devices, number of local SGD iterations and SGD min-batch sizes at both end devices and edge servers). 
    \item We carefully investigate the characteristics of the optimization problem, which turns out to be a highly non-convex mixed-integer  problem. We propose a distributed network optimization solver with drawing a connection between two methods of surrogate function, used in successive convex optimization, and consensus-based optimization. We further study the optimality of our optimization solver. To the best of our knowledge, our developed solver is among the first decentralized device orchestration mechanisms in the area of network-aware distributed machine learning.
\end{itemize}
     
\textcolor{black}{In the following, we first describe the hierarchical network structure among the UEs, BSs, and DCs, as well as conducting distributed ML under a given network element orchestration scheme in Sec.~\ref{sec: sys_model}. In Sec.~\ref{sec:Conv}, we present our theoretical results on the ML model convergence behavior of {\tt CE-FL}. In Sec.~\ref{sec:NetAwareCEFL}, we aim to optimize the network orchestration scheme involving ML training, data and parameter offloading, and communication/computations overheads through formulating a network optimization problem. We then develop a  decentralized solver to optimize device orchestration for distributed ML in Sec.~\ref{sec:distSolv}. In Sec.~\ref{sec:NumExp}, we present our ablation study and proof-of-concept experiments to demonstrate how the tuned parameters manifested themselves in network costs and ML performance. Finally, in Sec.~\ref{sec:conclusion} we conclude the paper.}

\vspace{-2mm}
\section{System Model and Machine Learning Task} \label{sec: sys_model}
\noindent In this section, first, we describe the network structure of our interest (Sec.~\ref{subsec:netw}). Second, we introduce the dynamic/online model tracking problem (Sec.~\ref{subsec:Modeltrack}). Third, we provide an overview of our proposed cooperative distributed ML methodology (Sec.~\ref{subsec:CEFL}). Fourth, we  detail the ML model training (Sec.~\ref{subsec:ModelTrain}). We finally model communications, computations, and the floating aggregator (Sec.~\ref{subsec:commModel}).

\vspace{-3mm}
\subsection{Network Architecture}\label{subsec:netw}
We consider a hierarchical edge/fog computing network~\cite{6678114,hosseinalipour2020federated}, which consists of multiple user equipments (UEs), multiple base stations (BSs), and multiple edge server/data centers (DCs). A schematic of our network is depicted in Fig.~\ref{fig:1}. The UEs, BSs, and DCs are {collected via the sets} $\mathcal{N}$, $\mathcal{B}$, and $\mathcal{S}$, respectively. Each user UE $n\in\mathcal{N}$ can potentially upload/receive data to{/from} all of the BSs although connecting to some of the BSs may require prohibitively large transmit power. Also, each BS can potentially upload/receive data to/from all the DCs, although data transfers between some BS-DC pairs may impose large delay (e.g., when data needs to go through multiple switch/routers before reaching the destination). 

{\color{black}Our proposed framework can be considered as \textit{an interconnection between mobile edge computing \cite{6678114} and distributed ML}, where the resource limited UEs aim to exploit the abundant resource of DCs to conduct an ML task with a high efficiency.}

\vspace{-3mm} 
 \subsection{Dynamic/Online Model Tracking Problem}\label{subsec:Modeltrack}
 
  We consider a slotted time representation of the system dynamics, where $t\in\mathbb{N}\cup\{0\}$ denotes an arbitrary time index. In our network each $t$ is associated with one global model training and aggregation round which will be described later. We assume distributed model training for an ML task, where the data is collected at the UEs at the bottom layer of our network hierarchy.  At each time instant $t$, let $\overline{{\mathcal{D}}}^{(t)}_{{n}}$ denote the dataset at UE ${n} \in \mathcal{N}$ consisting of $\overline{{{D}}}^{(t)}_{{n}} \triangleq |\overline{{\mathcal{D}}}^{(t)}_{{n}}|$ data points. In contrast to most of the existing literature on federated learning, we consider a \textit{dynamic environment}, in which the number of data points and subsequently the distribution of data across the clients may vary over time~\cite{psl}. 
     Each data point $\xi\in\overline{\mathcal{D}}_n^{(t)}$, $\forall n$, is associated with a feature vector, e.g., RGB colors of a picture, and a label, e.g., the weather condition in the picture.

    Let $f(\mathbf{x};\xi)$ denote the loss of the ML model (e.g., a neural network classifier) on data point $\xi$  \textit{parameterized} by model parameter $\mathbf{x}\in\mathbb{R}^P$, where $P$ denotes the dimension of the model parameter. At each time instant $t$, for a model parameter $\mathbf{x}$, each device $n\in\mathcal{N}$ is associated with a \textit{local loss} function
    \vspace{-1mm}
   \begin{align}\label{eq:localLossInit}
    \overline{F}_{n}^{(t)}(\mathbf{x}) =  \frac{1}{{\overline{D}}_{{n}}^{(t)}}{\underset{\xi \in {\overline{\mathcal{D}}}_{{n}}^{(t)}}{\sum} {f}(\mathbf{x} ; \xi)}.
\end{align}
    The instantaneous \textit{global loss} function of the system is
    \vspace{-1mm}
    \begin{align}\label{eq:globalLossinit}
    F^{(t)}(\mathbf{x}) = \frac{1}{D^{(t)}}\displaystyle \sum_{n \in \mathcal{N}} {{{D}}_{n}^{(t)}} \overline{F}_{n}^{(t)}(\mathbf{x}),
\end{align}
    where {\small ${D}^{(t)}=\sum_{n \in \mathcal{N}} \overline{D}_{{n}}^{(t)}$} is the total number of data points.
    
     The goal of the \textit{dynamic/online model tracking} is to minimize the instantaneous global loss function, i.e., at each time instant $t$, it aims to find the model parameter $\mathbf{x}^{(t)^\star}$ such that
     \begin{equation}\label{eq:GlobLoss}
         \mathbf{x}^{(t)^\star}= \argmin_{\mathbf{x}\in\mathbb{R}^P}F^{(t)}(\mathbf{x}).
     \end{equation}
     Let $T$ denote the duration of ML model training and $[T]\triangleq\{0,...,T-1\}$.
     It is evident that the sequence $\{\mathbf{x}^{(t)^\star}\}_{t\in[T]}$ is a function of devices local dataset dynamics.
{\color{black} In our ML modeling, we quantify dynamics of data distributions at the devices via introducing \textit{model/concept drift} in Definition ~\ref{defn: model drift} in~Sec.~\ref{sec:Conv}.  \textit{Model/concept drift} draws a mathematical connection between the ML model performance and ML model training delay. 

The ML training formulation described by~\eqref{eq:GlobLoss} aims to obtain optimal instantaneous models
for conducting downstream tasks in real-time suitable for  time-varying datasets at the
devices~\cite{rizk2020dynamic}. We will later incorporate the performance of online model training obtained in Sec.~\ref{sec:Conv} into our network element orchestration formulation in Sec.~\ref{sec:NetAwareCEFL}, wherein the inherent trade-off
between instantaneous ML performance, energy consumption, and delay is optimized.
}




\vspace{-3mm}

\subsection{{\tt CE-FL} Overview}\label{subsec:CEFL}
  We introduce \textit{cooperative edge-assisted federated learning} ({\tt CE-FL}) to conduct distributed ML over our hierarchical network. 
To solve~\eqref{eq:GlobLoss}, in {\tt CE-FL}, part of the dataset of the UEs are offloaded to the DCs. {\tt CE-FL} then simultaneously exploits the computation resources at the UEs (bottom layer) and the DCs (top layer) for ML model training, while the BSs (middle layer) are used for data and model parameter relaying. In particular, {\tt CE-FL}  encompasses four processes: (i) UEs data offloading to BSs, (ii) BSs dispersing data among DCs, (iii) UEs and DCs conducting model training, and (iv) model parameter transfer/pulling and aggregation at a floating aggregation DC, which are describe  below.
  
  \begin{enumerate}[leftmargin=4mm]
 \item{\textit{Data offloading from UEs to
 BSs}}: At time instant $t$, each UE offloads/disperses part of its generated data across a subset of BSs. After the data offloading process, we denote the collected dataset at the each BS $b\in\mathcal{B}$ via ${{\mathcal{D}}}_b^{(t)}$ and the remaining dataset at each UE $n\in\mathcal{N}$  by ${{\mathcal{D}}}_n^{(t)}\subseteq {\overline{\mathcal{D}}}_n^{(t)}$.
 \item{\textit{Data relaying from BSs to
 the DCs}}: Each BS offloads/disperses its collected data among a subset of DCs. Since BSs do not have computation power, no data is kept at the BSs. We use $\mathcal{D}_{s}^{(t)}$ to denote the dataset collected at DC ${s} \in \mathcal{S}$.
 \item{\textit{ML model training at UEs and DCs}}: Each UE executes ML model training on the its remaining dataset. Also, ML model training begins at the DCs once BSs relay their data.
 \item{\textit{Parameter transfer across the network and aggregation at a floating point}:}
 After model training ends, the local ML models of  UEs and DCs are aggregated at a \textit{floating} DC, which varies from one model training round to another. The aggregator is adaptively chosen, while taking into account energy and delay of parameter transfer across the network. 
 \end{enumerate}
 
We refer to the set of UEs and DCs as \textit{data processing units} (DPU) and use index $i \in \mathcal{N} \cup \mathcal{S}$ to refer to an arbitrary UE/DC. Next, we will carefully model and formulate the four processes outlined above. We first describe the local model training for a given dataset at the UEs and DCs and then describe how the data and parameter offloading and relaying are carried out.

\vspace{-3mm}
\subsection{Distributed Model Training in~{\tt CE-FL}}\label{subsec:ModelTrain}
As described above, in {\tt CE-FL}, parts of the local datasets of the UEs gets transferred to the DCs through BSs and DCs possess a collected dataset at each time (model training round) $t$. We will describe how the data gets transferred in Sec.~\ref{subsec:commModel} and obtain the optimal data configuration and routing across the network in Sec.~\ref{sec:NetAwareCEFL}. In the following, we describe the procedure carried out across the UEs and DCs to conduct ML model training for an arbitrary data configuration across them.  

To capture the performance of the ML model across UEs and DCs, we describe the \textit{local ML loss} at each DPU (i.e., a UE or a DC) $i\in\mathcal{N}\cup\mathcal{S}$ at time $t$ as\footnote{As compared to~\eqref{eq:localLossInit}, the loss in~\eqref{eq:localLossAfter} is defined on the actual dataset under which the ML model training is carried out in the index of the summation.}
 \begin{align}\label{eq:localLossAfter}
    {F}_{i}^{(t)}(\mathbf{x}) =  \frac{1}{{{D}}_{{i}}^{(t)}}{\underset{\xi \in {{\mathcal{D}}}_{{i}}^{(t)}}{\sum} {f}(\mathbf{x} ; \xi)}.
\end{align}
    Since~\eqref{eq:globalLossinit} measures the ML loss per \textit{data point} and data points do not get generated and lost during the data transfer stage of {\tt CE-FL}, it is straightforward to verify that $F^{(t)}(\mathbf{x})$ in~\eqref{eq:globalLossinit} can be equivalently written as
 $
    F^{(t)}(\mathbf{x}) = \frac{1}{D^{(t)}}\displaystyle \sum_{i \in \mathcal{N}\cup\mathcal{S}} {{\mathcal{D}}_{i}^{(t)}} \overline{F}_{i}^{(t)}(\mathbf{x})
$.




To solve~\eqref{eq:GlobLoss}, {\tt CE-FL} orchestrates both UEs and DCs in local ML model training, where each DPU $i$ aims to minimize its local loss $F^{(t)}_i(\mathbf{x})$ given by~\eqref{eq:localLossAfter}  via \textit{local ML model training}.
To conduct local ML model training we exploit the FedProx method~\cite{li2018federated}, which consists of a series of local stochastic gradient descent (SGD) iterations at each DPU $i$ on the \textit{regularized local loss} using its local dataset $\mathcal{D}_i^{(t)}$. FedProx is shown to be particularly effective for handling non-iid data across the clients which is the case in our system of interest. 
To cope with the communication/computation heterogeneities of network elements,~{\tt CE-FL} uses FedProx with \textit{different} number of local SGD iterations and mini-batch sizes across the network elements, which is among the first in literature.

In~{\tt CE-FL}, the ML model training starts with an initial model broadcast $\mathbf{x}^{(0)}$ from a predetermined DC across all the DPUs. Each subsequent global model training round $t\geq 1$ starts with broadcasting of a global model $\mathbf{x}^{(t)}$ from the respective elected floating aggregation DC.
Given $\mathbf{x}^{(t)}$, each DPU $i$ first initializes its local model parameter as $\mathbf{x}_i^{(t,0)}=\mathbf{x}^{(t)}$ and then conducts $\gamma_i^{(t)}\in\mathbb{N}$ local descent updates on its regularized local loss $g_i(\mathbf{x},\mathbf{x}^{(t)})= F_i^{(t)}(\mathbf{x})+\frac{\mu}{2}\Vert \mathbf{x}-\mathbf{x}^{(t)} \Vert^2$, which ensures the closeness of the local model to the global model $\mathbf{x}^{(t)}$ controlled by regularization parameter $\mu$. The evolution of the local model of each DPU $i$ during the local descent iterations is given by
\begin{align}\label{eq:locMod}
   \hspace{-2mm} \mathbf{x}^{(t,k)}_{i}= \mathbf{x}_i^{(t, k-1)}  \hspace{-1mm}- \eta \widetilde{\nabla} g_i(\mathbf{x}_i^{(t, k-1)} \hspace{-.5mm},\mathbf{x}^{(t)}),~k=1,\cdots,\gamma_i^{(t)} \hspace{-1mm}, \hspace{-3mm}
\end{align}
where $\eta$ denotes the step size and $\widetilde{\nabla} g_i(\mathbf{x}_i^{(t, k-1)},\mathbf{x}^{(t)})$ is the \textit{stochastic} gradient of the regularized local loss given by
\begin{equation}\label{eq:StochGrad}
   \hspace{-2mm} \widetilde{\nabla}{g}_i(\mathbf{x}_i^{(t, k-1)}\hspace{-.5mm},\mathbf{x}^{(t)})\hspace{-.5mm}=\hspace{-.5mm} \widetilde{\nabla} F_i^{(t)}(\mathbf{x}_i^{(t, k-1)})\hspace{-.5mm}+\hspace{-.5mm}{\mu}(\mathbf{x}_i^{(t, k-1)}\hspace{-.5mm}-\mathbf{x}^{(t)}).
      \hspace{-2mm}
\end{equation}
The stochastic gradient of local loss function $\widetilde{\nabla} F_i^{(t)}(\mathbf{x}_i^{(t, k-1)})$ is obtained via sampling a mini-batch (i.e., collection)  of data points ${\mathcal{D}}_i^{(t,k-1)}\subseteq {\mathcal{D}}_i^{(t)}$ with size ${{D}}_i^{(t,k-1)}=m_i^{(t)}D_i^{(t)}$, where $m_i^{(t)}\in(0,1]$ denotes the mini-batch fraction, as follows:
\begin{equation}
\hspace{-3mm}
\widetilde{\nabla} F_i^{(t)}(\mathbf{x}_i^{(t, k-1)})= \frac{1}{m_i^{(t)}D_i^{(t)}} \sum_{\xi\in {\mathcal{D}}_i^{(t,k-1)}} \nabla f(\mathbf{x}_i^{(t, k-1)};\xi).\hspace{-3mm}
    \end{equation}

Replacing~\eqref{eq:StochGrad} in~\eqref{eq:locMod} and recursively expanding the result implies the following relationship between the instantaneous local model and the initial local model at each DPU $i$:
\begin{equation}\label{localEvolution}
   \hspace{-3mm}\mathbf{x}_{i}^{(t, k)} \hspace{-.5mm} - \hspace{-.5mm}\mathbf{x}^{(t)} \hspace{-.5mm}=\hspace{-.5mm} -\eta \displaystyle \sum_{\ell = 0}^{k - 1} a_{{i,\ell}}^{(t)} \widetilde{\nabla} F^{(t)}_{i}(\mathbf{x}_{i}^{(t, \ell)}),\hspace{-3.5mm}
\end{equation}
where $a_{{i,\ell}}^{(t)} = (1-\eta\mu)^{\gamma_{i}^{(t)}-1-\ell}$. We further define $\mathbf{a}_{i}^{(t)} = \big[ a_{{i,0}}^{(t)}, \cdots , a_{{i,\gamma_{i}^{(t)} - 1}}^{(t)}\big]$.


At the end of local model training period, each DPU $i$ computes its accumulated gradient, which using~\eqref{localEvolution}, can be obtained based on the difference between the final local model parameter and the initial model parameter as
\begin{equation}
 \sum_{\ell = 0}^{\gamma_{i}^{(t)} - 1} a_{{i,\ell}}^{(t)} \widetilde{\nabla} F^{(t)}_{i}(\mathbf{x}_{i}^{(t, \ell)})= \big({\mathbf{x}^{(t)}-\mathbf{x}_{i}^{(t, \gamma_{i}^{(t)})}}\big)/{\eta },
\end{equation}
 and subsequently obtains its \textit{normalized} accumulated gradient
\begin{equation}\label{eq:normalGrad}
   \mathbf{d}_{{i}}^{({t})} = \frac{1}{\|\mathbf{a}_{i}^{(t)}\|_{1}} \sum_{\ell = 0}^{\gamma_{i}^{(t)} - 1} a_{{i,\ell}}^{(t)} \widetilde{\nabla} F^{(t)}_{i}(\mathbf{x}_{i}^{(t, \ell)}).
\end{equation}

 {\color{black} The normalization is a necessity to ensure that that the global ML model is not biased towards the DPUs that conduct more local SGD iterations~\cite{wang2020tackling}. Since DPUs possess different number of datapoints,
each DPU $i$ sends vector ${{D}}_{{i}}^{(t)}\mathbf{d}_{{i}}^{({t})}$, called \textit{scaled accumulated gradient}, to the selected floating aggregation server.  We assume that each BS can receive scaled accumulated gradients of multiple UEs,\footnote{Our modeling in Sec.~\ref{subsec:commModel} ensures that each UE is associated with one BS during scaled accumulated gradient transfer to avoid reception of multiple replicas of the gradient of the same UE at the aggregation server.} in which case it sums all the received vectors together to keep the dimension of the transmitted vector to the aggregation server fixed (e.g., see Fig.~3 of~\cite{hosseinalipour2020federated}).
The aggregation DC finally obtains the next global model parameter via first summing all the received vectors together to obtain $\sum_{i \in \mathcal{N} \cup \mathcal{S}} {  {{D}}_{{i}}^{(t)} } \mathbf{d}_i^{(t)}$ and then scaling the resulting vector to carry out the update
\begin{align}\label{eq:11M}
    \mathbf{x}^{(t + 1)} 
    & = \mathbf{x}^{(t)}-\vartheta \eta \frac{1}{ {D}^{(t)}} \displaystyle \sum_{i \in \mathcal{N} \cup \mathcal{S}}  {  {{D}}_{{i}}^{(t)} } \mathbf{d}_i^{(t)} ,
\end{align}
where $\vartheta$ is a scaling factor introduced to compensate for the normalization introduced in~\eqref{eq:normalGrad}.}

{\color{black}\begin{remark}
In this work, we use a single global model for the downstream task at the devices, which is a common approach in FedL literature~\cite{li2019convergence,haddadpour2019convergence,8664630,dinh2019federated}. It is worth mentioning that personalized FedL, which aims to learn a global ML model from which local models are obtained specific to the local distribution of data at the  devices~\cite{fallah2020personalized}, would be another potential approach. We leave the extension of the framework of \texttt{CE-FL} to support personalized FedL as future work.
\end{remark}}
\vspace{-3mm}

\subsection{Communication, Computation, and Floating Aggregation}\label{subsec:commModel}
We next describe the models for data/parameter communications, SGD local computations, and floating aggregation.
\subsubsection{UEs-BSs Communications}
UE-to-BS communications are carried out through wireless channels. For the uplink communications, we consider that the data rate between UE $n\in\mathcal{N}$ and BS $b\in\mathcal{B}$ at time instant $t$ is given by\footnote{We have considered the operation of the transmissions }
\begin{equation}\label{eqn:data-rate 1}
    R_{{{n}},{b}}^{(t)} = V_{{{n}},{b}}^{(t)}\log_2 \left(1+{P_{{{n}},{b}}^{(t)} |h_{{{n}},{b}}^{(t)}|^2}\big/{{(\sigma_{{{n}},{b}}^{(t)})^2}} \right),
\end{equation}
where $V_{{{n}},{b}}^{(t)}$ is the bandwidth, { $(\sigma_{{{n}},{b}}^{(t)})^2=N_0 V_{{{n}},{b}}^{(t)}$ is the noise power with $N_0$ denoting the noise spectral density}, $P_{{{n}},{b}}^{(t)}$ is the transmit power, and $h_{{{n}},{b}}^{(t)}$ is the channel gain between the respective UE-BS pair. {\color{black} We have assumed multiple access techniques such as FDMA~\cite{7110419}, where users utilize non-overlapping bandwidth.}

{\color{black} \begin{remark}
Given the focus of this study, we have chosen to ignore both inter-cell and intra-cell interference similar to works on computation offloading in multi-BS, multi-DC mobile edge computing environments \cite{6678114}, which is also a common assumption in a part of FedL literature focused on MAC layer design~\cite{dinh2019federated}. Consequently, we leave further investigation of interference avoidance techniques and adaptation of technologies such as coordinated multi-point
(CoMP)~\cite{qamar2017comprehensive} and the 3-D beam pattern of a eNB/gNB to our framework as future work.
\end{remark}}

 Similarly, for the downlink, the data rate between BS $b$ and UE $n$ is given by
\begin{equation}
R_{{{b}},{n}}^{(t)}=
    V_{b}^{(t)}\log_2 \left(1+{P_{{b}}^{(t)} |h_{{{b}},{n}}^{(t)}|^2}\big/{({\sigma_{{b}}^{(t)}})^2} \right),   
\end{equation}
 where  $V_{b}^{(t)}$ denotes the downlink channel bandwidth, ${P_{{b}}}^{(t)}$ is the BS $b$ transmit power, $h_{{{b}},{n}}^{(t)}$ is the channel gain between the respective BS and UE, and $(\sigma_{{b}}^{(t)})^2=N_0 V_{{b}}^{(t)}$ is the noise power. Note that the BS-to-UEs communications are performed in \textit{broadcast} mode and thus the BS does not use different transmit powers to transmit signals to different UEs.
\subsubsection{BSs-DCs Communications}
BS-to-DC communications are mostly enabled via wireline communications. 
{\color{black}We consider the deployed data rate $R_{ {{b}}, {s}}^{(t)}$ for communication from BS ${b} \in \mathcal{B}$ to DC ${s} \in \mathcal{S}$, which is later optimized.  Furthermore, for the uplink communication channel between BSs-DCs we denote the maximum rate over the link between an arbitrary $(b,s)$ pair as $R_{b,s}^{\mathsf{max}}$. Also, the maximum capacity, in terms of the arriving data rate, of an arbitrary server is denoted by $R_{s}^ {\mathsf{max}}$. Therefore, we have the following constraints pertaining to the uplink rates at global aggregation round $t$:
\begin{align}
    &  \label{eqn: BS_capacity} 0 \leq R_{ {{b}}, {s}}^{(t)} \leq R_{b,s}^{\mathsf{max}}, ~\forall b \in \mathcal{B}, ~\forall s \in \mathcal{S}, ~\forall{t} \in [T], \hspace{10mm}\\
    &  \label{eqn: Server_capacity} \underset{b \in \mathcal{B}}{\sum} R_{ {{b}}, {s}}^{(t)} \leq R_{s}^{\mathsf{max}} ~\forall s \in \mathcal{S}, ~\forall{t} \in [T]. \hspace{28mm} 
\end{align}}

We further let $R_{ {{s}}, {b}}^{(t)}$  denote the data rate for communication from DC ${s} \in \mathcal{S}$ to BS ${b} \in \mathcal{B}$ in downlink. {\color{black}Since the downlink is only used for the broadcast of ML model parameter, the capacity constraints are not considered.}



\subsubsection{DCs-DCs Communications} Similar to BS-to-DC communications, DC-to-DC information exchange mostly occur over wirelines.  We let $R_{ {{s}}, {s'}}^{(t)}$ denote the data rate between DC $s$ and DC $s'$ at time instant $t$, which is time varying due to the congestion over the links. In general, $R_{ {{s}}, {s'}}^{(t)}\neq R_{ {{s'}}, {s}}^{(t)}$, $\forall s,s'\in\mathcal{S}$. {\color{black}Similarly, since DC-to-DC communications are only used for ML parameter aggregations and broadcasting, the capacity constraints are not considered.}

\subsubsection{Data Configuration at the UE Devices, BSs, and DCs} To capture the data offloading at each UE $n$,  we introduce a continues variable $\rho_{{n}, {b}}^{(t)}\in[0,1]$ to denote the fraction of dataset of that UE which is offloaded to BS $b$ at time $t$, where $ \sum_{{b} \in \mathcal{B}} \rho_{{n}, {b}}^{(t)} \leq 1$, $\forall n\in\mathcal{N}$ since some part of the dataset might be kept local. Given the initial dataset at UE $n$ (i.e., $\overline{\mathcal{D}}_n^{(t)}$ with size $\overline{{D}}_n^{(t)}$) and the  remaining dataset at UE $n$ (i.e., ${\mathcal{D}}_n^{(t)}\subseteq \overline{\mathcal{D}}_n^{(t)}$ with size ${{D}}_n^{(t)}\triangleq |{\mathcal{D}}_n^{(t)}|$), let ${\mathcal{D}}_{ {n}, {b}}^{(t)}\subseteq {\mathcal{D}}_n^{(t)}$ with size ${{D}}_{ {n}, {b}}^{(t)}\triangleq |{\mathcal{D}}_{ {n}, {b}}^{(t)}|$ denote the offloaded dataset from UE $n$ to BS $b$, where 
\vspace{-2mm}
\begin{align}
        {{D}}_{{n},{b}}^{(t)} =  {\overline{D}}_{{n}}^{(t)} \rho_{{n}, {b}}^{(t)}, ~~~
    {{D}}_{{n}}^{(t)} = \left(1- \sum_{{b} \in \mathcal{B}}\rho_{{n}, {b}}^{(t)}\right){\overline{D}}_{{n}}^{(t)}. \label{eq:dataRemainAtDevice}
\end{align}
\vspace{-3.2mm}

\noindent The dataset collected at each BS ${b}\in\mathcal{B}$ denoted by ${\mathcal{D}}_{{b}}^{(t)}$ is the union of the received data points from UEs with size
\begin{equation}
    {{D}}_{{b}}^{(t)} \triangleq |{\mathcal{D}}_{{b}}^{(t)}|= \sum_{n\in\mathcal{N}} \rho_{n,b}^{(t)} \overline{D}_n^{(t)}.
\end{equation}
The BS then disperses all of its collected data across a subset of DCs since it is assumed to have no data processing power. At time $t$, let $\rho_{{{b}},{s}}^{(t)}\in[0,1]$ denote the fraction of dataset of BS $b\in\mathcal{B}$ offloaded to  DC ${s} \in \mathcal{S}$, where $\underset{{s} \in \mathcal{S}}{\sum} \rho_{{{b}},{s}}^{(t)} = 1$. Let $\mathcal{D}_{b,s}^{(t)}\subseteq {\mathcal{D}}_{{{b}}}^{(t)}$ denote the portion of dataset transferred from BS $b$ to DC $s$ and $\mathcal{D}_s^{(t)}$ the dataset collected at DC $s$. We have
\begin{align}
   \hspace{-3mm} {{D}}_{{{b}},{s}}^{(t)} \triangleq |\mathcal{D}_{b,s}^{(t)}|= \rho_{{{b}},{s}}^{(t)} {{D}}_{{{b}}}^{(t)} 
   ,~
    {D}_s^{(t)} \triangleq |\mathcal{D}_s^{(t)}|= \sum_{b\in\mathcal{B}}\rho_{{{b}},{s}}^{(t)} {{D}}_{{{b}}}^{(t)}.\label{eq:DataCollecteatServer}
    \hspace{-3mm} 
\end{align}

Each DPU $i\in\mathcal{N}\cup\mathcal{S}$ (i.e., a UE or a DC) subsequently uses its local dataset ${\mathcal{D}}_{{i}}^{(t)}$ (i.e., ${\mathcal{D}}_{{n}}^{(t)}$ in case of UE $n$ and ${\mathcal{D}}_{{s}}^{(t)}$ in case of DC $s$) to conduct ML model training. 

{\color{black}
\begin{remark}
We note that some recent works have  introduced GANs in federated
learning for a variety of applications ~\cite{FedLGAN,FedLGAN2}. However, GAN architectures, given their long training duration, do not produce favorable results upon being implemented for time-varying data distributions. Consequently, since we consider dynamic/time-varying data distributions at the devices, it is not straightforward to replace the data offloading with GAN architectures at the DCs to reproduce the data at the UEs. Nevertheless, designing of GAN models that require short training time at the DCs and their effectiveness on reconstruction of online data at the devices are interesting future directions.
\end{remark}}
\subsubsection{Energy Consumption and Delay Modeling}
Energy consumption and delay are incurred by two mechanisms: (i) data/parameter transfer across the network, and (ii) data processing for ML model training across the network. We model these two mechanisms in the following.

\textbf{Data and Gradient Transfer Across the Network.} Let ${\beta}^{\mathsf{D}}$ and ${\beta}^{\mathsf{M}}$ denote the number of bits used to represent a data point and vector of scaled accumulated gradient, respectively. The delay of data and gradient transfer from UE $n$ to BS $b$ denoted by $\delta^{\mathsf{D},(t)}_{n,b}$ and $\delta^{\mathsf{M},(t)}_{n,b}$, respectively, are 
\begin{align}
    &\delta^{\mathsf{D},(t)}_{n,b}= \beta^{\mathsf{D}}\overline{D}_{n}^{(t)}\rho_{n,b}^{(t)}/R_{n,b}^{(t)},  ~~\delta^{\mathsf{M},(t)}_{n,b}= \beta^{\mathsf{M}}/R_{n,b}^{(t)}.
\end{align}
Also, the energy consumption of data and model transfer from UE $n$ to BS $b$ denoted by $E^{\mathsf{D}}_{n,b}$ and $E^{\mathsf{M}}_{n,b}$, respectively, are
\begin{align}
    E_{{n,b}}^{\mathsf{D},(t)} =  \delta^{\mathsf{D},(t)}_{n,b} P_{{{n}},{b}}^{(t)},~~
    E_{{n,b}}^{\mathsf{M},(t)} =  \delta^{\mathsf{M},(t)}_{n,b} P_{{{n}},{b}}^{(t)}\label{energy_data}.
\end{align}


Similarly, the delay of data and parameter transfer from BS $b$ to DC $s$ denoted by $\delta^{\mathsf{D},(t)}_{b,s}$ and $\delta^{\mathsf{M},(t)}_{b,s}$, respectively, are 
\begin{align}
    &\delta^{\mathsf{D},(t)}_{b,s}= \beta^{\mathsf{D}}D_{b}^{(t)}\rho_{b,s}^{(t)}/R_{b,s}^{(t)}, ~~
    \delta^{\mathsf{M},(t)}_{b,s}= \beta^{\mathsf{M}}/R_{b,s}^{(t)}.
\end{align}
Thus, assuming that the data transfer from the BSs to the DCs starts after reception of all the datapoints at all the BSs,\footnote{This assumption is imposed to make the formulation more tractable.} the delay of data collection at each DC $s\in\mathcal{S}$ is given by
\begin{equation}\label{eq:delayAtDCdataCollect}
     \delta^{\mathsf{D}, (t)}_s= \max_{b\in\mathcal{B}}\{\delta_{b,s}^{\mathsf{D},(t)}\}+\max_{n\in\mathcal{N},b\in\mathcal{B}}\{\delta_{n,b}^{\mathsf{D},(t)} \}.
\end{equation}
The energy consumption of data and parameter transfer from BS $b$ to DC $s$ denoted by $E^{\mathsf{D},(t)}_{b,s}$ and $E^{\mathsf{M},(t)}_{b,s}$, respectively, are
\begin{align}
    &E^{\mathsf{D},(t)}_{b,s}= \delta^{\mathsf{D},(t)}_{b,s}P_{b,s}^{(t)},~~
    E^{\mathsf{M},(t)}_{b,s}= \delta^{\mathsf{M},(t)}_{b,s}P_{b,s}^{(t)},\label{energy_dataBS} 
\end{align}
where $P_{b,s}^{(t)}$ is the transmit power consumption (during one second period) over the outgoing
link from BS $b$ to DC $s$.

Communications between two DCs are only conducted for parameter transfer and aggregation in our system. The delay and energy associated with parameter transfer between two DCs $s,s'\in\mathcal{S}$ denoted by $\delta^{\mathsf{M},(t)}_{s,s'}$ and $E^{\mathsf{M},(t)}_{s,s'}$, respectively, are
\vspace{-1.5mm}
\begin{align}
    &\delta^{\mathsf{M},(t)}_{s,s'}= \beta^{\mathsf{M}}/R_{s,s'}^{(t)},~~E^{\mathsf{M},(t)}_{s,s'}= \delta^{\mathsf{M},(t)}_{s,s'} P_{s,s'}^{(t)},
\end{align}
    \vspace{-5mm}
    
\noindent where $P_{s,s'}^{(t)}$ is the transmit power consumption over the outgoing
wirelines from DC $s$ to DC $s'$ ($\delta_{s,{s'}}^{\mathsf{M},(t)}=0$ if $s=s'$.).

\begin{table*}[t]
\vspace{-6mm}
\begin{minipage}{0.99\textwidth}
{\footnotesize
\begin{align}
    & \hspace{-6mm} \frac{1}{T} \sum_{t = 0}^{T-1} \mathbb{E} \bigg[  \big\|\nabla {{F}}^{(t)}(\mathbf{x}^{(t)})\big\|^2 \bigg] \leq  \underbrace{\frac{4(F^{(0)}(\mathbf{x}^{0})- F^{\star})}{\vartheta\eta T}}_{(a)} + \frac{4}{\vartheta\eta T}\underbrace{ \sum_{t=0}^{T-1}  \sum_{i \in \mathcal{N} \cup \mathcal{S}} \tau^{(t)} \Delta_{i}^{(t)}}_{(b)}+ 16 \eta L \vartheta \Bigg[\underbrace{\frac{1}{T} \sum_{t = 0}^{T-1}\underset{i \in \mathcal{N} \cup \mathcal{S}}{\sum}  \frac{(p_{i}^{(t)})^2 (1-m_i^{(t)})\Theta_i (\Tilde{\sigma}_{i}^{(t)})^2}{m_i^{(t)} {{D}}_{i}^{(t)}}  \frac{\|\mathbf{a}_{i}^{(t)}\|_{2}^2}{\|\mathbf{a}_{i}^{(t)}\|_{1}^2}}_{(c)}  \Bigg]  \nonumber \\[-.34em]
    &\hspace{-6mm} +\hspace{-.5mm}  12 \eta^2 L^2 \zeta_2\hspace{-.5mm} \Bigg(\hspace{-.5mm}\underbrace{\underset{t \in [T]}{\max} \ \underset{i \in \mathcal{N} \cup \mathcal{S}}{\max} \frac{ (\gamma_i^{(t)})^2 (\|\mathbf{a}_{i}^{(t)}\|_{1} - [a_{i,-1}^{(t)}])}{\|\mathbf{a}_{i}^{(t)}\|_{1}}}_{(d)} \hspace{-.5mm}\Bigg)
    \hspace{-.5mm}+\hspace{-.5mm} {12 \eta^2 L^2} \hspace{-.5mm}\Bigg[\hspace{-.5mm}\underbrace{\frac{1}{T} \sum_{t = 0}^{T-1}\hspace{-.5mm} \underset{i \in \mathcal{N} \cup \mathcal{S}}{\sum}\hspace{-.5mm} \frac{ (1-m_i^{(t)})({{D}}_{i}^{(t)} -1) \Theta_i (\Tilde{\sigma}_{i}^{(t)})^2 p_i^{(t)} \gamma_i^{(t)}} {m_i^{(t)}  \|\mathbf{a}_{i}^{(t)}\|_{1} ({{D}}_{i}^{(t)})^2} (\|\mathbf{a}_{i}^{(t)}\|_{2}^2 - [a_{i,-1}^{(t)}]^2)}_{(e)}\hspace{-.5mm} \Bigg] \hspace{-6mm}  \label{eq:GenConvBound}
\end{align}
}
\end{minipage}
\hrule
\vspace{-5.5mm}
\end{table*}
\textbf{Data Processing Across the Network.} 
For UE $n$, let $f_{{n}}^{(t)} \in [f_{{n}}^{\mathsf{min}},f_{{n}}^{\mathsf{max}}]$ denote the CPU frequency used to process the datapoints at time $t$ and
    $c_{{n}}$ denote the number of required CPU
cycles to process one data point. Then, the delay and energy consumption of data processing at UE $n$ are~\cite{dinh2019federated}
\vspace{-1mm}
    \begin{align}
       & \delta_{{n}}^{\mathsf{P},(t)} = c_{{n}} {\gamma_{{n}}^{(t)} m_{{n}}^{(t)} {{D}}_{{n}}^{( t)}}/{f_{{n}}^{(t)}},\label{delay_proc}\\
        & E_{{n}}^{\mathsf{P},(t)} =  c_{{n}} \gamma_{{n}}^{(t)} m_{{n}}^{(t)} {\mathcal{D}}_{{n}}^{(t)} (f_{{n}}^{(t)})^{2}{\alpha_{{n}}}/{2}.\label{energy_proc}
    \end{align}
    \vspace{-4.5mm}
    
   \noindent  In \eqref{delay_proc} and \eqref{energy_proc}, $\gamma_{{n}}^{(t)}$ is the number of SGD iterations at the device, $m_{{n}}^{(t)}\in[0,1]$ is the mini-batch ratio (i.e., the fraction of ${{D}}_{{n}}^{(t)}$ data points processed at the device in each SGD iterations), and ${\alpha_{{n}}}/{2}$ is the device effective chip-set capacitance.


    
    Also, for each DC $s$, we adopt the DC data processing model~\cite{7775032,8847383}, where each DC $s$ consists of $M_s$ identical server machines. Each machine is assumed to operate  at the speed of processing $z_{{s}}^{(t)}$ data points per second. Let $C_{{s}}$ denote the processing capacity of the machines at DC ${s}$ (i.e., $z_{{s}}^{(t)}\leq C_{{s}}$). The delay and energy of data processing at DC $s$ are
    \vspace{-2mm}
            \begin{align}
         & \hspace{-4mm}\delta_{{s}}^{\mathsf{P},(t)} = \frac{\gamma_{s}^{(t)}m_{{s}}^{(t)} {{D}}_{{s}}^{(t)}}{M_s z_{{s}}^{(t)}} \label{eqn: Data Center speed-delay relation-1},
            \end{align}
            \vspace{-4mm}
             \begin{align}
         & \hspace{-4mm}E_{{s}}^{\mathsf{P},(t)} =\delta_{{s}}^{\mathsf{P},(t)}\times\left[ \varrho\left(\frac{ z_{{s}}^{(t)}}{C_{{s}}}\right)^2\overline{P}_{{s}}M_s + (1-\varrho)\overline{P}_{{s}}M_s \right]\label{energy_server_proc}\hspace{-1mm},\hspace{-4mm}
    \end{align}
    \vspace{-3mm}
    
    \noindent where $\gamma_{s}^{(t)}$ is the number of SGD iterations, $m_s(t)\in[0,1]$ is the mini-batch ratio, $(1-\varrho)$ is the fraction of power consumed in idle state (typically around $0.4$), and $\overline{P}_s$ is the peak energy consumption of a server belonging to DC $s$.

\textbf{Modeling the Floating Aggregation.}
To capture the model aggregation at the floating aggregation DC, we let the binary indicator $I_{{s}}^{(t)} \in \{0,1\}$ identify whether  DC $s$ aggregates the ML model parameters at $t$ ($I_{{s}}^{(t)} = 1$) or not ($I_{{s}}^{(t)} = 0$). We assume that upon completion of the local model training, each UE $n$ offloads its scaled accumulated gradient to a BS, who then aggregates/sums the received gradients and relays the result to the aggregation DC. For the uplink, let indicator $I_{{n,b}}^{(t)}$ take the value of $1$ if UE $n$ offloads its gradient to BS $b$ after the model training, and $0$ otherwise. Also, for the downlink,  let indicator $I_{b,n}^{(t)}$ take the value of $1$ if UE $n$ receives the aggregated model parameters from BS $b$ and $0$ otherwise.\footnote{Note that BSs broadcast the received aggregated global model from the floating aggregator DC. Thus, each UE may receive the aggregated global model from multiple BSs. Thus, introducing $I_{b,n}^{(t)}$ will ensure the association of a UE to at least one BS for global parameter reception.} 

We then express the delay and energy consumption of transferring model parameters from UE $n$ to the aggregation server, which are denoted by ${\delta}_{n}^{\mathsf{A},(t)}$ and ${E}_{n}^{\mathsf{A},(t)}$, respectively, as
\begin{align}
    &\hspace{-4mm}{\delta}_{n}^{\mathsf{A},(t)} = \sum_{b\in\mathcal{B}} \delta_{n,b}^{\mathsf{M},(t)}I_{n,b}^{(t)} + 
    \underset{{b} \in \mathcal{B}}{\sum}
    \underset{{s} \in \mathcal{S}}{\sum} \delta_{b,{s}}^{\mathsf{M},(t)} I_{{s}}^{(t)}I_{n,b}^{(t)},\hspace{-4mm}\\
    &\hspace{-4mm}{E}_{n}^{\mathsf{A},(t)} = \sum_{b\in\mathcal{B}} E_{n,b}^{\mathsf{M},(t)} I_{n,b}^{(t)} + 
    \underset{{b} \in \mathcal{B}}{\sum}
    \underset{{s} \in \mathcal{S}}{\sum} E_{b,{s}}^{\mathsf{M},(t)} I_{{s}}^{(t)}I_{n,b}^{(t)} .\hspace{-4mm}
\end{align}
Also, the delay and energy consumption of transferring model parameters from DC $s$ to the aggregator server are given by
\begin{align}
   & {\delta}_{s}^{\mathsf{A},(t)} =
    \underset{{s'} \in \mathcal{S}}{\sum} \delta_{s,{s'}}^{\mathsf{M},(t)}I_{{s'}}^{(t)},~~~{E}_{s}^{\mathsf{A},(t)} =
    \underset{{s'} \in \mathcal{S}}{\sum} E^{\mathsf{M},(t)}_{s,{s'}} I_{{s'}}^{(t)}.
\end{align}

\begin{table*}[t]
\vspace{-7mm}
\begin{minipage}{0.99\textwidth}
{\footnotesize
 \begin{align}
     \frac{1}{T} \sum_{t = 0}^{T - 1} \mathbb{E} \bigg[  \|\nabla {{F}}^{(t)}(\mathbf{x}^{(t)})\|^2 \bigg]   \leq \frac{4 \sqrt{{\Bar{\gamma}}}}{ \vartheta \sqrt{dT}} \Bigg[ {{{F}}^{(0)}(\mathbf{x}^{(0)}) - {{F}}^{*}} \Bigg] + \frac{4 \tilde{\tau} \sqrt{{\Bar{\gamma}}}}{ \vartheta \sqrt{dT}} + 16 \frac{L \vartheta \Theta_{\mathsf{max}} \Tilde{\sigma}_{\mathsf{max}}^2 }{ m_{\mathsf{min}}}  \sqrt{\frac{d}{{\Bar{\gamma} T}}} + {\frac{12  L^2d \Theta_{\mathsf{max}} \Tilde{\sigma}_{\mathsf{max}}^2 \gamma_{\mathsf{max}}}{\Bar{\gamma} m_{\mathsf{min}} {T}}} + {\frac{12  L^2 \zeta_2 d \gamma_{\mathsf{max}}^2}{{\Bar{\gamma}T}}}
    \label{eq:convSpecial}
 \end{align}
 }
 \end{minipage}
 \hrule
 \vspace{-6.5mm}
 \end{table*}
Since transferring of gradient across the network occurs in parallel, the delay and energy of gradient aggregation at the aggregator DC (i.e., obtaining $\sum_{i \in \mathcal{N} \cup \mathcal{S}} {  {{D}}_{{i}}^{(t)} } \mathbf{d}_i^{(t)}$ and conducting~\eqref{eq:11M}) can be written as
\vspace{-1.2mm}
\begin{align}
     & {\delta}^{\mathsf{A},(t)} = \max\Big\{ \underbrace{\max_{n\in\mathcal{N}}\{ {\delta}_{n}^{\mathsf{A},(t)} +{\delta}_{n}^{\mathsf{P},(t)} \}}_{(a)}, \nonumber \\
&\hspace{22mm}    \underbrace{\max_{s\in\mathcal{S}}\{{\delta}_{s}^{\mathsf{D},(t)}+ {\delta}_{s}^{\mathsf{P},(t)}+ {\delta}_{s}^{\mathsf{A},(t)} \}}_{(b)} \Big\}, \label{delay_agg}
\end{align}
\vspace{-3mm}
\begin{align}
    &{E}^{\mathsf{A},(t)} =
    \sum_{n\in\mathcal{N}} {E}_{n}^{\mathsf{A},(t)}+ \sum_{s\in \mathcal{S}} {E}_{s}^{\mathsf{A},(t)}.\label{energy_aggregation}
\end{align}
\vspace{-3.4mm}

\noindent In~\eqref{delay_agg}, term $(a)$ corresponds to the time that it takes for the reception of all the gradients of the UEs, which consists of local processing time and uploading the resulting gradients to the aggregating DC from the UEs. Also, term $(b)$ captures the time required for the reception of the gradients of the DCs at the aggregator DC, which consists of data reception time at the DCs, local processing time at the DCs, and uploading  the resulting gradient to the aggregation DC. Note that data offloading to the BSs from the UEs and local processing at the UEs can be conducted in parallel, which is the reason behind the existence of the outer $\max$ function in~\eqref{delay_agg} (e.g., UEs can process their local data while their offloaded data is transferred across the network hierarchy to reach the designated DCs).

After aggregation DC aggregates the models via~\eqref{eq:11M}, it propagates the resulting model to the BSs and the DCs.  The delay and energy consumption of 
global parameter reception at BS $b$ from the floating aggregator DC are given by
\begin{equation}
    \hspace{-4mm}{\delta}_{b}^{\mathsf{R},(t)} =
    \underset{{s} \in \mathcal{S}}{\sum} \delta_{s,{b}}^{\mathsf{M},(t)}I_{{s}}^{(t)},~~\hspace{-.4mm}{E}_{b}^{\mathsf{R},(t)} =
   \underset{{s} \in \mathcal{S}}{\sum} E_{s,{b}}^{\mathsf{M},(t)} I_{{s}}^{(t)}, \hspace{-4mm}
   \vspace{-.5mm}
\end{equation}
respectively.  After arriving at BSs, the global model is broadcast by each BS with the delay and energy consumption of
\begin{align}
    &\hspace{-4mm}{\delta}_{b}^{\mathsf{B},(t)} = \max_{{n\in\mathcal{N}} }\Big\{
    \delta_{b,n}^{\mathsf{M},(t)}I_{b,n}^{(t)}\Big\},~~\hspace{-.1mm}{E}_{b}^{\mathsf{B},(t)} =
{\delta}_{b}^{\mathsf{B},(t)} P^{(t)}_b,
\end{align}
respectively.   Also, the delay and energy consumption of 
parameter reception at DC $s$ from the aggregation DC are
\begin{align}
   & {\delta}_{s}^{\mathsf{R},(t)} =
    \underset{{s'} \in \mathcal{S}}{\sum} \delta_{s',{s}}^{\mathsf{M},(t)}I_{{s'}}^{(t)},~~~{E}_{s}^{\mathsf{R},(t)} =
    \underset{{s'} \in \mathcal{S}}{\sum} E^{\mathsf{M},(t)}_{s',{s}} I_{{s'}}^{(t)},
\end{align}
respectively. The overall delay and energy consumption of parameter reception from the aggregation DC are
\vspace{-1mm}
\begin{align}
     & {\delta}^{\mathsf{R},(t)} = \max\Big\{ \max_{b\in\mathcal{B}}\{ {\delta}_{b}^{\mathsf{R},(t)}+{\delta}_{b}^{\mathsf{B},(t)}  \}
,     \max_{s\in\mathcal{S}}\{  {\delta}_{s}^{\mathsf{B},(t)} \} \Big\},\label{delay_Reception}
\end{align}
   \vspace{-3.5mm}
\begin{align}
    &{E}^{\mathsf{R},(t)} =
    \sum_{b\in\mathcal{B}}\big[ {E}_{b}^{\mathsf{R},(t)}+{E}_{b}^{\mathsf{B},(t)}\big]+ \sum_{s\in \mathcal{S}} {E}_{s}^{\mathsf{R},(t)},\label{energy_Reception}
\end{align}
\vspace{-4.5mm}

\noindent respectively.

\vspace{-3.5mm}
\section{ML Convergence Analysis of {\tt CE-FL}}\label{sec:Conv}
\noindent We next investigate the ML model performance obtained via {\tt CE-FL}. In our analysis, we make the following assumptions:
\begin{assumption}[Smoothness] \label{assumption: Smooothness} For each DPU $i \in \mathcal{N} \cup \mathcal{S} $, the local loss function is $L$-smooth, i.e., $\| \nabla F_{i}^{(t)}(\mathbf{x}) - \nabla F_{i}^{(t)}(\mathbf{y}) \| \leq L \|\mathbf{x}-\mathbf{y}\|$, $\forall \mathbf{x},\mathbf{y}\in\mathbb{R}^p$, where $\|.\|$ denotes the 2-norm.
\end{assumption}
\begin{assumption}[Local Data Variability]
\label{assumption: Local Data Variability}
The local data variability
at each DPU $i \in \mathcal{N} \cup \mathcal{S} $ is measured via a finite constant $\Theta_{i} \geq 0$ that satisfies the following inequality $\forall \mathbf{x}\in\mathbb{R}^p$:
\begin{align}
   \hspace{-2.1mm} \|\nabla {f}(\mathbf{x} ; \xi) \hspace{-.2mm}- \hspace{-.2mm} \nabla {f}(\mathbf{x} ; \xi')\| \hspace{-.2mm}\leq\hspace{-.2mm} \Theta_{i} \|\xi - \xi'\|, \hspace{-.1mm} \forall \xi,\hspace{-.2mm}\xi' \hspace{-1mm}\in \mathcal{D}_{i}^{(t)},\forall t.\hspace{-2mm}
\end{align}
\end{assumption}

\begin{assumption}[Bounded Dissimilarity of Local Loss Functions] \label{assumption: bounded dissimilarity}
For a set of arbitrarily normalized coefficients $\{p_i\}$, where $i \in \mathcal{N} \cup \mathcal{S}$, $p_i \geq 0$ and $\underset{i \in \mathcal{N} \cup \mathcal{S}}{\sum}\  p_i = 1$, there exist two finite constants $\zeta_1 \geq 1$ and $\zeta_2 \geq 0$ such that the following inequality holds for any choice of model parameter $\mathbf{x}\in\mathbb{R}^p$:
\begin{equation}
  \hspace{-4mm}  \underset{i \in \mathcal{N} \cup \mathcal{S}}{\sum}   \hspace{-1mm} p_i \big\|\nabla F_{i}^{(t)}(\mathbf{x})\big\|^2 \hspace{-1mm}\leq \zeta_1 \Big\| \underset{i \in \mathcal{N} \cup \mathcal{S}}{\sum} p_i\nabla F_{i}^{(t)}(\mathbf{x})\Big\|^2 \hspace{-1mm}+ \zeta_2, \forall t.  \hspace{-4mm}
\end{equation}
\end{assumption}
It can be seen that $\zeta_1$ and $\zeta_2$ in the above definition measure the level of data heterogeneity across the clients. If the data is completely homogeneous (i.e., i.i.d.) across the DPUs we will have  $\zeta_1=1$ and $\zeta_2=0$ and these parameters will increase as the data across the DPUs become more heterogeneous. {\color{black} In Appendix~\ref{App:param_est}, we introduce a methodology to estimate the parameters introduced in the above three assumptions.}

We next define \textit{model/concept drift}, which was first characterized by us in~\cite{psl} for dynamic ML. This quantity measures the shift in the local loss imposed by the variation of the local data. We slightly modify the definition in~\cite{psl} and incorporate the duration of global aggregation into the definition:
\vspace{-1mm}
\begin{definition}[Model/Concept Drift] \label{defn: model drift}
For DPU $i \in \mathcal{N} \cup \mathcal{S}$, the online model/concept drift for two consecutive rounds of global aggregation $t$ and $t+1$ is measured by $\Delta_i^{(t)} \in \mathbb{R}$, which captures the maximum variation of the fractional loss function for any $\mathbf{x}\in\mathbb{R}^p$ per unit time according to
\begin{align}
    \frac{{{D}}_i^{(t+1)}}{{D}^{(t+1)}}F_{i}^{(t+1)}(\mathbf{x}) - \frac{{{D}}_i^{(t)}}{{{D}}^{(t)}}F_{i}^{(t)}(\mathbf{x}) \leq \tau^{(t)} \Delta_i^{(t)},
\end{align}
where $\tau^{(t)}$ is the duration between global aggregation $t$ and $t+1$, during which model training is not conducted.
\end{definition}

Letting data variance $\sigma_i^{(t)}$ denote the sampled variance of data at DPU $i$ at time $t$, we next obtain the general convergence behavior of {\tt CE-FL} (proof provided in Appendix~\ref{Proof of Theorem 1}):
\vspace{-1mm}
\begin{theorem}[General Convergence of {\tt CE-FL}] \label{Thm: ML convergence main} Assume that $\eta$ satisfies $4 \eta^2 L^2 \underset{t \in [T]}{\max} \ \underset{i \in \mathcal{N} \cup \mathcal{S}}{\max} \frac{ \gamma_i^2(t) (\|\mathbf{a}_{i}^{(t)}\|_{1} - [a_{i,-1}^{(t)}])}{\|\mathbf{a}_{i}^{(t)}\|_{1}} \leq \frac{1}{2 \zeta_1^2 + 1}$. Under {\tt CE-FL}, the cumulative average of the global loss  satisfies~\eqref{eq:GenConvBound}, where $F^* \triangleq \underset{t \in [T]}{\min}~\underset{\mathbf{x} \in \mathbb{R}^p }{\min}{F}^{(t)}(\mathbf{x})$.
\end{theorem}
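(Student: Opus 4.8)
The plan is to follow the standard descent-lemma template for nonconvex SGD-type analyses, adapted to the cooperative/dynamic setting. First I would apply $L$-smoothness (Assumption~\ref{assumption: Smooothness}) to the global loss along the update~\eqref{eq:11M}, writing
\[
F^{(t)}(\mathbf{x}^{(t+1)}) \le F^{(t)}(\mathbf{x}^{(t)}) - \vartheta\eta \big\langle \nabla F^{(t)}(\mathbf{x}^{(t)}), \tfrac{1}{D^{(t)}}\sum_i D_i^{(t)}\mathbf{d}_i^{(t)}\big\rangle + \tfrac{L}{2}\vartheta^2\eta^2 \big\| \tfrac{1}{D^{(t)}}\sum_i D_i^{(t)}\mathbf{d}_i^{(t)}\big\|^2 ,
\]
then take conditional expectation over the mini-batch sampling. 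The inner-product term splits into the ``true gradient'' part $-\vartheta\eta\|\nabla F^{(t)}(\mathbf{x}^{(t)})\|^2$ plus an error term that measures the gap between $\mathbb{E}[\mathbf{d}_i^{(t)}]$ and $\nabla F^{(t)}(\mathbf{x}^{(t)})$; this gap arises because each $\mathbf{d}_i^{(t)}$ is an $\mathbf{a}_i^{(t)}$-weighted average of gradients evaluated at the \emph{drifted} local iterates $\mathbf{x}_i^{(t,\ell)}$ rather than at $\mathbf{x}^{(t)}$. I would control it via a ``client-drift'' bound: using~\eqref{localEvolution}, $\|\mathbf{x}_i^{(t,\ell)}-\mathbf{x}^{(t)}\| \le \eta \sum_{\ell'}a_{i,\ell'}^{(t)}\|\widetilde\nabla F_i^{(t)}(\mathbf{x}_i^{(t,\ell')})\|$, combined with smoothness and a recursion over $\ell$, yields a bound on $\sum_\ell \|\mathbf{x}_i^{(t,\ell)}-\mathbf{x}^{(t)}\|^2$ in terms of $\eta^2$, the $\|\mathbf{a}_i^{(t)}\|_1,\|\mathbf{a}_i^{(t)}\|_2$ norms, $(\gamma_i^{(t)})^2$, the local gradient variance (bounded using Assumption~\ref{assumption: Local Data Variability} with the data variance $\sigma_i^{(t)}$, giving the $\tilde\sigma_i^{(t)}$ terms), and $\|\nabla F_i^{(t)}(\mathbf{x}^{(t)})\|^2$. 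The last quantity I would then fold into $\|\nabla F^{(t)}(\mathbf{x}^{(t)})\|^2$ using the bounded-dissimilarity Assumption~\ref{assumption: bounded dissimilarity} (this is where $\zeta_1,\zeta_2$ enter, and where the stepsize condition $4\eta^2 L^2 \max \cdots \le 1/(2\zeta_1^2+1)$ is needed so the $\|\nabla F^{(t)}\|^2$ coefficient on the right stays strictly below its coefficient on the left and can be absorbed).

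Second, for the quadratic term $\mathbb{E}\|\frac{1}{D^{(t)}}\sum_i D_i^{(t)}\mathbf{d}_i^{(t)}\|^2$ I would split $\mathbf{d}_i^{(t)}$ into its mean and a zero-mean noise part. The noise variance of $\mathbf{d}_i^{(t)}$ is $O\big(\frac{(1-m_i^{(t)})(\tilde\sigma_i^{(t)})^2\Theta_i}{m_i^{(t)} D_i^{(t)}}\cdot\frac{\|\mathbf{a}_i^{(t)}\|_2^2}{\|\mathbf{a}_i^{(t)}\|_1^2}\big)$ (finite-population sampling-without-replacement correction factor $(1-m_i^{(t)})$, mini-batch size $m_i^{(t)}D_i^{(t)}$, and the $\mathbf{a}$-weighting producing the $\ell_2/\ell_1$ ratio) --- this is the origin of term $(c)$ in~\eqref{eq:GenConvBound}. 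The mean-squared part is again bounded by $\|\nabla F^{(t)}(\mathbf{x}^{(t)})\|^2$ plus client-drift, contributing to terms $(d)$ and $(e)$; the special structure $[a_{i,-1}^{(t)}]$ (the $\ell=\gamma_i^{(t)}-1$ term, equal to $1$) appears because the last local step introduces no additional drift, so it is subtracted off. The coefficient $p_i^{(t)} = D_i^{(t)}/D^{(t)}$ is the natural aggregation weight.

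Third, I would handle the dynamic/online aspect: the descent inequality is for $F^{(t)}$, but to telescope across $t$ I need to relate $F^{(t+1)}(\mathbf{x}^{(t+1)})$ to $F^{(t)}(\mathbf{x}^{(t+1)})$. This is exactly what Definition~\ref{defn: model drift} provides --- summing the per-DPU drift bounds weighted by $D_i^{(t)}/D^{(t)}$ gives $F^{(t+1)}(\mathbf{x}) - F^{(t)}(\mathbf{x}) \le \sum_i \tau^{(t)}\Delta_i^{(t)}$, which inserts term $(b)$. Then telescoping $\sum_{t=0}^{T-1}[F^{(t+1)}(\mathbf{x}^{(t+1)}) - F^{(t)}(\mathbf{x}^{(t)})]$, using $F^{(t)}(\mathbf{x}^{(t)}) \ge F^\star$ for the lower bound on the final term and $F^{(0)}(\mathbf{x}^{(0)})$ for the initial, dividing by $\vartheta\eta T$, and collecting all the accumulated error terms yields~\eqref{eq:GenConvBound} with the four leading constants $4,4,16\eta L\vartheta, 12\eta^2 L^2$ as shown.

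\textbf{Main obstacle.} The delicate part is the bookkeeping in the client-drift recursion: because \texttt{CE-FL} allows \emph{heterogeneous} $\gamma_i^{(t)}$ and $m_i^{(t)}$ and uses the $\mathbf{a}_i^{(t)}$-weighted normalized gradient~\eqref{eq:normalGrad}, the usual uniform-weight telescoping must be replaced by careful tracking of $\|\mathbf{a}_i^{(t)}\|_1$, $\|\mathbf{a}_i^{(t)}\|_2$, and the separated-out last coordinate $[a_{i,-1}^{(t)}]$; getting the recursion to close (so that the $\|\nabla F^{(t)}\|^2$ terms can be absorbed) is precisely what forces the stepsize condition in the hypothesis, and matching the exact constants requires choosing the split between the linear and quadratic $\|\nabla F^{(t)}\|^2$ contributions carefully. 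A secondary technical point is ensuring the finite-population variance bound for mini-batch sampling without replacement correctly produces the $(1-m_i^{(t)})$ and $(D_i^{(t)}-1)$ factors appearing in $(c)$ and $(e)$, which relies on Assumption~\ref{assumption: Local Data Variability} to convert the data variance $\sigma_i^{(t)}$ into a gradient variance bound via $\Theta_i$.
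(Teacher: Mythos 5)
Your proposal follows essentially the same route as the paper's proof: the $L$-smoothness descent lemma, unbiasedness of the normalized accumulated gradient, the finite-population (sampling-without-replacement) variance bound converted to gradient variance via $\Theta_i$, a client-drift recursion over the local iterates tracked through $\|\mathbf{a}_i^{(t)}\|_1$, $\|\mathbf{a}_i^{(t)}\|_2$ and the separated $[a_{i,-1}^{(t)}]$, absorption of $\|\nabla F^{(t)}\|^2$ via Assumption~\ref{assumption: bounded dissimilarity} under the stated stepsize condition, and the model-drift inequality to telescope the time-varying losses down to $F^*$. The only cosmetic divergence is that the paper cancels the $\|\sum_i p_i^{(t)}\mathbf{h}_i^{(t)}\|^2$ term outright using an auxiliary condition $\vartheta\eta L\le 1/2$ (rather than re-bounding it by $\|\nabla F^{(t)}\|^2$ plus drift as you sketch), but this does not change the argument in any essential way.
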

The bound in~\eqref{eq:GenConvBound} demonstrates convergence characteristics of~{\tt CE-FL}. It reveals the relationship between the loss gap imposed based on the choice of initial model parameter (term $(a)$). It reveals that as data variability $\{\Theta_i\}$ and data variance $\{\sigma_i\}$ (in terms $(c)$ and $(e)$) increase across the devices, the system experiences a worse convergence performance. This is due to the fact that, given a fixed mini-batch size across the devices, larger values of $\{\Theta_i\}$ and $\{\sigma_i\}$ would imply a higher noise in estimation of true gradient via SGD.
{\color{black} Finally, the effect of increasing the local SGD iterations can be particularly seen in term $(d)$, where as ${\gamma_i^{(t)}}$ increases the convergence bound increases proportionally to the data heterogeneity across the devices ($\zeta_2$ in the coefficient of term $(d)$). Note that $\zeta_1$, $\zeta_2$ quantity the extent of spatial data heterogeneity in terms of ML model loss across the network. Thus, as the data heterogeneity increases, the ML convergence bound favors lesser locally conducted SGD iterations to avoid local model bias. Furthermore, the bound imposed on the step size $\eta$ in the statement of Theorem~\ref{Thm: ML convergence main} implies that as the data heterogeneity measure $\zeta_1$ or number of local SGD iterations $\{\gamma_i\}$ increase, the smaller values of step size are tolerable to avoid local model bias.}


Given the general convergence behavior obtained in Theorem~\ref{Thm: ML convergence main}, we next obtain a specific choice of step size, conditions on the noise of SGD, and a condition on the model drift, under which {\tt CE-FL} converges (proof provided in Appendix~\ref{proof corollary 1}):
\vspace{-2mm}
\begin{corollary}[Convergence of {\tt CE-FL}] \label{corollary: step size main}
 Consider the conditions stated in Theorem \ref{Thm: ML convergence main}. Further assume that $\eta$ is small enough such that $\eta =\sqrt{{d}/({{\Bar{\gamma}T}}})$, where $d = |\mathcal{N} \cup \mathcal{S}|$ and {\small $\Bar{\gamma}= \displaystyle \sum_{t = 1}^{t= T} \sum_{i\in \mathcal{N} \cup \mathcal{S}} \gamma_i^{(t)}$}. If (i) the local data variability satisfy  $\Theta_i \leq \Theta_{\mathsf{max}}$, $\forall i$, for some positive $\Theta_{\mathsf{max}}$,  (ii) the variances of datasets satisfy $\Tilde{\sigma}_{i}^{(t)} \leq \Tilde{\sigma}_{\mathsf{max}}$, $\forall i$, for some positive $\Tilde{\sigma}_{\mathsf{max}}$, (iii) the mini-batch ratios satisfy ${m_{i}^{(t)}} \geq m_{\mathsf{min}}$, $\forall i$, for some positive $m_{\mathsf{min}}$, (iv) the number of SGD iterations satisfy $\gamma_i^{(t)} \leq \gamma_{\mathsf{max}},~\forall i$, for some positive $\gamma_{\mathsf{max}}$,
 and (v) the duration of global aggregations is bounded as {\small $ \displaystyle \tau^{(t)} \leq \max \big\{ \frac{\tilde{\tau}}{T\sum_{i \in \mathcal{N} \cup \mathcal{S}} \Delta_i^{(t)}},0 \big\}$}, for some positive $\tilde{\tau}$, then the cumulative average of the global loss satisfies~\eqref{eq:convSpecial}, implying {\small $ \displaystyle \frac{1}{T} \sum_{t = 1}^{T} \mathbb{E} \Big[ \big\|\nabla {{F}}^{(t)}(\mathbf{x}^{(t)})\big\|^2\Big]= \mathcal{O}(1/\sqrt{T})$}.
\end{corollary}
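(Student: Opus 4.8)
The plan is to obtain Corollary~\ref{corollary: step size main} as a direct specialization of Theorem~\ref{Thm: ML convergence main}: substitute the prescribed $\eta=\sqrt{d/(\bar{\gamma}T)}$ into the general five-term bound~\eqref{eq:GenConvBound} and coarsely upper bound each of $(a)$--$(e)$ using hypotheses (i)--(v). The first thing I would check is that this $\eta$ is admissible for Theorem~\ref{Thm: ML convergence main}, i.e.\ that $4\eta^2 L^2 \max_{t}\max_{i} \frac{(\gamma_i^{(t)})^2(\|\mathbf{a}_{i}^{(t)}\|_{1}-[a_{i,-1}^{(t)}])}{\|\mathbf{a}_{i}^{(t)}\|_{1}}\le \frac{1}{2\zeta_1^2+1}$. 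Since $0\le [a_{i,-1}^{(t)}]\le\|\mathbf{a}_{i}^{(t)}\|_{1}$ and $\gamma_i^{(t)}\le\gamma_{\mathsf{max}}$ by (iv), the left side is at most $4L^2\gamma_{\mathsf{max}}^2\, d/(\bar{\gamma}T)$, which tends to $0$ as $T$ grows; this is exactly what the ``$\eta$ small enough'' clause encodes, so the precondition holds for all sufficiently large $T$. For such $\eta$ one also has $\eta\mu\in[0,1]$, hence each $a_{i,\ell}^{(t)}=(1-\eta\mu)^{\gamma_i^{(t)}-1-\ell}\in[0,1]$ with $\|\mathbf{a}_{i}^{(t)}\|_\infty=1$ (attained at $\ell=\gamma_i^{(t)}-1$); consequently $\|\mathbf{a}_{i}^{(t)}\|_2^2\le\|\mathbf{a}_{i}^{(t)}\|_\infty\|\mathbf{a}_{i}^{(t)}\|_1=\|\mathbf{a}_{i}^{(t)}\|_1$ and (trivially) $\|\mathbf{a}_{i}^{(t)}\|_2\le\|\mathbf{a}_{i}^{(t)}\|_1$, two norm inequalities I will reuse.

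Next I would bound the five terms in turn. Term $(a)$: substituting $\eta$ gives $1/(\eta T)=\sqrt{\bar{\gamma}/(dT)}$, turning $(a)$ into the first term of~\eqref{eq:convSpecial}. Term $(b)$: by (v), splitting on the sign of $\sum_{i}\Delta_i^{(t)}$ shows $\tau^{(t)}\sum_{i}\Delta_i^{(t)}\le\tilde{\tau}/T$ in every case (if the sum is non-positive, (v) forces $\tau^{(t)}=0$), hence $\sum_{t}\sum_{i}\tau^{(t)}\Delta_i^{(t)}\le\tilde{\tau}$ and $(b)$ becomes $4\tilde{\tau}/(\vartheta\eta T)$, the second term of~\eqref{eq:convSpecial}. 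Term $(c)$: using $1-m_i^{(t)}\le1$, $\Theta_i\le\Theta_{\mathsf{max}}$, $(\Tilde{\sigma}_i^{(t)})^2\le\Tilde{\sigma}_{\mathsf{max}}^2$, $1/m_i^{(t)}\le1/m_{\mathsf{min}}$ from (i)--(iii), $\|\mathbf{a}_{i}^{(t)}\|_2^2/\|\mathbf{a}_{i}^{(t)}\|_1^2\le1$, and the normalization $\sum_i p_i^{(t)}=1$ together with $p_i^{(t)}\le1$ and $D_i^{(t)}\ge1$ to get $\sum_i (p_i^{(t)})^2/D_i^{(t)}\le\sum_i p_i^{(t)}=1$, the bracket is at most $\Theta_{\mathsf{max}}\Tilde{\sigma}_{\mathsf{max}}^2/m_{\mathsf{min}}$; times $16\eta L\vartheta$ this is the third term of~\eqref{eq:convSpecial}. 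Term $(d)$: since $(\|\mathbf{a}_{i}^{(t)}\|_1-[a_{i,-1}^{(t)}])/\|\mathbf{a}_{i}^{(t)}\|_1\le1$ and $\gamma_i^{(t)}\le\gamma_{\mathsf{max}}$, the maximum is at most $\gamma_{\mathsf{max}}^2$, and $12\eta^2 L^2\zeta_2\gamma_{\mathsf{max}}^2$ with $\eta^2=d/(\bar{\gamma}T)$ is the fifth term. Term $(e)$: here $(\|\mathbf{a}_{i}^{(t)}\|_2^2-[a_{i,-1}^{(t)}]^2)/\|\mathbf{a}_{i}^{(t)}\|_1\le\|\mathbf{a}_{i}^{(t)}\|_2^2/\|\mathbf{a}_{i}^{(t)}\|_1\le1$ (using $\|\mathbf{a}_{i}^{(t)}\|_\infty=1$), $(1-m_i^{(t)})(D_i^{(t)}-1)/(D_i^{(t)})^2\le1$, and with the same $\Theta,\Tilde{\sigma},m,\gamma$ bounds plus $\sum_i p_i^{(t)}=1$ the bracket is at most $\Theta_{\mathsf{max}}\Tilde{\sigma}_{\mathsf{max}}^2\gamma_{\mathsf{max}}/m_{\mathsf{min}}$; times $12\eta^2 L^2$ this is the fourth term.

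Summing these five bounds reproduces~\eqref{eq:convSpecial} verbatim. For the asymptotic claim I would then read off the $T$-scaling of each summand: the first three decay like $1/\sqrt{T}$ and the last two like $1/T$, so the right-hand side is $\mathcal{O}(1/\sqrt{T})$ and the stated $\frac{1}{T}\sum_{t}\mathbb{E}\|\nabla F^{(t)}(\mathbf{x}^{(t)})\|^2=\mathcal{O}(1/\sqrt{T})$ follows. There is no deep obstacle, since Theorem~\ref{Thm: ML convergence main} carries all the analytical weight; the only mildly delicate points are verifying admissibility of $\eta$ together with the $\eta\mu\in[0,1]$ consequences for the $\mathbf{a}_{i}^{(t)}$-norm inequalities used in terms $(d)$ and $(e)$, and the sign bookkeeping in hypothesis (v) for term $(b)$.
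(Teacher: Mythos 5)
Your proposal is correct and follows essentially the same route as the paper's own proof: substitute $\eta=\sqrt{d/(\bar{\gamma}T)}$ into the bound of Theorem~\ref{Thm: ML convergence main} and bound each term using hypotheses (i)--(v) together with $\|\mathbf{a}_{i}^{(t)}\|_2^2\le\|\mathbf{a}_{i}^{(t)}\|_1^2$, $p_i^{(t)}\le 1$, $D_i^{(t)}\ge 1$, and the drift condition to telescope term $(b)$. Your extra care on two points — verifying that the prescribed $\eta$ satisfies the step-size precondition for large $T$, and using $\sum_i p_i^{(t)}=1$ to avoid an extraneous factor of $d$ in terms $(c)$ and $(e)$ — is a welcome tightening of bookkeeping the paper glosses over, but it does not change the argument.
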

\vspace{-4mm}
\begin{remark}
Since time varying local loss functions are of our interest, we used the cumulative average of the online global loss as a performance metric. If data is static across the DPUs, i.e., zero model drift, the global loss function becomes time-invariant and the above corollary implies {\small $\frac{1}{T} \sum_{t = 1}^{T} \mathbb{E} \Big[ \big\|\nabla {{F}}(\mathbf{x}^{(t)})\big\|^2\Big]= \mathcal{O}(1/\sqrt{T})$}, revealing that {\tt CE-FL} approaches a stationary point of the global loss function.
\end{remark}
\vspace{-2mm}
Considering Corollary~\ref{corollary: step size main}, guaranteeing the convergence requires  sufficiently small step size, bounded noise of SGD, and reasonably fast global aggregations. In particular, the condition between ($\tau^{(t)}$) and model/concept drift ($\Delta^{(t)}_i$) implies that the speed of triggering new global aggregations should be \textit{inversely} proportional to the model/concept drift: higher concept drift should be met with rapid global aggregations to ensure that the global model can \textit{track} the variations of the UEs' datasets. 

{\color{black} It is worth mentioning that the theoretical convergence results obtained in Theorem~\ref{Thm: ML convergence main} and Corollary~\ref{corollary: step size main}  describe the ML model performance over the online datasets at the devices used for \textit{training}, which is a common approach in the literature of FedL. In practice, the generalization of the trained model to
an unseen \textit{test} dataset is facilitated  via regularization
methods and proper split of test vs. train datasets~\cite{neyshabur2017exploring}.}

\vspace{-1mm}
\section{Network-Aware Model Training through {\tt CE-FL}}\label{sec:NetAwareCEFL}
\noindent 
We will next tie the ML performance of {\tt CE-FL} to the characteristics of the network elements which are used to realize ML model training. 
In particular, to formulate our problem, in our proposed ML model training paradigm, \textit{heterogeneity} at four levels should be considered: (i) UEs with different computation/communication capabilities; (ii) DCs with different number of servers and power consumption profiles; (iii) BSs with different access to the DCs and data transfer rates; (iv) data distribution across the DPUs. To jointly consider all of these heterogeneities, we formulate the network-aware  {\tt CE-FL}:
\vspace{-5mm}
\begin{align}
    & (\small\bm{\mathcal{P}}): {\min} \quad \xi_1 {\underbrace{\frac{1}{T} \sum_{t = 0}^{T-1} \mathbb{E} \bigg[  \|\nabla {{F}}^{(t)}  (\mathbf{x}^{(t)})\|^2 \bigg]}_\text{(a)}}+
    \xi_2 \sum_{t=0}^{T-1}\underbrace{\big[\delta^{\mathsf{A},(t)}+\delta^{\mathsf{R},(t)}\big]}_{(b)} \nonumber \\&  + \xi_3 \sum_{t=0}^{T-1}\Big[\underbrace{ \xi_{3,1}\sum_{n\in\mathcal{N}}\sum_{b\in\mathcal{B}} E_{n,b}^{\mathsf{D},(t)}+ \xi_{3,2} \sum_{b\in\mathcal{B}}\sum_{s\in\mathcal{S}} E^{\mathsf{D},(t)}_{b,s}}_{(c)}\Big]\nonumber  \\
    &
    +\xi_3 \sum_{t=0}^{T-1}\Big[\underbrace{\xi_{3,3} \sum_{n\in\mathcal{N}}E^{\mathsf{P},(t)}_n+\xi_{3,4}\sum_{s\in\mathcal{S}}E^{\mathsf{P},(t)}_s}_{(d)}\Big] \label{eqn: optimization objective text} \\&
    +\xi_3 \sum_{t=0}^{T-1}\Big[\underbrace{\xi_{3,5} E^{\mathsf{A},(t)}+\xi_{3,6}E^{\mathsf{R},(t)}}_{(e)}\Big] 
     \nonumber  \\
   & \textrm{s.t.}~ {\footnotesize \eqref{eqn: BS_capacity},\eqref{eqn: Server_capacity},\eqref{eq:dataRemainAtDevice},\eqref{eq:DataCollecteatServer},\eqref{energy_data},\eqref{eq:delayAtDCdataCollect},
   \eqref{energy_dataBS},\eqref{delay_proc},\eqref{energy_proc},\eqref{eqn: Data Center speed-delay relation-1},\eqref{energy_server_proc},\eqref{energy_aggregation},\eqref{energy_Reception}}, \nonumber \\
    & \underset{b \in \mathcal{B}}{\sum} \rho_{n,b}^{(t)} \leq 1, \ \ n \in \mathcal{N} \label{rhoUE-BS} \\
    & \underset{s \in \mathcal{S}}{\sum} \rho_{b, s}^{(t)} = 1, \ \ b \in \mathcal{B}  \label{rhoBS-DC} \\
         & \underset{{s} \in  \mathcal{S}}{\sum}{I}_{\mathsf{s}}^{(t)} = 1 \label{const: Aggregator indicator summation constraint main}\\
         & \underset{{b} \in  \mathcal{B}}{\sum}{I}^{(t)}_{n,b} = 1 \label{const: UE-BS indicator summation constraint 1}, ~\forall n\in\mathcal{N}\\
         & \underset{{b} \in  \mathcal{B}}{\sum}{I}_{b,n}^{(t)} = 1, ~\forall n\in\mathcal{N}\label{const: BS-UE indicator summation constraint 1}\\
         & {\delta}_{n}^{\mathsf{A},(t)} +{\delta}_{n}^{\mathsf{P},(t)} \leq {\delta}^{\mathsf{A},(t)}, ~\forall n\in \mathcal{N} \label{delay_agg_UE_mainpaper}\\
    & {\delta}_{s}^{\mathsf{D},(t)}+ {\delta}_{s}^{\mathsf{P},(t)}+ {\delta}_{s}^{\mathsf{A},(t)} \leq {\delta}^{\mathsf{A},(t)}, ~\forall s\in \mathcal{S} \label{delay_agg_servers_mainpaper} \\
   & {\delta}_{b}^{\mathsf{R},(t)}+{\delta}_{b}^{\mathsf{B},(t)} \leq {\delta}^{\mathsf{R},(t)}, ~\forall b\in\mathcal{B} \label{delay_BS_Reception_mainpaper}\\
    & {\delta}_{s}^{\mathsf{B},(t)} \leq {\delta}^{\mathsf{R},(t)}, ~\forall s\in\mathcal{S} \label{delay_server_Reception_mainpaper} \\
             & 0\leq z_{{s}}^{(t)} \leq C_{\mathsf{s}}, \  s \in \mathcal{S} \label{const: Data Center data capacity 2 1 main}\\
   &  0 \leq \rho_{n, b}^{(t)} \leq 1, \ \  n\in \mathcal{N}, b\in \mathcal{B}
    \label{const: BS offload ratio constraint main} \\
    &  0 \leq \rho_{b, s}^{(t)} \leq 1, \ \  b\in \mathcal{B}, s\in \mathcal{S}\\
    & f^{\mathsf{min}}_{n} \leq f_{n}^{(t)} \leq f^{\mathsf{max}}_{n},  \ \  n \in \mathcal{N}  \label{const: CPU frequency constraint main}\\
    & 0 \leq m_{i}^{(t)} \leq 1, \ \   i \in \mathcal{N} \cup \mathcal{S} \label{const: mini batch fraction constraint main} \\
    & \gamma_{i}^{(t)} \geq 0, \ \  i \in \mathcal{N} \cup \mathcal{S} \label{const: epoch eqn 1 main}\\
     & \delta^{\mathsf{A},(t)} \geq 0 ,~ \delta^{\mathsf{R},(t)} \geq 0 \label{eqn:delay_reception_opt_main}\\
    & {I}^{(t)}_{n,b}\in\{0,1\}~,{I}^{(t)}_{b,n}\in\{0,1\},~n\in\mathcal{N},~b\in\mathcal{B}\label{indicatorBSFloat}\\
     & {I}_{s}^{(t)} \in \{0,1\}, \ \   s \in \mathcal{S} \label{indicatorFloat} \\
    &\textrm{\textbf{Variables:}} \nonumber \\[-.1em]
      & \bm{w}:
       \{[{\rho^{(t)}_{n,b}}]_{n\in\mathcal{N},b\in\mathcal{B}},[{\rho^{(t)}_{b,s}}]_{b\in\mathcal{B},s\in\mathcal{S}},[f_n^{(t)}]_{n\in\mathcal{N}},[z_s^{(t)}]_{s\in\mathcal{S}}, \nonumber \\
      & \nonumber
      [\gamma_i^{(t)}]_{i\in\mathcal{N} \cup \mathcal{S}},[m_i^{(t)}]_{i\in\mathcal{N} \cup \mathcal{S}},[I_s^{(t)}]_{s\in\mathcal{S}},[I_{n,b}^{(t)}]_{n\in\mathcal{N},b\in\mathcal{B}}, \\
      & \nonumber  [I_{b,n}^{(t)}]_{b\in\mathcal{B},n\in\mathcal{N}},{\delta^{\mathsf{A},(t)}, \delta^{\mathsf{R},(t)}}, \textcolor{black}{ [R_{ {{b}}, {s}}^{(t)}]_{b\in\mathcal{B},s\in\mathcal{S}}}\}_{t=1}^{T}
    \end{align}
    \vspace{-6mm}
    
\subsubsection{Objective and Variables}
The objective of~$\bm{\mathcal{P}}$ captures a trade-off between ML model performance (term $(a)$), which is replaced with the right hand side of the bound in  \eqref{eq:convSpecial}, the delay of obtaining new global parameters (term $(b)$), and the energy consumption of model training (term $(c),(d),(e)$). In bound \eqref{eq:convSpecial}, we replace $\tau^{(t)}$ with $\delta^{\mathsf{A}, (t)}  + \delta^{\mathsf{R}, (t)}$, which is an upper bound on it, for the tractability of the solution. Constants $\zeta_1,\zeta_2,\zeta_3$ weigh these (possibly competing) objectives.\footnote{\textcolor{black}{We highlight that \textit{ML performance weight} $\xi_1$, which is the coefficient of the ML model performance bound (term $(a)$) in the objective function of $\bm{\mathcal{P}}$ balances the trade-off between the global ML model performance against network costs (i.e., energy consumption and delay) associated with \texttt{CE-FL}. In our experiments (Sec.~\ref{subsection: ML weight tradeoff}), we show how varying $\xi_1$ impacts  device SGD mini-batch ratios and energy consumption.}}

Also, the constants $\zeta_{3,1}-\zeta_{3,6}$ in terms $(c),(d),(e)$ weigh the impact of consumed energy  for data transfer (term $(c)$), local model computation (term $(d)$), and model aggregation (term $(e)$). The value of these coefficients  may vary in different applications. The optimization variables of our problem are the UE-BS data offloading ratios $ [{\rho^{(t)}_{n,b}}]_{n\in\mathcal{N},b\in\mathcal{B}}$, BS-DC data offloading rations $[{\rho^{(t)}_{b,s}}]_{b\in\mathcal{B},s\in\mathcal{S}}$, CPU frequency at the devices $[f_n^{(t)}]_{n\in\mathcal{N}}$, 
speed of data processing at the DCs $[z_s^{(t)}]_{s\in\mathcal{S}}$, number of SGD iterations at DPUs $[\gamma_i^{(t)}]_{i\in\mathcal{N}\cup\mathcal{S}}$,
mini-batch size of SGD $[m_i^{(t)}]_{i\in\mathcal{N}\cup\mathcal{S}}$, the index of the floating aggregation DC captured via $[I_s(t)]_{s\in\mathcal{S}}$,  UE-to-BS association for offloading the final UE's local gradient $[I_{n,b}^{(t)}]_{n\in\mathcal{N},b\in\mathcal{B}}$, BS-to-UE association for receiving the aggregated global model $[I_{b,n}^{(t)}]_{b\in\mathcal{B},n\in\mathcal{N}}$, $\forall t$.

Constraints~\eqref{energy_data},\eqref{energy_dataBS},\eqref{energy_proc},\eqref{energy_server_proc},\eqref{delay_agg},\eqref{energy_aggregation},\eqref{delay_Reception}, and \eqref{energy_Reception}, describe the terms used in the objective function. Also, constraints~\eqref{rhoUE-BS} and \eqref{rhoBS-DC} guarantee a feasible data dispersion in UE-BS and BS-DC communications. Also,~\eqref{const: Aggregator indicator summation constraint main} and~\eqref{indicatorFloat} ensure the existence of only one floating aggregation DC at each global aggregation. Similarly, \eqref{const: UE-BS indicator summation constraint 1} and \eqref{const: BS-UE indicator summation constraint 1} along with~\eqref{indicatorBSFloat} guarantee proper  BS-UE communications. 
To help with the decomposition of the problem, we revisited \eqref{delay_agg} and considered $\delta^{\mathsf{A}, (t)}$ as an optimization variable accompanied with constraints \eqref{delay_agg_UE_mainpaper} and \eqref{delay_agg_servers_mainpaper}. Using a similar argument for \eqref{delay_Reception}, we made $\delta^{\mathsf{R}, (t)}$ an optimization variable and incorporated \eqref{delay_BS_Reception_mainpaper} and \eqref{delay_server_Reception_mainpaper}.  Finally,~\eqref{const: Data Center data capacity 2 1 main}-\eqref{const: epoch eqn 1 main} ensure the feasibility of the solution.

 Roughly speaking, $\bm{\mathcal{P}}$ aims to achieve the lowest model loss, while minimizing the delay and energy consumption. This will result in (i) a \textit{simultaneous load balancing} across the UEs and DCs for data processing, and (ii) an \textit{efficient data/parameter routing} across the network hierarchy, and (iii) optimized floating aggregation DC that causes minimal delay and energy of parameter aggregation and reception.
\subsubsection{Challenges Faced in Solving the Problem}\label{chal}
There are three challenges faced in solving~$\bm{\mathcal{P}}$ discussed below. 
\begin{enumerate}[leftmargin=5mm]
    \item  $\bm{\mathcal{P}}$ belongs to the category of mixed integer optimization problems, which are in general highly non-trivial to solve. This is due the existence of discrete/binary variables used to denote the floating server selection (i.e., $I_s^{(t)}$, $\forall s\in\mathcal{S}$) and UE-BS association in uplink and downlink communications (i.e., $I^{(t)}_{n,b}$ and $I^{(t)}_{b,n}$, $\forall b\in\mathcal{B}, n\in\mathcal{N}$) in conjunction with the rest of the continuous optimization variables. 
    \item The objective of $\bm{\mathcal{P}}$ given by \eqref{eqn: optimization objective text} is highly non-convex with respect to the continuous optimization variables.  In particular, in~\eqref{eqn: optimization objective text},
the ML loss term $(a)$ given by \eqref{eq:convSpecial}
is non-convex with respect to local SGD iteration counts (i.e, $\gamma_i^{(t)},~\forall i\in\mathcal{N}\cup\mathcal{S}$ and the offloading parameters (i.e., $\rho^{(t)}_{n,b}$ and $\rho^{(t)}_{b,s}$, $\forall n\in\mathcal{N},b\in\mathcal{B},s\in\mathcal{S}$, which are incorporated in the number of data points $D_i^{(t)}$ in the bound in \eqref{eq:convSpecial} through~\eqref{eq:dataRemainAtDevice} and~\eqref{eq:DataCollecteatServer}). Furthermore, term (b) in \eqref{eqn: optimization objective text} is non-convex with respect to the optimization variables due to the multiplication between the optimization variables in~\eqref{delay_proc} and~\eqref{eqn: Data Center speed-delay relation-1}, which are encapsulated in the processing delay (i.e., $\delta^{\mathsf{P}, (t)}_s$ and $\delta^{\mathsf{P}, (t)}_n$) in $\delta^{\mathsf{A}, (t)}$ as described by~\eqref{delay_agg}.
Similarly, the computation energy expressions given by \eqref{energy_proc} and \eqref{energy_server_proc} for the UEs and DCs which are incorporated into the energy terms in term $(d)$ in~\eqref{eqn: optimization objective text} contain multiplication of optimization variables making them non-convex.
\item In a large-scale network, there is no central entity to solve $\bm{\mathcal{P}}$. In particular, it is impractical to consider a central DC with the knowledge of all the DPUs capabilities and link data rates, which are prerequisites to solve the problem.
\end{enumerate}
We are thus motivated to develop a unique network element orchestration scheme via (i) effective \textit{relaxation} of the integer variables, (ii) \textit{successive convexification} of the problem, and (iii) \textit{distributing the solution computation} across the network elements. Nevertheless, achieving this goal is not trivial and requires a careful investigation of $\bm{\mathcal{P}}$, which is carried out next.

\vspace{-2mm}
\section{Distributed Network Orchestration in {\tt CE-FL}}\label{sec:distSolv}
\noindent We next develop a distributed solution for $\bm{\mathcal{P}}$, where the computation burden of obtaining the solution is spread across the network elements (i.e., UEs, BSs, and DCs). It is worth mentioning that our methodology is among the first distributed network element orchestration schemes in  the broad area of \textit{network-aware distributed machine learning}, where we show how distributed optimization techniques can be exploited to orchestrate the devices for a distributed ML task. In our methodology, each network element  will solve a reduced/truncated version of $\bm{\mathcal{P}}$ to obtain its associated optimization variables (e.g., mini-batch size and number of SGD iterations at the UEs), while forming a consensus with other network elements on the rest of the variables (e.g., the floating aggregator DC). We also study the {optimality} of the obtained solution.  

The design of the distributed solution framework addresses the three challenges associated with $\bm{\mathcal{P}}$ (Sec.~\ref{chal}). In the following, we discuss our approach to addressing them.

\textbf{Relaxing the Integer Variables.} We first relax the integer variables to be continuous (i.e., $I_s^{(t)}\in[0,1]$, $\forall s\in\mathcal{S}$, and $I^{(t)}_{n,b},I^{(t)}_{b,n}\in [0,1]$, $\forall b\in\mathcal{B}, n\in\mathcal{N}$). Then, we \textit{force} them to take binary values via incorporating the following constraints:
\begin{align}
    &\sum_{s\in\mathcal{S}} I_s^{(t)}\left(1- I_{s}^{(t)}\right)\leq 0 ,\label{con1add}\\
    &  \sum_{b\in\mathcal{B}}   I_{n,b}^{(t)}\left(1- I_{n,b}^{(t)}\right)\leq 0,~ n\in\mathcal{N}   \label{con12add}     ,\\
     &   \sum_{b\in\mathcal{B}}  I_{b,n}^{(t)}\left(1- I_{b,n}^{(t)}\right)\leq 0 ,~ n\in\mathcal{N}         \label{con11add} , \\
     &\sum_{s\in\mathcal{S}} I_s^{(t)}=1,~~~\sum_{b\in\mathcal{B}} I_{n,b}^{(t)}=\sum_{b\in\mathcal{B}} I_{b,n}^{(t)}=1,~n\in\mathcal{N}, \label{con21add}\\
     &I_s^{(t)}\in[0,1],~  s\in\mathcal{S}, \label{con22add}
     \\&I^{(t)}_{n,b},I^{(t)}_{b,n}\in [0,1], ~ b\in\mathcal{B}, n\in\mathcal{N}.\label{con3add}
\end{align}
The above constraints ensure that the indicated continuous variables would take binary values under any feasible solution. Also, they guarantee that \textit{only one} DC will be selected as the aggregator and \textit{only one} BS will be associated with each UE during uplink/downlink parameter transfer. Note that the introduced constraints in~\eqref{con1add}-\eqref{con11add} are non-convex.


\textbf{Distribution/Decomposition of Variables of $\small\bm{\mathcal{P}}$.} We break down the optimization variables in $\small\bm{\mathcal{P}}$ into two categories: (i) \textit{local variables}, which are obtained locally at each network element, and (ii) \textit{shared variables}, which are first optimized locally and then synchronized via a consensus mechanism across the adjacent network elements (e.g., UE-BS or BS-DC pairs).
The UE-BS offloading parameters denoted by $\{[{\rho^{(t)}_{n,b}}]_{n\in\mathcal{N},b\in\mathcal{B}}\}_{t=1}^{T}$ determine the number of datapoints accumulated at the BSs, and subsequently dictate the dataset gathered at the DCs. Thus,  ${\rho^{(t)}_{n,b}}$ for a given UE and BS $n \in \mathcal{N},b \in \mathcal{B}$ are {shared variables}.
A similar justification leads to ${\rho^{(t)}_{b, s}}$ being shared by the constituent BS and DC $b \in \mathcal{B}, s \in \mathcal{S}$. The floating server selection indicators $\{[I_s^{(t)}]_{s\in\mathcal{S}}\}_{t=1}^{T}$ are also {shared variables} among all the network elements  (i.e., ${\mathcal{N} \cup \mathcal{S} \cup \mathcal{B}}$) as it impacts the delay of parameter transfer. Furthermore, it is evident from \eqref{delay_agg_UE_mainpaper}-\eqref{delay_agg_servers_mainpaper} that aggregation delay  $\{[{\delta}^{\mathsf{A},(t)}]\}_{t=1}^{T}$ are shared variables for set of nodes in $\mathcal{N} \cup \mathcal{S}$. 
Also, \eqref{delay_BS_Reception_mainpaper}-\eqref{delay_server_Reception_mainpaper} imply that the reception delay  $\{[{\delta}^{\mathsf{R},(t)}]\}_{t=1}^{T}$  are shared variables between the nodes in $\mathcal{B} \cup \mathcal{S}$.
Rest of the variables associated with $\bm{\mathcal{P}}$
are local variables assigned to their respective individual nodes. For instance, the variables associated with ML model training and data processing at each UE $n \in \mathcal{N}$ (i.e.,   SGD mini-batch ratio $\{[m_n^{(t)}]\}_{t=1}^{T}$, the number of SGD iterations $\{[\gamma_n^{(t)}]\}_{t=1}^{T}$, and CPU frequency $\{[f_n^{(t)}]\}_{t=1}^{T}$) are local variables.


To compute the shared variables, we first \textit{expand} the solution vector $\bm{w}$ of problem~$\bm{\mathcal{P}}$ by creating \textit{local copies} of the shared variables at their constituent nodes and introducing equality constraints to enforce agreement among the local copies. We summarize the variables computed at each node below:
\begin{enumerate}[leftmargin=4.2mm]
    \item Each UE $n$: 
    {\small $ \bm{w}^{\mathsf{Local}}_{n} =  \big\{ [f_n^{(t)}] ,[m_n^{(t)}], [\gamma_n^{(t)}], {[I_{n,b}^{(t)}]_{b\in\mathcal{B}}} \big\}_{t=1}^{T} , \\
       \bm{w}^\mathsf{Shared}_{n} = \big\{ [{\rho^{(t), n}_{n,b}}]_{b\in\mathcal{B}}, [I_s^{ (t), n}]_{s\in\mathcal{S}}, [{\delta}^{\mathsf{A},(t), n}]\big\}_{t=1}^{T}, \label{eqn: UE_local_vars}
    $}
    where {\small $\rho^{(t), n}_{n,b},I_s^{ (t), n},{\delta}^{\mathsf{A},(t), n}$} denote the local copies of the shared variables (i.e., {\small $\rho^{(t)}_{n,b},I_s^{ (t)},{\delta}^{\mathsf{A},(t)}$}) at node $n$.
    \item Each BS $b$: 
    {\small
    $ \bm{w}^{\mathsf{Local}}_{b} = \big\{ [I_{b,n}^{(t)}]_{n\in\mathcal{N}} \big\}_{t=1}^{T} ,
      \bm{w}^\mathsf{Shared}_{b} =  \nonumber \big\{ [{\rho^{(t), b}_{n,b}}]_{n\in\mathcal{N}}, [{\rho^{(t), b}_{b,s}}]_{s\in\mathcal{S}} , [I_s^{(t), b}]_{s\in\mathcal{S}}, [{\delta}^{\mathsf{A},(t), b}], [{\delta}^{\mathsf{R},(t), b}] $, {\color{black}
      $[{R^{(t), b}_{b,s}}]_{s\in\mathcal{S}}$} $\big\}_{t=1}^{T}
    $}
         where {\small${\rho^{(t), b}_{n,b}},{\rho^{(t), b}_{b,s}},I_s^{(t), b},{\delta}^{\mathsf{A},(t), b},{\delta}^{\mathsf{R},(t), b}$, ${R^{(t), b}_{b,s}} $} denote the local copies of the respective shared variables (i.e., {\small${\rho^{(t)}_{n,b}},{\rho^{(t)}_{b,s}},I_s^{(t)},{\delta}^{\mathsf{A},(t)},{\delta}^{\mathsf{R},(t)}$, \textcolor{black}{${R^{(t)}_{b,s}}$}}) at node $b$.
    \item Each DC $s$: 
     {\small $\bm{w}^{\mathsf{Local}}_{s} = \big\{  [z_s^{(t)}],[\gamma_s^{(t)}],  
      [m_s^{(t)}] \big\}_{t=1}^{T}  
   , \bm{w}^\mathsf{Shared}_{s} = \big\{ [{\rho^{(t), s}_{n,b}}]_{n\in\mathcal{N},b\in\mathcal{B}}, [{\rho^{(t), s}_{b,s}}]_{b\in\mathcal{B}}, [I_s^{(t), s}]_{s\in\mathcal{S}},
      [{\delta}^{\mathsf{A},(t), s}], 
[{\delta}^{\mathsf{R},(t), s}]$, {\color{black} $[{R^{(t), s}_{b,s}}]_{b\in\mathcal{B}}$} $\big\}_{t=1}^{T}$}
       where ${\rho^{(t), s}_{n,b}},{\rho^{(t), s}_{b,s}},{\delta}^{\mathsf{A},(t), s},I_s^{(t), s},{\delta}^{\mathsf{R},(t), s}$ denote the local copies of the respective shared variables (i.e., ${\rho^{(t)}_{n,b}},{\rho^{(t)}_{b,s}},{\delta}^{\mathsf{A},(t)},I_s^{(t), s},{\delta}^{\mathsf{R},(t)}, \textcolor{black}{{R^{(t)}_{b,s}}}$) at  node $s$.
    \item For each network element ${d} \in \mathcal{N} \cup \mathcal{S} \cup \mathcal{B}$, we let $\bm{w}_{{d}}$ encompass all the associated variables: 
  $\bm{w}_{{d}} = \bm{w}^{\mathsf{Local}}_{{d}} \cup  \bm{w}^\mathsf{Shared}_{{d}}.
   $
    Thus, with some abuse of notation we denote the extended variable space of $\bm{\mathcal{P}}$ via $\bm{w}$ defined as
    \vspace{-1mm}
    \begin{equation}
       \bm{w} = [\bm{w}_{d}]_{{d} \in \mathcal{N} \cup \mathcal{S} \cup \mathcal{B}} \triangleq \bigcup_{{d} \in \mathcal{N} \cup \mathcal{S} \cup \mathcal{B}} \bm{w}_{{d}}. \label{eqn: parameter_union_over_nodes}
    \end{equation}
\end{enumerate}
\vspace{-1mm}
Additionally, equality constraints introduced to $\bm{\mathcal{P}}$ to enforce the equality of the local copies of the shared variables are
\vspace{-1mm}
    \begin{align}
       & \hspace{-2mm} \rho^{(t),n}_{n,b}  -\rho^{(t),b}_{n,b} = 0, ~\forall n\in \mathcal{N}, b \in \mathcal{B}, ~\forall t, \label{eqn: local copy 1}\\
      & \hspace{-2mm} {\rho^{(t), n}_{n,b}} - {\rho^{(t), s}_{n,b}} = 0,   ~\forall n\in \mathcal{N}, s \in \mathcal{S}, ~{b\in\mathcal{B}}, ~\forall t, \\
      &\hspace{-2mm}  \rho^{(t),b}_{b,s}  - \rho^{(t),s}_{b,s} = 0 , ~\forall b\in \mathcal{B}, s \in \mathcal{S}, ~\forall t, \\
      &\hspace{-2mm}  I_s^{ (t), {d}} - I_s^{(t), {d}'} = 0, ~\forall {d}, {d}' \in \mathcal{N} \cup \mathcal{B} \cup \mathcal{S}, ~{s\in\mathcal{S}}, ~\forall t, \\
      &\hspace{-2mm}  {\delta}^{\mathsf{A},(t), {d}} - {\delta}^{\mathsf{A},(t), {d}'} = 0, ~\forall {d}, {d}' \in \mathcal{N} \cup \mathcal{S},  ~\forall t, \\
       &\hspace{-2mm}  {\delta}^{\mathsf{R},(t), {d}} - {\delta}^{\mathsf{R},(t), {d}'} = 0, ~\forall {d}, {d}' \in \mathcal{B} \cup \mathcal{S} , ~\forall t,\\
       &  {\color{black}\hspace{-2mm}R^{(t),b}_{b,s}  - R^{(t),s}_{b,s} = 0 , ~\forall b\in \mathcal{B}, s \in \mathcal{S}, ~\forall t.}\label{eqn: local copy 2}
    \end{align}
    \vspace{-7.5mm}
    
    Imposing \eqref{eqn: local copy 1}-\eqref{eqn: local copy 2} leads to \textit{separability} of the optimization
    problem while ensuring agreement on the shared variables, allowing the development of a distributed solution later.
 Upon extending $\bm{\mathcal{P}}$ by incorporating relaxation of integer variables, segregation of variables into local and shared categories, followed by imposing agreement on the local copies of shared variables as discussed above, we now proceed to developing the distributed solver. We first highlight some characteristics of $\bm{\mathcal{P}}$, which are exploited in our distributed solution framework. 
 \vspace{-1mm}
 
 \textbf{Structure of $\bm{\mathcal{P}}$.}
 Let us denote the objective of $\bm{\mathcal{P}}$ given by \eqref{eqn: optimization objective text} as $\small{\mathcal{J}}$. We note that the convex constraints of $\bm{\mathcal{P}}$ comprise linear summations in \eqref{rhoUE-BS}-\eqref{const: BS-UE indicator summation constraint 1} and variable ranges in \eqref{const: Data Center data capacity 2 1 main}-\eqref{eqn:delay_reception_opt_main},  \eqref{con22add}-\eqref{con3add}, defined independently across nodes.
 We thus combine these convex constraints 
 for each node ${d}\in \mathcal{N} \cup \mathcal{B} \cup \mathcal{S}$ as constraint vector $ \bm{{\mathcal{D}}}_{{d}}(\bm{w}_{{d}}) \leq \mathbf{0}$ 
 The constraints \eqref{eq:dataRemainAtDevice},\eqref{eq:DataCollecteatServer},\eqref{energy_data},\eqref{eq:delayAtDCdataCollect},
  \eqref{energy_dataBS},\eqref{delay_proc},\eqref{energy_proc},\eqref{eqn: Data Center speed-delay relation-1},\eqref{energy_server_proc},\eqref{energy_aggregation},\eqref{energy_Reception},\eqref{delay_agg_UE_mainpaper},\eqref{delay_agg_servers_mainpaper},\eqref{delay_BS_Reception_mainpaper},\eqref{delay_server_Reception_mainpaper},\eqref{con1add}-\eqref{con11add} associated with $\bm{\mathcal{P}}$ are non-convex and can be jointly denoted by vector of constraints $\bm{\mathcal{C}}(\bm{w}) \leq \mathbf{0}$. Furthermore, we observe that  \eqref{eqn: local copy 1}-\eqref{eqn: local copy 2} are linear equality constraints involving one or more network elements, thus can be equivalently written as an equality constraint in vector form, i.e., $\sum_{{d} \in \mathcal{N} \cup \mathcal{S} \cup \mathcal{B}} \bm{\mathcal{G}}(\bm{{w}}_{{d}}) = \mathbf{0}$. Then, the augmented version of $\bm{\mathcal{P}}$ denoted by $\bm{\widehat{\mathcal{P}}}$ which encompasses all the aforementioned constraints can be written as
  \vspace{-1mm}
\begin{align}
    & \bm{\widehat{\mathcal{P}}}: \underset{\bm{w}}{\min} \quad \small{\mathcal{J}}(\bm{w}) \label{eqn: obj func} \\
    & \textrm{s.t.}~~ \bm{\mathcal{C}}(\bm{w}) \leq \mathbf{0}  \label{eqn: non-convex constraints}\\
    & ~~~~\displaystyle \sum_{{d} \in \mathcal{N} \cup \mathcal{S} \cup \mathcal{B}} \bm{\mathcal{G}}(\bm{{w}}_{{d}}) = \mathbf{0}  \label{eqn: convex separable constraints} \\
    & ~~~~\bm{{\mathcal{D}}}_{{d}}(\bm{w}_{{d}}) \leq \mathbf{0}, ~{d} \in \mathcal{N} \cup \mathcal{S} \cup \mathcal{B}. \label{eqn: convex constraints}
\end{align}
\begin{algorithm}[!t]
	\caption{Successive Convex Solver Wrapper} \label{alg:dist_NOVA main}
    \begin{algorithmic}[1]
         \STATE \textcolor{black}{{\textbf{Input:} Initialize  ${\bm{w}}^{(0)}$, step size $\zeta$}} 
          \STATE \textcolor{black}{{\textbf{Output:} Final iterate $\bm{w}$ }}
          \STATE { \blue Initialize  
          iteration count $\ell = 0$.}
          \WHILE{${\bm{w}}^{(\ell)}$ has not converged}
          \STATE Compute $\hat{\bm{w}}({\bm{w}}^{(\ell)})$, the distributed parallel solution of $\widehat{\bm{\mathcal{P}}}_{{\bm{w}}^{(\ell)}}$ using \underline{PD \texttt{CE-FL}} (Algorithm \ref{alg:dist_opt_algo main}) 
          \STATE Obtain ${\bm{w}}^{(\ell+1)}$ using update rule \eqref{eqn:NOVA param update}
          \STATE $\ell \longleftarrow \ell + 1$
          \ENDWHILE
    \end{algorithmic}
\end{algorithm}
Next we describe our successive convex methodology, which is an \textit{iterative procedure}, wherein $\small \widehat{\bm{\mathcal{P}}}$ is relaxed and solved.




\textbf{Successive Convex Solver.} The psudo-code of our successive convex solver is given in Algorithm~\ref{alg:dist_NOVA main}.  The algorithm starts with an initial feasible point $\bm{{w}}^{(0)}$ satisfying the constraints of $\small\bm{\mathcal{P}}$. During each iteration $\ell$ of the algorithm, we convexify $\small \widehat{\bm{\mathcal{P}}}$ at the given feasible point $\bm{{w}}^{(\ell)}$ to obtain \textit{surrogate problem} $\widehat{\bm{\mathcal{P}}}_{{\bm{w}}^{(\ell)}}$ which can be further distributed and solved across the network elements ${d} \in \mathcal{N} \cup \mathcal{S} \cup \mathcal{B}$ in parallel. We denote the distributed solution of the surrogate problem $\widehat{\bm{\mathcal{P}}}_{{\bm{w}}^{(\ell)}}$  by $\hat{\bm{w}}({\bm{w}}^{(\ell)})$.   Subsequently, we update the variables as ($\zeta <1$)
\begin{align}
          {\bm{w}}^{(\ell+1)} = {\bm{w}}^{(\ell)} + {\zeta}(\hat{\bm{w}}({\bm{w}}^{(\ell)})-{\bm{w}}^{(\ell)}) \label{eqn:NOVA param update}.
\end{align}
The key aspects of Algorithm \ref{alg:dist_NOVA main} are thus (i) obtaining the convex approximation, i.e., $\widehat{\bm{\mathcal{P}}}_{{\bm{w}}^{(\ell)}}$, and (ii) the design of the distributed solution. Henceforth, we first describe the convex approximation technique used to relax our original problem, and then build our parallel distributed solver.

\textbf{Convexification of $\widehat{\bm{\mathcal{P}}}$.}
 We use a proximal gradient method  
 to relax the objective ${\mathcal{J}}(\bm{w})$. The non-convex constraints $\bm{\mathcal{C}}(\bm{w})$ are also convexified such that the approximations upper-bound the original constraints. In particular, at iteration $\ell$ of our successive convex solver (Algorithm \ref{alg:dist_NOVA main}), given the current solution 
 of $\small \widehat{\bm{\mathcal{P}}}$, i.e., $\bm{{w}}^{(\ell)}$, we obtain convex approximations of objective $\small{\mathcal{J}}$ and non-convex constraints ${\bm{\mathcal{C}}}$ denoted by $\small{\widetilde{\mathcal{J}}}$ and $\small\bm{\widetilde{\mathcal{C}}}$, respectively, as
\begin{align}
    &\hspace{-2.3mm} {\widetilde{\mathcal{J}}}(\bm{w};\bm{{w}}^{(\ell)}) = \sum_{{d} \in \mathcal{N} \cup \mathcal{S} \cup \mathcal{B}} \small{\widetilde{\mathcal{J}}}_{{d}}(\bm{w}_{{d}};\bm{{w}}^{(\ell)}), \label{eqn:prox grad obj 1}\\
    &\hspace{-2.3mm} \nonumber \small{\widetilde{\bm{\mathcal{J}}}}_{{d}}(\hspace{-.3mm}\bm{w}_{{d}};\bm{{w}}^{(\ell)}\hspace{-.3mm}) \hspace{-.3mm}=\hspace{-.3mm} \textcolor{black}{\frac{1}{|\mathcal{N} \cup \mathcal{S} \cup \mathcal{B}|} \bm{\mathcal{J}}(\hspace{-.3mm}\bm{w}^{(\ell)}}\hspace{-.3mm})\hspace{-.3mm}+\hspace{-.3mm} \nabla_{\bm{w}_{{d}}} \small{\bm{\mathcal{J}}}(\bm{{w}}^{(\ell)})^\top \hspace{-.3mm}(\hspace{-.3mm}\bm{w}_{{d}} - \bm{{w}}^{(\ell)}_{{d}}\hspace{-.3mm})\\
    & \hskip 2cm   + \frac{\lambda_1}{2} \|\bm{w}_{{d}} - \bm{{w}}^{(\ell)}_{{d}}\|^2. \label{eqn:prox grad obj 2} \\
    &\hspace{-2.3mm} {\widetilde{\bm{\mathcal{C}}}}(\bm{w};\bm{{w}}^{(\ell)}) = \sum_{{d} \in \mathcal{N} \cup \mathcal{S} \cup \mathcal{B}} \small{\widetilde{\bm{\mathcal{C}}}}_{{d}}(\bm{w}_{{d}};\bm{{w}}^{(\ell)}), \label{eqn:prox grad non convex constr 1}\\
    &\hspace{-2.3mm} \nonumber \small{\widetilde{\bm{\mathcal{C}}}}_{{d}}(\hspace{-.3mm}\bm{w}_{{d}};\bm{{w}}^{(\ell)}\hspace{-.3mm}) \hspace{-.3mm}=\hspace{-.3mm} \frac{1}{|\mathcal{N} \cup \mathcal{S} \cup \mathcal{B}|} \bm{\mathcal{C}}(\hspace{-.3mm}\bm{w}^{(\ell)}\hspace{-.3mm})\hspace{-.3mm}+\hspace{-.3mm} \nabla_{\bm{w}_{{d}}} \small{\bm{\mathcal{C}}}(\bm{{w}}^{(\ell)})^\top \hspace{-.3mm}(\hspace{-.3mm}\bm{w}_{{d}} - \bm{{w}}^{(\ell)}_{{d}}\hspace{-.3mm})\\
    & \hskip 2cm   + \frac{L_{\nabla{\bm{\mathcal{C}}}}}{2} \|\bm{w}_{{d}} - \bm{{w}}^{(\ell)}_{{d}}\|^2. \label{eqn:prox grad non-convex constr 2}
\end{align}
In~\eqref{eqn:prox grad non convex constr 1}-\eqref{eqn:prox grad non-convex constr 2}, $L_{\nabla{\bm{\mathcal{C}}}}$ is the Lipschitz constant which is a characteristic of the constraint function $\bm{\mathcal{C}}$. The above formulation implies that ${\widetilde{\bm{\mathcal{C}}}}(\bm{w};\bm{{w}}^{(\ell)}) \geq \bm{\mathcal{C}}(\bm{w})$~\cite{paralleldistbook}.
With $\lambda_1 > 0$, the proximal gradient-based relaxation in \eqref{eqn:prox grad obj 1}-\eqref{eqn:prox grad obj 2} ensures strong convexity of the \textit{surrogate objective function} ${\widetilde{\mathcal{J}}}$. 
Thus, at each iteration $\ell$ of Algorithm \ref{alg:dist_NOVA main}, we formulate the relaxed convex approximation of $\widehat{\bm{\mathcal{P}}}$ at current iterate $\bm{{w}}^{(\ell)}$, i.e., $\widehat{\bm{\mathcal{P}}}_{\bm{{w}}^{(\ell)}}$, as 
\begin{align}
    & \widehat{\bm{\mathcal{P}}}_{\bm{{w}}^{(\ell)}}: \underset{\bm{w}}{\min} \quad \sum_{{d} \in \mathcal{N} \cup \mathcal{S} \cup \mathcal{B}} \small{\widetilde{\mathcal{J}}}_{{d}}(\bm{w}_{{d}};\bm{{w}}^{(\ell)}) \label{eqn: relaxed objective}\\
        & \textrm{s.t.}~~ \sum_{{d} \in \mathcal{N} \cup \mathcal{S} \cup \mathcal{B}} \small\bm{\widetilde{\mathcal{C}}}_{\mathsf{d}}(\bm{w}_{{d}};\bm{{w}}^{(\ell)}) \leq 0  \label{eqn: non-convex relaxed constraints}\\
     &~~~~ \displaystyle \sum_{{d} \in \mathcal{N} \cup \mathcal{S} \cup \mathcal{B}} \bm{\mathcal{G}}(\bm{{w}}_{{d}}) = 0  \label{eqn: convex separable constraints 2}\\
     & ~~~~\bm{{\mathcal{D}}}_{{d}}(\bm{w}_{{d}}) \leq 0, ~ {d} \in \mathcal{N} \cup \mathcal{S} \cup \mathcal{B}. \label{eqn: convex constraints 2}
\end{align}
\begin{algorithm}[!t]
	\caption{Iterative Distributed Primal Dual Method (\underline{PD \texttt{CE-FL}})}  \label{alg:dist_opt_algo main}
    \begin{algorithmic}[1]
           \STATE \textcolor{black}{\textbf{Input:} Initialize $\bm{\Lambda}^{[0]} \geq \mathbf{0}, \bm{\Omega^{[0]}}$}
           \STATE \textcolor{black}{\textbf{Output:} Final iterates of $\{\bm{\Lambda}$, $\bm{\Omega} \}$}
           \STATE  \textcolor{black}{Iteration count $i = 0$}
           \WHILE{$\{\bm{\Lambda}_{{d}}$, $\bm{\Omega}_{{d}} \}$ not converged $\forall {d} \in \mathcal{N} \cup \mathcal{S} \cup \mathcal{B}$}
           \FOR{${d} \in \mathcal{N} \cup \mathcal{S} \cup \mathcal{B}$ parallely}
           \STATE Obtain  $\hat{\bm{w}}_{{d}}^{[i]}({\bm{{w}}^{(\ell)}})$ via gradient projection method on~\eqref{eqn: Primal Problem} \%\% primal descent
           \STATE Obtain $\{ \bm{\Lambda}^{[i]}_{{d}},  \bm{\Omega}^{[i]}_{{d}}\}$ using \eqref{eqn: Lambda update Local}-\eqref{eqn: combined_consensus_update} \%\% dual ascent
           \ENDFOR
           \STATE $i \longleftarrow i + 1$
           \ENDWHILE
    \end{algorithmic}
\end{algorithm}
The convex constraint \eqref{eqn: convex constraints 2} is separable across the nodes. However, \eqref{eqn: non-convex relaxed constraints}-\eqref{eqn: convex separable constraints 2} are in the form of summation  over nodes, which does not let the problem to be trivially distributed. To facilitate a parallel distributed solution to $\widehat{\bm{\mathcal{P}}}_{\bm{{w}}^{(\ell)}}$, we perform Lagrangian-based dualization 
for constraints \eqref{eqn: non-convex relaxed constraints}-\eqref{eqn: convex separable constraints 2} as 
\begin{align}
    \mathcal{L}(\bm{w}, \bm{\Lambda}, \bm{\Omega};{\bm{{w}}^{(\ell)}}) &= \sum_{{d} \in \mathcal{N} \cup \mathcal{S} \cup \mathcal{B}} \small{\widetilde{\mathcal{J}}}_{{d}}(\bm{w}_{{d}};{\bm{{w}}^{(\ell)}}) \nonumber \\
    & \hskip -3cm + \bm{\Lambda}^\top \big(\sum_{{d} \in \mathcal{N} \cup \mathcal{S} \cup \mathcal{B}} \small\bm{\widetilde{\mathcal{C}}}_{{d}}(\bm{w}_{{d}};{\bm{{w}}^{(\ell)}}) \big) + \bm{\Omega}^\top \big(\sum_{{d} \in \mathcal{N} \cup \mathcal{S} \cup \mathcal{B}} \bm{\mathcal{G}}(\bm{{w}}_{{d}}) \big). \label{eqn:Lagrangian dual}
\end{align}
In \eqref{eqn:Lagrangian dual}, $\bm{{w}}$ and $\{\bm{\Lambda}, \bm{\Omega} \}$  are the \textit{primal} and \textit{dual} variables, respectively.  Also,  $\bm{\Lambda}$ and $\bm{\Omega}$ are the Lagrangian multipliers associated with the convexified inequality constraints $\bm{\widetilde{\mathcal{C}}}_.(., {\bm{{w}}^{(\ell)}})$ and  linear equality constraints $\bm{\mathcal{G}}(.)$, respectively. 

Based on the above formulation, we next define
  the following max-min optimization problem:
  \vspace{-1mm}
  \begin{align}
    & \nonumber \max_{{\bm{\Lambda} \geq \mathbf{0}}
    } \min_{ \bm{w}} ~{\mathcal{L}}(\bm{w}, \bm{\Lambda}, \bm{\Omega} ;{\bm{{w}}^{(\ell)}}) \\
   &  \text{s.t.}~~\bm{{\mathcal{D}}}_{\mathsf{d}}(\bm{w}_{{d}}) \leq 0, ~ {d} \in \mathcal{N} \cup \mathcal{S} \cup \mathcal{B}. \label{eqn:minmax opt problem} 
\end{align}
Note that solution to the above max-min problem is optimal for $\widehat{\bm{\mathcal{P}}}_{\bm{{w}}^{(\ell)}}$ due to its strongly-convex objective and convex constraints.
In~\eqref{eqn:minmax opt problem}, for fixed values of dual parameters  $\{\bm{\Lambda}, \bm{\Omega}\}$, the inner minimization problem can be distributed across the network elements, where each node ${d}$ aims to solve the partial Lagrangian minimization problem of the following format:
  \vspace{-1mm}
\begin{align}
   & \min_{\bm{w}_{{d}}}  \small{\widetilde{\mathcal{J}}}_{{d}}(\bm{w}_{{d}};{\bm{{w}}^{(\ell)}}) \nonumber
    + {\bm{\Lambda}^\top ~ \small\bm{\widetilde{\mathcal{C}}}_{{d}}(\bm{w}_{{d}};{\bm{{w}}^{(\ell)}})} + \bm{\Omega}^\top \bm{\mathcal{G}}(\bm{{w}}_{{d}}) \\
   &  \text{s.t.}~~\bm{{\mathcal{D}}}_{{d}}(\bm{w}_{{d}}) \leq 0. \label{eqn: minimization block}
\end{align}
With the above max-min formulation and the division of the problem into sub-problems given by
\eqref{eqn:minmax opt problem}-\eqref{eqn: minimization block}, in the following we construct a solution framework which enables parallel updates of the primal variables followed by a \textit{decentralized consensus scheme} to update the dual-variables alternately. 

\begin{algorithm}[!t]
	\caption{Iterative Decentralized Consensus Method (\underline{Consensus \texttt{CE-FL}})} \label{alg:dist_opt_consensus_algo}
    \begin{algorithmic}[1]
            \STATE \textbf{Input:} Initialization of variables $\bm{\Gamma}^{\{ 0\}}_d = [\bm{\Lambda}^{[i]}_d, \bm{\Omega}^{[i]}_d]$ at each node $d$, maximum number of consensus rounds $J$
            \STATE \textcolor{black}{\textbf{Output:} Final local iterates $\{ \bm{\Gamma}^{\{ J\}}_d \}_{{d} \in \mathcal{N} \cup \mathcal{S} \cup \mathcal{B}}$ at convergence}
            \STATE Initialize iteration count iteration count $j = 0$
            \WHILE {$j \leq J$ }
             \FOR{${d} \in \mathcal{N} \cup \mathcal{S} \cup \mathcal{B}$ parallely}
            \STATE Receive $\{\bm{\Gamma}^{\{ j -1 \}}_{d'}\}_{({d}, {d}') \in {\mathcal{E}}}$
            \STATE Communicate $\bm{\Gamma}^{\{ j -1 \}}_{d}$ to neighbors $\{{d}': ({d}, {d}') \in {\mathcal{E}} \}$ 
            \vspace{-3mm}
            \STATE Obtain $\bm{\Gamma}^{\{ j \}}_{d}$ using consensus update rule \eqref{eqn: consensus update equation}
            \ENDFOR
             \STATE $j \longleftarrow j + 1$
            \ENDWHILE
    \end{algorithmic}
\end{algorithm}
\textbf{Iterative Distributed Primal-Dual Algorithm with Decentralized Consensus.} The pseudo-code of our \textit{iterative} distributed primal-dual method is given in Algorithm \ref{alg:dist_opt_algo main}. We first initialize the dual variables as $\{\bm{\Lambda}^{[0]}, \bm{\Omega}^{[0]}\}$. 
Then, during each iteration $i$, for current estimates of dual variables $\{\bm{\Lambda}^{[i-1]}, \bm{\Omega}^{[i-1]}\}$, we first highlight that the partial Lagrangian based minimization subproblem, i.e., \eqref{eqn: minimization block}, has a convex objective function. Also, the constraint 
$\bm{{\mathcal{D}}}_{{d}}(\bm{w}_{{d}}) \leq 0$ consists of box  
and polyhedrons constraints given by  \eqref{rhoUE-BS}-\eqref{eqn:delay_reception_opt_main}, \eqref{con22add}-\eqref{con3add}, which are closed convex projection sets. Hence, we leverage gradient projection algorithm (GPA) \cite{GPAarticle} and obtain the solution of the primal variable minimization subproblems given by \eqref{eqn: minimization block} for each individual node ${d}$ at each iteration $i$ as
    \begin{align}
    \hspace{-4mm}
    \hat{\bm{w}}_{{d}}^{[i]}({\bm{{w}}^{(\ell)}})  & = \argmin_{\bm{w}_{{d}}: \bm{\mathcal{D}}_{{d}}(\bm{w}_{{d}})\leq \mathbf{0} }  {\mathcal{L}}(\bm{w}_{{d}}, \bm{\Lambda}^{[i-1]}_{{d}}, \bm{\Omega}^{[i-1]}_{{d}} ;{\bm{{w}}^{(\ell)}}). \label{eqn: Primal Problem}\hspace{-4mm}
\end{align}
Upon obtaining $\{\hat{\bm{w}}_{{d}}^{[i]}({\bm{{w}}^{(\ell)}})\}_{{d} \in \mathcal{N} \cup \mathcal{S} \cup \mathcal{B}}$, we perform gradient ascent updates on dual variables $\{\bm{\Lambda}, \bm{\Omega}\}$. These correspond to the outer maximization in \eqref{eqn:minmax opt problem}, and thus involve computation of gradient of the Lagrangian function $\mathcal{L}(\bm{w}, \bm{\Lambda}, \bm{\Omega};{\bm{w}}^{(\ell)})$ defined in \eqref{eqn:Lagrangian dual} at the current primal variable estimates $\{\hat{\bm{w}}_{{d}}^{[i]}({\bm{{w}}^{(\ell)}})\}_{{d} \in \mathcal{N} \cup \mathcal{S} \cup \mathcal{B}}$ solved distributedly via \eqref{eqn: Primal Problem}. During each iteration $i$ of Algorithm \ref{alg:dist_opt_algo main}, this update is described as
\vspace{-4mm}

{\small
\begin{align}
    &\hspace{-3mm} \bm{\Lambda}^{[i]} \hspace{-.4mm}=\hspace{-.4mm} \Bigg[\hspace{-.3mm}\bm{\Lambda}^{[i-1]} \hspace{-.3mm}+\hspace{-.5mm} \frac{\kappa~\nabla_{\bm{\Lambda} } \mathcal{L}\big(\hat{\bm{w}}_{{d}}^{[i]}({\bm{{w}}^{(\ell)}}),\hspace{-.3mm} \bm{\Lambda}^{[i-1]}\hspace{-.3mm},\hspace{-.3mm} \bm{\Omega}^{[i-1]};\hspace{-.3mm}{\bm{{w}}^{(\ell)}}\big)}{|\mathcal{N} \cup \mathcal{S} \cup \mathcal{B}|} \hspace{-.5mm}\Bigg]^{+}\hspace{-2mm} \nonumber \\
    &~ =\hspace{-.5mm} \Bigg[\hspace{-.3mm}\bm{\Lambda}^{[i-1]} \hspace{-.3mm}+ \hspace{-.3mm}\frac{\kappa}{|\mathcal{N} \cup \mathcal{S} \cup \mathcal{B}|} \underbrace{\hspace{-.3mm}\displaystyle\sum_{{d} \in \mathcal{N} \cup \mathcal{S} \cup \mathcal{B}}\hspace{-1.2mm} \small\bm{\widetilde{\mathcal{C}}}_{{d}}\big(\hat{\bm{w}}_{{d}}^{[i]}({\bm{{w}}^{(\ell)}}); {\bm{{w}}^{(\ell)}} \big)\hspace{-.3mm}}_\text{(a)}\hspace{-.9mm}\Bigg]^{+}\hspace{-2.5mm}, \label{eqn: Lambda update}\hspace{-4mm}\\
    &\nonumber \hspace{-3mm} \bm{\Omega}^{[i]} = \bm{\Omega}^{[i-1]} + \frac{\varepsilon~\nabla_{\bm{\Omega} }\mathcal{L}(\hat{\bm{w}}_{{d}}^{[i]}({\bm{{w}}^{(\ell)}}), \bm{\Lambda}^{[i-1]}, \bm{\Omega}^{[i-1]};{\bm{{w}}^{(\ell)}})}{|\mathcal{N} \cup \mathcal{S} \cup \mathcal{B}|} \\
    &\hspace{-3mm} ~~~~~ = \bm{\Omega}^{[i-1]} + \frac{\varepsilon}{|\mathcal{N} \cup \mathcal{S} \cup \mathcal{B}|} \underbrace{\displaystyle\sum_{{d} \in \mathcal{N} \cup \mathcal{S} \cup \mathcal{B}} \bm{\mathcal{G}}\big(\hat{\bm{w}}_{{d}}^{[i]}({\bm{{w}}^{(\ell)}})\big)}_\text{(b)}
    \label{eqn: Omega update},\hspace{-4mm}
\end{align}
}
\vspace{-4mm}

\noindent where $\kappa$ and $\varepsilon$ are the step sizes. 
(a) and (b) in \eqref{eqn: Lambda update} and \eqref{eqn: Omega update} cannot be computed directly due to the need for a central processor; however, we desire to update $\{\bm{\Lambda}, \bm{\Omega} \}$ locally to obtain a distributed solution. To this end, we conduct updates of dual variables via a \textit{decentralized consensus scheme},
where the dual-ascent update consists of two steps. First, local copies of the dual variables get updated at each node $d$ of the network 
\vspace{-5mm}
\begin{align}
    & \bm{\Lambda}^{[i]}_{{d}} = \bm{\Lambda}^{[i-1]}_{{d}} + \kappa ~\small\bm{\Tilde{\mathcal{C}}}_{{d}}(\hat{\bm{w}}_{{d}}^{[i]}({\bm{{w}}^{(\ell)}}); {\bm{{w}}^{(\ell)}} )\label{eqn: Lambda update Local},\\
     & \bm{\Omega}^{[i]}_{{d}} = \bm{\Omega}^{[i-1]}_{{d}} + \varepsilon~ \small\bm{{\mathcal{G}}}_{{d}}(\hat{\bm{w}}_{{d}}^{[i]}({\bm{{w}}^{(\ell)}})).
    \label{eqn: Omega update Local}
\end{align}
\vspace{-5mm}

\noindent Then, the nodes exchange their local copies with neighboring nodes to form a consensus on the dual variables as follows:
\vspace{-4mm}

{\small
\begin{equation}
\hspace{-2mm}
     [\bm{\Lambda}^{[i]}_{{d}}, \bm{\Omega}^{[i]}_{{d}}]  = \underline{\textrm{Consensus~\texttt{CE-FL}}}({d}, [\bm{\Lambda}^{[i]}_{{d}}, \bm{\Omega}^{[i]}_{{d}}]) ,
     ~ \bm{\Lambda}^{[i]}_{{d}} = \big[ \bm{\Lambda}^{[i]}_{{d}} \big]^+\hspace{-1mm}. 
     \label{eqn: combined_consensus_update}
    \hspace{-2mm}
\end{equation}}
\vspace{-4mm}

The alternating optimization of primal and dual variables described by \eqref{eqn: Primal Problem}, \eqref{eqn: Lambda update Local}-\eqref{eqn: combined_consensus_update} continues until convergence of sequence $\{\bm{\Lambda}_{{d}}$, $\bm{\Omega}_{{d}} \}_{{d} \in \mathcal{N} \cup \mathcal{S} \cup \mathcal{B}}$.
The \underline{Consensus \texttt{CE-FL}} in~\eqref{eqn: combined_consensus_update}  is described in Algorithm \ref{alg:dist_opt_consensus_algo}, which relies on local message exchange across neighboring nodes, which we discuss next. 

\textbf{Decentralized Network-Wide Consensus (Algorithm \ref{alg:dist_opt_consensus_algo}).} 
We first describe the communication graph used to synchronize the dual variables across the nodes. We consider a 0-1 edge connection between the nodes, where two nodes either engage in sharing optimization variables or do not communicate. This communication graph is formed before the ML model training, data offloading, and parameter offloading processes happen and is only used to come up with a distributed solution for the network optimization problem to orchestrate the network elements during ML model training. 
We consider a bi-level hierarchical structure for this communication graph   (see Fig.~\ref{fig:1}) wherein the UEs can possibly perform D2D communications with other UEs in their vicinity as well as uplink-downlink communications with at least one BSs (no direct communication link between DCs and UEs is assumed). We also assume that each BS is capable of communication to at least a DC, but it may or may not be communicating to other BSs.


Let ${\mathcal{H}}=({\mathcal{V},\mathcal{E}})$ denote this communication graph with vertex set $\mathcal{V}=\mathcal{N}\cup \mathcal{B} \cup\mathcal{S}$  and edge set $\mathcal{E}$, where $(d,d')\in\mathcal{E}$ implies the communication between two nodes $d,d'$ during the computation of solution of the network optimization. Also, let $\bm{{A}}=[A_{d,d'}]_{d,d' \in \mathcal{N} \cup \mathcal{S} \cup \mathcal{B}}$
denote its adjacency matrix, where
   $ {{A}}_{{d}, {d}'} = 1, ~\forall({d}, {d}') \in {\mathcal{E}}, 
    {{A}}_{{d}, {d}^{'}} = 0 ~\forall({d}, {d}^{'})  \notin {\mathcal{E}}.$
Since there is no connection between the UEs and DCs, we have
$
     {{A}}_{n, s} = 0, ~\forall s \in \mathcal{S},n\in\mathcal{N}.
    $
    Also, since each UE is connected to at least one BS, we have $ \exists b \in \mathcal{B}: ~{{A}}_{n, b} = 1,~\forall n\in\mathcal{N}$.
Finally, since each BS engages in variable transfer to at least one DC:
$
    \exists s \in \mathcal{S}:  {{A}}_{b, s} = 1,~\forall b\in\mathcal{B}.
$
Also, we assume that each DC is at least connected to another DC:
$
   \exists s'\in\mathcal{S}: {{A}}_{s, s'} = 1, ~\forall s \in \mathcal{S}$.

We next describe the consensus procedure performed over $\mathcal{H}$ to locally update the dual variables in \eqref{eqn: Lambda update}-\eqref{eqn: Omega update}. Our procedure is described in Algorithm \ref{alg:dist_opt_consensus_algo}. For notation simplicity, we represent the dual variables via a vector $\bm{\Gamma} = [\bm{\Lambda}, \bm{\Omega}]$. At iteration $i$ of \underline{PD \texttt{CE-FL}} (Algorithm \ref{alg:dist_opt_algo main}) the call of \underline{Consensus \texttt{CE-FL}} subroutine (Algorithm \ref{alg:dist_opt_consensus_algo}) via \eqref{eqn: combined_consensus_update} is triggered with initialization
$ \bm{\Gamma}^{\{ 0\}}_d = [\bm{\Lambda}^{(i)}_d, \bm{\Omega}^{(i)}_d]   .$
Afterward, at each iteration $j$ of Algorithm \ref{alg:dist_opt_consensus_algo}, 
each network node ${d} \in \mathcal{N} \cup \mathcal{S} \cup \mathcal{B} $ sends its current local value of dual variables, i.e., $\bm{\Gamma}^{\{ j - 1\}}_{d}$, and in turn receives the value of which from its neighbors, i.e., $\{\bm{\Gamma}^{\{ j -1 \}}_{d'}\}_{({d}, {d}') \in {\mathcal{E}}}$. 
Subsequently, the following update is executed at each network node ${d}$:
\vspace{-1mm}
\begin{align}
    \bm{\Gamma}^{\{ j\}}_{d} = \bm{\Gamma}^{\{ j - 1\}}_{d} {{W}}_{{d}, {d}}+\displaystyle \sum_{d':{({d}, {d}') \in {\mathcal{E}}}} \bm{\Gamma}^{\{ j - 1\}}_{d'} {{W}}_{{d}, {d}'}, \label{eqn: consensus update equation}
\end{align}
\vspace{-4mm}

\noindent where ${{W}}_{{d}, {d}'}$ is the \textit{weight} that node $d$ assigns to its neighbor $d'$.
It is important to construct ${{W}}_{{d}, {d}'}$ for all pairs of neighboring nodes $(d, d')$ so that the local estimates of the dual variables asymptotically attain their global values.
Letting $\textrm{degree}(d)$ denote the degree of node $d$, we consider  the {consensus weights} as ${{W}}_{{d}, {d}}=1-z\times \textrm{degree}(d)$, and  ${{W}}_{{d}, {d'}}=z$  $\forall ({d}, {d}') \in {\mathcal{E}}$, where $z$ is a constant satisfying $z< \frac{1}{\max_{d\in \in \mathcal{N} \cup \mathcal{S} \cup \mathcal{B} ~\textrm{degree}(d)}}$, which is proven to show fast convergence for consensus~\cite{xiao2004fast}. With the knowledge of $z$ (e.g., trivially chosen as $z=\frac{1}{|\mathcal{N} \cup \mathcal{S} \cup \mathcal{B}|}-\hat{z}$ for a small $\hat{z}>0$), the consensus weights can be distributedly obtained. 
We next study the convergence of our optimization solver 
(proof provided in Appendix~\ref{NOVA stationary proof}).

\begin{figure*}[t]
\vspace{-4mm}
\centering
\hspace{-8mm}
\hfil
\subfloat[\centering{Data distribution}]{\includegraphics[scale = 0.27]{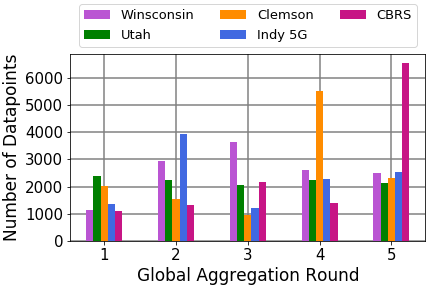}
\label{fig:active_data_dist_plot}}
\hspace{-8mm}
\hfil
\subfloat[\centering{E2E data rate distribution}]{\includegraphics[scale = 0.27]{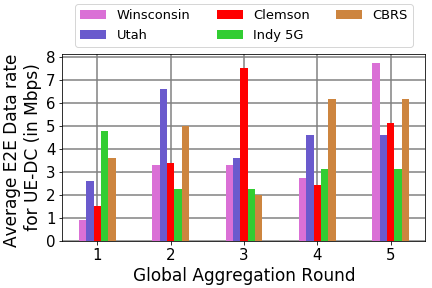}
\label{fig:active_data_rate_plot}}
\hspace{-1mm}
\hfil
\hspace{-5mm}\subfloat[Aggregator switching 
]{ \includegraphics[scale = 0.27]{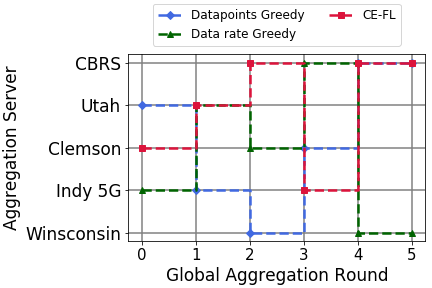}
\label{fig:active_switching}}
\vspace{-1mm}
\caption{{\color{black}Datapoint distribution, E2E data rate distribution, and \texttt{CE-FL} aggregation DC switching pattern against baseline methods.}}
\label{fig:active_selection_plots_1}
\vspace{-.1mm}
\end{figure*}

\vspace{-1.5mm}
\begin{theorem}[Convergence of the Optimization Solver] If $J\rightarrow \infty$ (see Algorithm~\ref{alg:dist_opt_consensus_algo}), the sequence $\{\bm{w}^{(\ell)}\}$ generated by Algorithm \ref{alg:dist_NOVA main} is feasible for $\bm{\mathcal{P}}$ and non-increasing, which asymptotically reaches a stationary solution of $\bm{\mathcal{P}}$.
\end{theorem}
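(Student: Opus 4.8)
The plan is to recognize Algorithm~\ref{alg:dist_NOVA main} as a successive convex approximation (SCA) scheme for the non-convex program $\widehat{\bm{\mathcal{P}}}$ and to show that, once the distributed inner loop (Algorithms~\ref{alg:dist_opt_algo main}--\ref{alg:dist_opt_consensus_algo}) is run to exactness, the SCA convergence machinery of~\cite{paralleldistbook} applies. I would first record the three structural properties of the surrogate built in~\eqref{eqn:prox grad obj 1}--\eqref{eqn:prox grad non-convex constr 2}: (i) \emph{tangency} at the base point, $\widetilde{\mathcal{J}}(\bm{w}^{(\ell)};\bm{w}^{(\ell)})=\mathcal{J}(\bm{w}^{(\ell)})$ and $\nabla_{\bm{w}}\widetilde{\mathcal{J}}(\bm{w}^{(\ell)};\bm{w}^{(\ell)})=\nabla_{\bm{w}}\mathcal{J}(\bm{w}^{(\ell)})$, and likewise $\widetilde{\bm{\mathcal{C}}}(\bm{w}^{(\ell)};\bm{w}^{(\ell)})=\bm{\mathcal{C}}(\bm{w}^{(\ell)})$ with matching gradients; (ii) the \emph{upper bound} $\widetilde{\bm{\mathcal{C}}}(\bm{w};\bm{w}^{(\ell)})\geq\bm{\mathcal{C}}(\bm{w})$ for every $\bm{w}$, which holds because $L_{\nabla\bm{\mathcal{C}}}$ is taken to be a Lipschitz modulus of $\nabla\bm{\mathcal{C}}$; and (iii) $\lambda_1$-strong convexity of $\widetilde{\mathcal{J}}(\cdot;\bm{w}^{(\ell)})$ together with convexity of the surrogate constraint set. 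Properties (i) and (iii) make $\widehat{\bm{\mathcal{P}}}_{\bm{w}^{(\ell)}}$ a strongly convex program with a unique minimizer $\hat{\bm{w}}(\bm{w}^{(\ell)})$.

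Next I would argue that the distributed inner solver returns this exact minimizer as $J\to\infty$. The weight matrix $\bm{W}$ with $W_{d,d}=1-z\,\textrm{degree}(d)$ and $W_{d,d'}=z$ over the connected graph $\mathcal{H}$ is symmetric, doubly stochastic, and, for $z<1/\max_d\textrm{degree}(d)$, has a strictly positive spectral gap~\cite{xiao2004fast}; hence the $J$-round averaging in Algorithm~\ref{alg:dist_opt_consensus_algo} drives every local dual copy $[\bm{\Lambda}_d^{[i]},\bm{\Omega}_d^{[i]}]$ to the true network average. Therefore, as $J\to\infty$, the per-node updates~\eqref{eqn: Lambda update Local}--\eqref{eqn: combined_consensus_update} reproduce exactly the centralized dual-ascent steps~\eqref{eqn: Lambda update}--\eqref{eqn: Omega update}, while the parallel gradient-projection minimizations~\eqref{eqn: Primal Problem} assemble the exact solution of the centralized inner minimization in~\eqref{eqn:minmax opt problem}. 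Since $\widehat{\bm{\mathcal{P}}}_{\bm{w}^{(\ell)}}$ is convex with strongly convex objective, this exact primal--dual iteration converges, for $\kappa,\varepsilon$ chosen small enough, to its unique primal--dual optimum, so $\hat{\bm{w}}(\bm{w}^{(\ell)})$ is indeed the optimizer of the surrogate.

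I would then establish the two ``easy'' conclusions. \emph{Feasibility} propagates by induction: $\bm{w}^{(0)}$ is feasible for $\bm{\mathcal{P}}$ (hence for $\widehat{\bm{\mathcal{P}}}$); if $\bm{w}^{(\ell)}$ is feasible, tangency gives $\widetilde{\bm{\mathcal{C}}}(\bm{w}^{(\ell)};\bm{w}^{(\ell)})=\bm{\mathcal{C}}(\bm{w}^{(\ell)})\leq\mathbf{0}$ so $\bm{w}^{(\ell)}$ is surrogate-feasible and $\hat{\bm{w}}(\bm{w}^{(\ell)})$ exists and is surrogate-feasible; by convexity of the surrogate constraints the convex combination $\bm{w}^{(\ell+1)}=(1-\zeta)\bm{w}^{(\ell)}+\zeta\hat{\bm{w}}(\bm{w}^{(\ell)})$ still satisfies $\widetilde{\bm{\mathcal{C}}}(\bm{w}^{(\ell+1)};\bm{w}^{(\ell)})\leq\mathbf{0}$, $\sum_d\bm{\mathcal{G}}(\bm{w}_d^{(\ell+1)})=\mathbf{0}$, $\bm{\mathcal{D}}_d(\bm{w}_d^{(\ell+1)})\leq\mathbf{0}$, and the upper-bound property then yields $\bm{\mathcal{C}}(\bm{w}^{(\ell+1)})\leq\widetilde{\bm{\mathcal{C}}}(\bm{w}^{(\ell+1)};\bm{w}^{(\ell)})\leq\mathbf{0}$, i.e.\ $\bm{w}^{(\ell+1)}$ is feasible for $\bm{\mathcal{P}}$. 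For the \emph{monotone decrease}, writing $L_{\mathcal{J}}$ for a Lipschitz constant of $\nabla\mathcal{J}$ on the (compact) feasible set, combining the descent inequality for $\mathcal{J}$ with the update~\eqref{eqn:NOVA param update}, gradient consistency (i), and the bound $\nabla\mathcal{J}(\bm{w}^{(\ell)})^\top(\hat{\bm{w}}(\bm{w}^{(\ell)})-\bm{w}^{(\ell)})\leq-\lambda_1\|\hat{\bm{w}}(\bm{w}^{(\ell)})-\bm{w}^{(\ell)}\|^2$ (which follows from strong convexity of $\widetilde{\mathcal{J}}$ and the optimality of $\hat{\bm{w}}(\bm{w}^{(\ell)})$) gives $\mathcal{J}(\bm{w}^{(\ell+1)})\leq\mathcal{J}(\bm{w}^{(\ell)})-\zeta\big(\lambda_1-\tfrac{\zeta L_{\mathcal{J}}}{2}\big)\|\hat{\bm{w}}(\bm{w}^{(\ell)})-\bm{w}^{(\ell)}\|^2$, a strict decrease whenever $\hat{\bm{w}}(\bm{w}^{(\ell)})\neq\bm{w}^{(\ell)}$ and $\zeta<\min\{1,2\lambda_1/L_{\mathcal{J}}\}$.

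Finally, for \emph{asymptotic stationarity}: the feasible set of $\bm{\mathcal{P}}$ is compact (all variables lie in bounded boxes or are pinned to bounded quantities, and the remaining directions are penalized by the objective), so $\{\mathcal{J}(\bm{w}^{(\ell)})\}$ is non-increasing and bounded below; summing the descent inequality forces $\|\hat{\bm{w}}(\bm{w}^{(\ell)})-\bm{w}^{(\ell)}\|\to 0$. On a convergent subsequence $\bm{w}^{(\ell_k)}\to\bm{w}^{\infty}$ one also has $\hat{\bm{w}}(\bm{w}^{(\ell_k)})\to\bm{w}^{\infty}$; writing the KKT system of $\widehat{\bm{\mathcal{P}}}_{\bm{w}^{(\ell_k)}}$ at $\hat{\bm{w}}(\bm{w}^{(\ell_k)})$, invoking boundedness of the associated multipliers under a constraint qualification, and passing to the limit using tangency, one recovers exactly the KKT system of $\widehat{\bm{\mathcal{P}}}$ at $\bm{w}^{\infty}$; since $\widehat{\bm{\mathcal{P}}}$ is merely $\bm{\mathcal{P}}$ lifted by the consensus copies and relaxed integers, $\bm{w}^{\infty}$ is a stationary point of $\bm{\mathcal{P}}$. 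I expect the main obstacle to be this last limiting step together with the exactness claim for the inner solver: one must guarantee that the surrogate multipliers stay uniformly bounded when passing to the limit, which requires a constraint qualification along the trajectory, and this is delicate precisely because of the non-convex ``binarization'' constraints~\eqref{con1add}--\eqref{con11add} — every feasible point makes them active, so their gradients can become nearly parallel to those of the simplex constraints~\eqref{con21add}, threatening MFCQ near integral iterates.
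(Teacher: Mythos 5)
Your proposal is correct and follows the same three\hyphenation{part}-part skeleton as the paper's argument (exactness of the inner distributed solver as $J\to\infty$, feasibility for $\bm{\mathcal{P}}$ via the upper-bounding surrogate constraints, and monotone descent leading to stationarity), but it is a genuinely more self-contained route: where you derive the tangency/upper-bound/strong-convexity properties of the surrogate, the feasibility induction, the explicit descent estimate $\mathcal{J}(\bm{w}^{(\ell+1)})\leq\mathcal{J}(\bm{w}^{(\ell)})-\zeta(\lambda_1-\zeta L_{\mathcal{J}}/2)\|\hat{\bm{w}}(\bm{w}^{(\ell)})-\bm{w}^{(\ell)}\|^2$, and the limiting KKT argument from first principles, the paper instead verifies the hypotheses of external results and invokes them wholesale --- Theorem~1.1 of~\cite{GPAarticle} for the projected-gradient primal step, Theorem~1 of~\cite{xiao2004fast} (via the conditions in~\cite{johansson2008subgradient}) for the consensus limit in Algorithm~\ref{alg:dist_opt_consensus_algo}, and Theorem~4, Lemma~9, and Theorem~2 of~\cite{scutari2016parallel} for, respectively, inner-loop convergence, feasibility, and the non-increasing/stationarity conclusion. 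What your reconstruction buys is transparency about exactly which structural facts are being used at each step; what the citation route buys is brevity and rigor inherited from the referenced framework. Your closing caveat is well taken and is in fact something the paper's proof silently absorbs into the claim that ``Assumptions 1--3 and 5 of the SCA framework are satisfied'': the binarization constraints~\eqref{con1add}--\eqref{con11add} are active with value zero at \emph{every} feasible point, and near integral iterates their gradients interact with the simplex equalities~\eqref{con21add} in a way that makes the required constraint qualification (and hence boundedness of the surrogate multipliers in the limiting KKT step) nontrivial to verify; neither your sketch nor the paper actually checks this, so it is the one step of either argument that would need to be pinned down for a fully rigorous proof.
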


{\color{black} In Appendix~\ref{App:complexity}, we provide the complexity analysis for our~{\tt CE-FL} methodology.}

\begin{table}[b]
\caption{{\color{black}Energy consumption comparison across varying target accuracies.}} 
 \vspace{-1.4mm}
{\scriptsize
\begin{tabularx}{0.5\textwidth}{c *{6}{Y}}
\toprule[.2em]
\multirow{0}{*}{} & \multicolumn{3}{c}{\bf{F-MNIST} (Target Acc.)} & \multicolumn{3}{c}{\bf{CIFAR-10} (Target Acc.)} \\
\cmidrule(lr){2-4} \cmidrule{5-7}
& \bf{60 \%} & \bf{70 \%} & \bf{80 \%} & \bf{40 \%} & \bf{50 \%} & \bf{60 \%}\\
\midrule
{\tt CE-FL} (in KJ) & 19.5 & 32.8  & 47.3  & 17.3 & 26.6 & 42.7 \\
FedNova (in KJ) & 23.4 & 49.2 & 78.7  & 24.7 &  35.1 & 60.3 \\
FedAvg (in KJ) & 27.9 & 57.1 & 83.2  & 29.1 &  39.0 & 67.8 \\
\midrule

 vs FedNova Savings (\%) & 16.7 & 33.3 & 39.9 & 30 & 24.2 & 29.2 \\
 vs FedAvg Savings (\%) & 30.1 & 42.6 & 43.1 & 40.5 & 31.8 & 37.0 \\
\bottomrule
\end{tabularx}

}
\label{tab:ML_energy_table}
\end{table}
\vspace{-5mm}
\section{Numerical Evaluations}\label{sec:NumExp}
\noindent In this section, we first describe the simulation setup (Sec.~\ref{Sec:Sim1}) and then discuss the results (Sec.~\ref{Sec:Sim2}).


\vspace{-4mm}
\subsection{Simulations Setup and Testbed Configuration}\label{Sec:Sim1}
We acquire realistic models of communication models and the units' power consumption through data gathering from 5G/4G and CBRS network testbeds that include BSs, UEs, and DCs. 
The data collection technique is explained in Appendix \ref{5G appendix}. The DCs for the 5G/4G data collection are located at the Indy 5G Zone \cite{indy5gzone}, Discovery Park District \cite{cbrsconvergence}, Wisconsin, Utah, and Clemson, respectively \cite{cloudlab}. Following that, a larger default network setting was generated for the numerical evaluations by post-processing the measured data (see Appendix \ref{5G scaled up dataset creation}). \textcolor{black}{
The capacity of DCs is chosen $R_{s}^{\mathsf{max}}\in [40,50] \text{Gbps}$, $\forall s$, and capacity of each BS-DC link is chosen as $R_{b,s}^{\mathsf{max}}\in [3,4]\text{Gbps} $, $\forall b,s$.
} The created dataset consists of $20$ UEs, $10$ BSs, and $5$ DCs, with each server along with $2$ BSs and $4$ UEs comprising a \textit{sub-network}.  Each sub-network is characterized by high intra-network and low inter-network data transfer rates.
\begin{figure}[t]
\vspace{-4mm}
\centering
\hspace{-8mm}
\hfil
\subfloat[\centering{Delay incurred}]{\includegraphics[scale = 0.27]{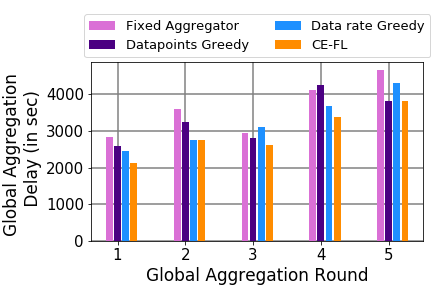}
\label{fig:active_delay}}
\hspace{-1mm}
\hfil
\hspace{-5mm}\subfloat[Energy consumption 
]{ \includegraphics[scale = 0.262]{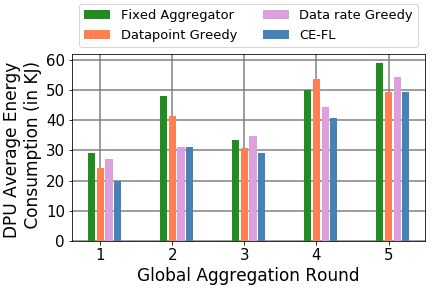}
\label{fig:active_energy}}
\vspace{-1mm}
\caption{{\color{black}Delay and energy comparison between \texttt{CE-FL} and the baselines.}} \label{fig:active_selection_plots}
\vspace{-.2mm}
\end{figure}
 \begin{table}[b]
\caption{{\color{black}Model training delay comparison across varying target accuracies.}}
{\scriptsize
\begin{tabularx}{0.50\textwidth}{c *{6}{Y}}
\toprule[.2em]
\multirow{0}{*}{} & \multicolumn{3}{c}{\bf{F-MNIST} (Target Acc.)} & \multicolumn{3}{c}{\bf{CIFAR-10} (Target Acc.)} \\
\cmidrule(lr){2-4} \cmidrule{5-7}
& \bf{60 \%} & \bf{70 \%} & \bf{80 \%} & \bf{40 \%} & \bf{50 \%} & \bf{60 \%}\\
\midrule
{\tt CE-FL} (in sec) & 2721.2 & 3521.4 & 4577.8 & 2850.7 & 3988.6  & 4912.5 \\
FedNova (in sec) & 3035.2 &  4359.9 & 5432.6    & 3271.5 & 4729.9 & 5873.4 \\
FedAvg (in sec) & 3721.6 &  4882.7 & 6156.3    & 4022.1 & 5623.9 & 6433.8 \\
\midrule
 vs FedNova Savings (\%) & 10.3 &  19.2 & 15.7 & 12.9 & 15.7 & 16.4 \\
 vs FedAvg Savings (\%) & 26.9 & 27.9 & 25.6 & 29.1 & 29.1 & 23.7 \\
\bottomrule
\end{tabularx}
}
\label{tab:ML_delay_table}
\end{table}
\vspace{-4mm}
\subsection{Results and Discussion}\label{Sec:Sim2}

The existence of several variables and their 
coupled behavior makes the analysis of $\bm{\mathcal{P}}$ hard. Thus, we perform an \textit{ablation study} on $\bm{\mathcal{P}}$ by isolating different sets of optimization variables to show the behavior of $\bm{\mathcal{P}}$ under different network settings. The default network setting is described in Appendix \ref{Network Settings for experiments}.   
\subsubsection{Performance of \texttt{CE-FL} for Dynamic ML Model Training}
We compare the ML model performance of \texttt{CE-FL} in terms of energy consumption and delay against FedNova~\cite{wang2020tackling} in terms of classification accuracy obtained on Fashion-MNIST \cite{xiao2017/online} and CIFAR-10 \cite{krizhevsky2009learning} datasets in Table \ref{tab:ML_energy_table} and \ref{tab:ML_delay_table} respectively {(See Appendix \ref{Network Settings for experiments} for the description of the datasets)}. {\color{black}FedNova is an advanced implementation of FedAvg algorithm and has been shown to outperform FedAvg udner non-uniform SGD iterations across ML data processing units~\cite{wang2020tackling}.} We consider time-varying datasets at the UEs, where at each global aggregation round, UEs acquire datasets with sizes distributed according to normal distribution with mean $2000$ and variance $200$. 
{We note that ML training under FedNova was performed with average CPU/data-processing frequencies, mini-batch sizes and number of SGD iterations at the DPUs.} 
{\color{black}Tables \ref{tab:ML_energy_table}, \ref{tab:ML_delay_table} demonstrate the energy and delay savings that \texttt{CE-FL} obtains against FedNova and AedAvg upon reaching different classification accuracies.
We observe that \texttt{CE-FL} outperforms FedNova, which in turn beats FedAvg, in terms of network delay and energy costs for chosen target classification accuracies on aforementioned datasets. }

\subsubsection{Floating aggregation point in {\tt CE-FL}}
We examine how time varying and unequal data distributions at the DPUs control the aggregation DC throughout the ML training in Fig.~\ref{fig:active_selection_plots_1}. 


{\color{black} For the experiments, we consider two greedy baselines: (i) datapoint greedy, and (ii) data rate greedy. The datapoint greedy strategy chooses the DC whose \textit{subnetwork} has the highest concentration of datapoints at each global aggregation as the floating aggregator. 
Also, data rate greedy method designs a strategy based on the end-to-end (E2E) data transfer rates between UEs and the DCs.
Mathematically, we define the E2E data-rate between arbitrary UE $n$ and DC $s$ as
\begin{align}
    R_{n,s}^{\mathsf{E2E}, (t)} = \underset{b \in \mathcal{B}}{\max} \left\{\frac{1}{\frac{1}{R^{(t)}_{n,b}} + \frac{1}{R^{\mathsf{max}}_{b,s}}} \right\}.
\end{align}
Then, at each round of global aggregation, the DC with the highest average E2E data rate across all the UEs is chosen as the floating aggregator DC.
We depict the evolution of datapoint concentration and average E2E data rates at the DC across global aggregation rounds in Figs. \ref{fig:active_data_dist_plot} and \ref{fig:active_data_rate_plot}, and how the choice of the aggregator varies in the greedy strategies in comparison to {\tt CE-FL} in Fig. \ref{fig:active_switching}.
Comparing the data distribution across the network depicted in Fig.~\ref{fig:active_data_dist_plot} and the optimal aggregator selected in {\tt CE-FL} in~Fig.~\ref{fig:active_switching} reveals an interesting phenomenon. In particular, the optimal aggregator selection of {\tt CE-FL} matches that of the {datapoint greedy method} (i.e., selecting the DC in the area with the highest data concentration)
 when data concentrations are extremely skewed across the network (i.e., $t = 5$). A similar takeaway can be obtained via comparing Fig. \ref{fig:active_data_rate_plot} and \ref{fig:active_switching}, where {\tt CE-FL} only matches that of the data rate greedy strategy (at $t=2$) when the E2E data rate is skewed toward Utah DC, which also has a high data concentration.}


 However, Fig.~\ref{fig:active_switching} reveals that {\tt CE-FL} does not always favor the DCs with the highest data concentration (i.e., $t \in \{ 1,2,3,4\}$). This is due to the fact that congestion across the links and heterogeneity of the network elements in terms of computation and proximity are further considered in active aggregation DC selection in~$\bm{\mathcal{P}}$. 

We then investigate the energy and delay savings obtained under \texttt{CE-FL} active aggregator selection against fixed aggregator strategy (averaged over the 5 DCs) and the greedy method. Fig.~\ref{fig:active_delay}, \ref{fig:active_energy} confirms the efficacy of the active selection paradigm in {\tt CE-FL} in terms of network resource savings,  highlighting the need to jointly taking into account   the heterogeneities of network elements, congestion of the links, and uneven data concentrations as in {\tt CE-FL} to select the aggregation DC.

\subsubsection{Impact of model drift on the behavior of {\tt CE-FL}\label{subsection: drift delay}}
Fig.~\ref{fig:model_drift_plots} reveals how time varying model drift dictates the delay of  conducting global aggregation rounds and the frequency of data processing at the UEs. It can be seen that increasing the model drift  results in reduced global aggregation delays and faster data processing. This implies that when the temporal variations of the local datasets at the DPUs is large, to have an adequate ML model for the online datasets at the devices, {\tt CE-FL} promotes rapid global aggregations and fast data processing. 
\begin{figure}[t]
\vspace{-4mm}
\centering
\hspace{-8mm}
\hfil
\subfloat[\centering{Drift vs. Delay Trade-off}]{\includegraphics[scale = 0.27]{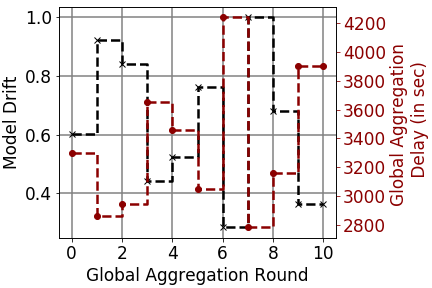}
\label{fig:psi_fn_of_sigma}}
\hspace{-1mm}
\hfil
\hspace{-5mm}\subfloat[Drift vs. CPU frequency Trade-off 
]{ \includegraphics[scale = 0.27]{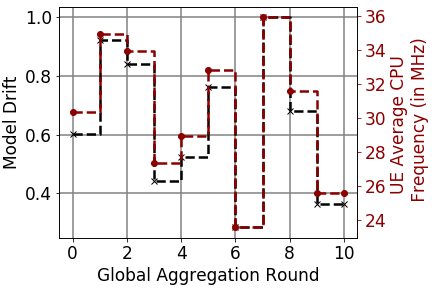}
\label{fig:model_drift_cpu_freq_plot}}
\vspace{-1.5mm}
\caption{Impact of model drift on system behavior.} \label{fig:model_drift_plots}
\vspace{-2mm}
\end{figure}

\begin{figure}[t]
\centering
\hspace{-8mm}
\hfil
\subfloat[\centering{ML loss weight vs. average mini-batch size of SGD}]{\includegraphics[scale = 0.27]{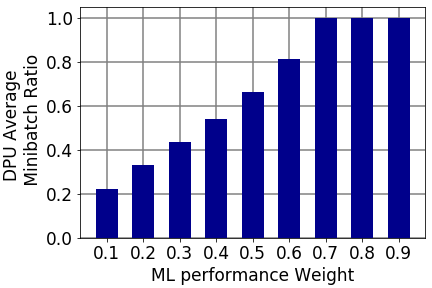} \label{fig: ml_importance_minibatch_plot}}
\hspace{-1mm}
\hfil
\hspace{-5mm}\subfloat[\centering{ML loss weight vs. average energy consumption of DPUs}
]{ \includegraphics[scale = 0.27]{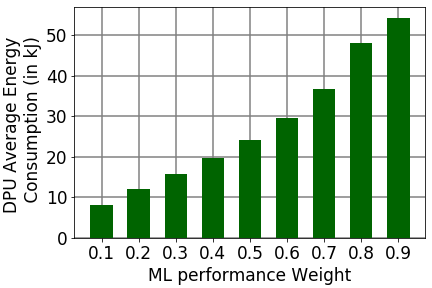} \label{fig: ml_importance_energy_plot}
}
\vspace{-1.5mm}
\caption{Impact of ML loss importance on the system.} \label{fig: ml_importance_plots}
\vspace{-1mm}
\end{figure}


\subsubsection{Impact of ML performance weight on local model training\label{subsection: ML weight tradeoff}}
We characterize the impact of ML performance weight  (i.e., $\xi_1$ in the objective of $\bm{\mathcal{P}}$) on the mini-batch ratios and the energy consumption at the DPUs in Fig. \ref{fig: ml_importance_plots}. As can be seen from the two subplots, increasing the ML performance importance results in an increase in the SGD mini-batch ratios to have more accurate local models and consequently increases the energy consumption at the DPUs. This implies that in applications where the accuracy of the ML model is of particular importance, {\tt CE-FL} will sacrifice resource savings to obtain an ML model with a better accuracy.


\subsubsection{Decentralized network optimization solver\label{subsection: opt solver performance}}
We first develop the centralized solver for $\bm{\mathcal{P}}$ and investigate the performance of our  decentralized solver against it for a network with $|\mathcal{N}| = 20$, $|\mathcal{B}| = 10$ and $|\mathcal{S}| = 5 $. The centralized solver solves $\bm{\mathcal{P}}$ via Algorithm \ref{alg:dist_NOVA main} and Algorithm \ref{alg:dist_opt_algo main} while assuming the knowledge of all the intrinsic parameters of all the UEs, DCs and BSs used in $\bm{\mathcal{P}}$. Thus, the centralized solver performs global dual updates \eqref{eqn: Lambda update}, \eqref{eqn: Omega update} without the requirement \underline{Consensus \texttt{CE-FL}} (Algorithm  \ref{alg:dist_opt_consensus_algo}). The comparisons are depicted in Fig.~\ref{Fig:centvsDec}. As can be seen from Fig.~\ref{fig:consensus_gap_plot}, our distributed solver has a comparable performance to the centralized counterpart under various consensus rounds $J \in \{ 10, 50, 70\}$ (see Algorithm \ref{alg:dist_opt_consensus_algo}); as the number of consensus rounds increases the gap between the performance of the two narrows.
   We next demonstrate the performance of our distributed solver under varying network sizes while keeping number of consensus rounds $J=30$ in Fig.~\ref{fig:network_size_ablation_plot}. We vary the number of UEs as $|\mathcal{N}| \in \{ 10, 15, 20, 30\}$ while fixing  $|\mathcal{B}| = 10$ and $|\mathcal{S}| = 5 $. From Fig.~\ref{fig:network_size_ablation_plot}, it can be seen that increasing the size of the network indeed improves the solution of the solver due to processing larger number of datapoints across the DPUs leading to a better ML performances. However, \ref{fig:network_size_ablation_plot} also highlights the shrinking gains of increasing the number of UE devices, where an initial increase in  $|\mathcal{N}| = 10$ to $|\mathcal{N}| = 15$ results in significant performance enhancement; whereas, further increase in the number of UEs doesn't translate to notable improvements. This indicates that as the number of DPUs are increased beyond a certain limit, the ML performance gains are obscured by the larger cumulative network energy consumption and network delays attributed to data offloading, ML processing and aggregations.

\begin{figure}[t]
\vspace{-4mm}
\centering
\hspace{-8mm}
\hfil
\subfloat[\centering{Centralized vs. decentralized solution for different number of consensus rounds}]{ \includegraphics[scale = 0.26]{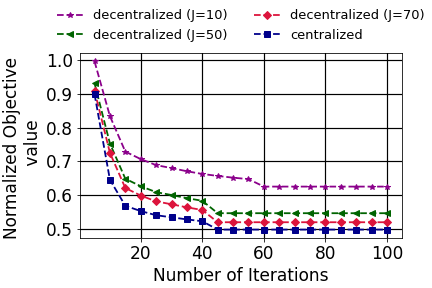} \label{fig:consensus_gap_plot}}
\hspace{-1mm}
\hfil
\hspace{-4mm}\subfloat[\centering{Decentralized algorithm convergence for different number of UEs in the system}
]{ \includegraphics[scale = 0.26]{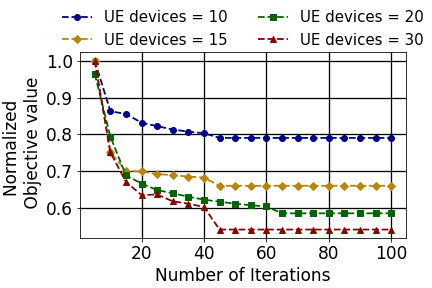} \label{fig:network_size_ablation_plot}
}
\vspace{-1.5mm}
\caption{Performance of our decentralized optimization solver.\label{Fig:centvsDec}}
\end{figure}

\vspace{-3mm}
\section{Conclusion and Future Work} \label{sec:conclusion}
\noindent We proposed {\tt CE-FL}, which presumes a scenario in which the data processing for ML model training occurs simultaneously across the DCs and the UEs, which is enabled via offloading a part of the local datasets of the UEs to the DCs through the BSs. {\tt CE-FL} further assumes a realistic scenario in which the number of datapoints and the data distribution across the UEs varies over time and incorporates the concept of floating aggregation DC to the distributed ML model training. We analytically characterized  the ML performance of  {\tt CE-FL} and obtained its convergence behavior. We then formulated network-aware {\tt CE-FL} as an optimization problem, which will lead to a joint load balancing across the UEs and DCs and efficient data routing across the   network hierarchies. We then developed a distributed optimization solver to solve our formulated problem via drawing a connection between consensus-based optimization techniques and surrogate approximation methods. For the future work, studying the performance of {\tt CE-FL} under device dropouts, link failures, and asynchronous model transfers can be considered. Also, we have done some experimental works in~\cite{nguyen2021fly} to quantify the effect of varying aggregator in a decentralized setting under proof-of-work-based metrics, the extension of which to concretized formulations is open.

\vspace{-3mm}
\section{Acknowledgments}
The authors would like to thank Hyoyoung Lim for her help in data collection from 5G testbed.
\vspace{-2mm}
\bibliographystyle{IEEEtran}
\bibliography{references}
\vspace{-14mm}
\begin{IEEEbiographynophoto}{Bhargav Ganguly (S'22)} is a Ph.D. student at Purdue University. He received his M.Tech. in EE from IIT Kanpur in 2019.
\end{IEEEbiographynophoto}
\vspace{-13.5mm}
\begin{IEEEbiographynophoto}{Seyyedali Hosseinalipour (M'20)}
received his Ph.D. in EE from NCSU in 2020. He is currently an Assistant Professor at  University at Buffalo (SUNY). 
\end{IEEEbiographynophoto}
\vspace{-13.5mm}
\begin{IEEEbiographynophoto}{Kwang Taik Kim (SM'22)} received his Ph.D. in ECE from Cornell University in 2008. He is currently a Research Assistant Professor at Purdue University. 
\end{IEEEbiographynophoto}
\vspace{-13.5mm}
\begin{IEEEbiographynophoto}{Christopher G. Brinton (SM'20)}
is an Assistant Professor of ECE at Purdue University. He received his Ph.D. in EE from Princeton University in 2016.
\end{IEEEbiographynophoto}
\vspace{-13.5mm}
\begin{IEEEbiographynophoto}{Vaneet Aggarwal (SM'15)} received his Ph.D. in EE from Princeton University in 2010. He is currently a Professor at Purdue University.
\end{IEEEbiographynophoto}
\vspace{-13.5mm}
\begin{IEEEbiographynophoto}{David Love (F'15)} is Nick Trbovich Professor of ECE at Purdue University. He received his Ph.D. in EE from University of Texas at Austin in 2004.
\end{IEEEbiographynophoto}
\vspace{-13.5mm}
\begin{IEEEbiographynophoto}{Mung Chiang (F'12)} is the John A. Edwardson Dean of the College of Engineering, Roscoe H. George Distinguished Professor of ECE, and Executive Vice President of Purdue University. He received his Ph.D. from Stanford University in  2003.
\end{IEEEbiographynophoto}

\newpage
\onecolumn
\appendices

\section{Notations Used in the Proofs}
For the ease of our analysis and presentation of results, we define the following quantities:
\begin{align}
    & p_{i}^{(t)} = \frac{ D_i^{(t)}}{D^{(t)}}, \label{eqn: ML data ratio definition} \\
    & a_{{i,\ell}}^{(t)} = (1-\eta\mu)^{\gamma_{i}^{(t)}-1-\ell}, \\
    & \mathbf{a}_{i}^{(t)} = \big[ a_{{i,0}}^{(t)}, \cdots , a_{{i,\gamma_{i}^{(t)} - 1}}^{(t)}\big] .
    \end{align}
    The normalized accumulated stochastic gradient and full-batch gradient for device $i$ are denoted by $\mathbf{d}_{{i}}({t})$ and $\mathbf{h}_{{i}}({t})$, respectively
\begin{align}
  & \mathbf{d}_{{i}}^{({t})} = \frac{1}{\|\mathbf{a}_{i}^{(t)}\|_{1}} \sum_{\ell = 0}^{\gamma_{i}^{(t)} - 1} a_{{i,\ell}}^{(t)} \widetilde{\nabla} F^{(t)}_{i}(\mathbf{x}_{i}^{(t, \ell)}), \\
    & \mathbf{h}_{i}^{(t)} = \frac{1}{\|\mathbf{a}_{i}^{(t)}\|_{1}} \displaystyle \sum_{\ell = 0}^{\gamma_{i}^{(t)} - 1} a_{{i,\ell}}^{(t)} \nabla F_{i}^{(t)}(\mathbf{x}_i^{(t,\ell)}).
\end{align}
At the instance of global aggregation at round $t$, the aggregator server aggregates the local models using the rule
\begin{align}
    \mathbf{x}^{(t + 1)} -\mathbf{x}{(t)}
    & = -\vartheta \displaystyle \sum_{i \in \mathcal{N} \cup \mathcal{S}} p_i^{(t)} \eta \mathbf{d}_i^{(t)} .
\end{align}
Also, we note that the global loss function we aim to minimize can be alternatively expressed as:
\begin{align}
    {F}^{(t)}(\mathbf{x}) = \displaystyle \sum_{i \in \mathcal{N} \cup \mathcal{S}} p_i^{(t)}  F_{i}^{(t)}(\mathbf{x}).
\end{align}

\section{Variance of SGD Iterations}
\begin{proposition}
During each aggregation round $t$, for each device/server $i \in \mathcal{N} \cup \mathcal{S}$, the variance of stochastic gradient during mini-batch gradient descent iteration $k$, i.e., $\widetilde{\nabla} F_i^{(t)}(\mathbf{x})$ is upper-bounded by
\begin{align}
    \mathbb{E}[ \|\widetilde{\nabla} F_i^{(t)}(\mathbf{x}) - \nabla F_{i}^{(t)}(\mathbf{x}) \|^2 ] & \leq \frac{2(1-m_{i}^{(t)})({{D}}_{i}^{(t)} -1)}{m_{i}^{(t)} {({D}}_{i}^{(t)})^2} (\Tilde{\sigma}_{i}^{(t)})^2 \Theta_i^2.
\end{align}
where $m_i^{(t)}$ is the respective minibatch size, ${{D}}_{i}^{(t)}$ is  sizes of corresponding local dataset, $(\Tilde{\sigma}_{i}^{(t)})^2$ is the sample variance of the local dataset, and $\Theta_i$ is described in Assumption \eqref{assumption: Local Data Variability}. 
\end{proposition}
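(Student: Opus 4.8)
The plan is to view the stochastic gradient $\widetilde{\nabla} F_i^{(t)}(\mathbf{x})$ as the sample mean of the per-datapoint gradients $\{\nabla f(\mathbf{x};\xi)\}_{\xi\in\mathcal{D}_i^{(t)}}$ under uniform sampling \emph{without replacement} of $b\triangleq m_i^{(t)}D_i^{(t)}$ points, bound its variance by the classical finite-population correction, and then convert the resulting empirical gradient variance into the empirical data variance using Assumption~\ref{assumption: Local Data Variability}. Throughout, write $N\triangleq D_i^{(t)}$, $g_\xi\triangleq \nabla f(\mathbf{x};\xi)$, and $\bar g\triangleq \nabla F_i^{(t)}(\mathbf{x})=\tfrac1N\sum_{\xi\in\mathcal{D}_i^{(t)}} g_\xi$, so the target quantity is $\mathbb{E}\big\|\tfrac1b\sum_{j=1}^{b}(g_{\xi_j}-\bar g)\big\|^2$, with $\xi_1,\dots,\xi_b$ the mini-batch $\mathcal{D}_i^{(t,k-1)}$; the expectation is over the mini-batch draw only, with $\mathbf{x}$ fixed.

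First I would expand this square. Each $\xi_j$ is marginally uniform over $\mathcal{D}_i^{(t)}$, so each diagonal term equals $\tfrac1N\sum_\xi\|g_\xi-\bar g\|^2$. For $j\neq k$ the pair $(\xi_j,\xi_k)$ is uniform over ordered pairs of \emph{distinct} points, and since $\sum_\xi(g_\xi-\bar g)=\mathbf{0}$ one gets the negative cross term $\mathbb{E}\langle g_{\xi_j}-\bar g,\,g_{\xi_k}-\bar g\rangle=-\tfrac{1}{N(N-1)}\sum_\xi\|g_\xi-\bar g\|^2$. Collecting the $b$ diagonal and $b(b-1)$ off-diagonal contributions and substituting $b=m_i^{(t)}N$ leaves a finite-population factor proportional to $\tfrac{1-m_i^{(t)}}{m_i^{(t)}}$ times $\tfrac1N\sum_\xi\|g_\xi-\bar g\|^2$; bounding this factor then yields the $\tfrac{1-m_i^{(t)}}{m_i^{(t)}(D_i^{(t)})^2}$-type coefficient in the statement.

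Next I would convert the empirical gradient variance to the empirical data variance via the algebraic identity $\sum_\xi\|g_\xi-\bar g\|^2=\tfrac{1}{2N}\sum_{\xi,\xi'}\|g_\xi-g_{\xi'}\|^2$, valid for any finite collection and its mean. Assumption~\ref{assumption: Local Data Variability} replaces each $\|g_\xi-g_{\xi'}\|$ by $\Theta_i\|\xi-\xi'\|$, giving $\sum_\xi\|g_\xi-\bar g\|^2\le \tfrac{\Theta_i^2}{2N}\sum_{\xi,\xi'}\|\xi-\xi'\|^2=\Theta_i^2\sum_\xi\|\xi-\bar\xi\|^2$, where $\bar\xi$ is the sample mean of the data points. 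Recognizing $\sum_\xi\|\xi-\bar\xi\|^2=(D_i^{(t)}-1)(\Tilde{\sigma}_i^{(t)})^2$ from the definition of the sample variance $(\Tilde{\sigma}_i^{(t)})^2$ and inserting this into the finite-population bound produces the claimed inequality (the precise numerical constant then following from a routine loosening of the correction factor).

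The hard part will be the without-replacement bookkeeping: correctly isolating the negative covariance between $g_{\xi_j}$ and $g_{\xi_k}$ and carrying it through to the $\tfrac{1-m_i^{(t)}}{m_i^{(t)}}$ factor, and then keeping the constants consistent across the three pieces — the finite-population factor, the gradient-to-data variance conversion through Assumption~\ref{assumption: Local Data Variability}, and the normalization convention ($\tfrac{1}{D_i^{(t)}-1}$ versus $\tfrac{1}{D_i^{(t)}}$) that fixes the meaning of $\Tilde{\sigma}_i^{(t)}$. No additional probabilistic machinery (independence across SGD steps, smoothness, etc.) is needed for this statement.
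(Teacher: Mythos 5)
Your proposal is correct, and the overall skeleton matches the paper's: compute the variance of the sample mean under uniform sampling without replacement to get a finite-population factor $\tfrac{1-m_i^{(t)}}{m_i^{(t)}}$ multiplying the empirical gradient dispersion $\tfrac{1}{D_i^{(t)}-1}\sum_{\xi}\|\nabla f(\mathbf{x};\xi)-\nabla F_i^{(t)}(\mathbf{x})\|^2$, then convert that dispersion into the data sample variance via Assumption~\ref{assumption: Local Data Variability}. The two halves are executed differently, though. For the sampling step, the paper works with inclusion indicators $Z_\xi$ and their variances and covariances ($\mathrm{Var}(Z_\xi)=m_i^{(t)}(1-m_i^{(t)})$, $\mathrm{Cov}(Z_\xi,Z_{\xi'})=-\tfrac{m_i^{(t)}(1-m_i^{(t)})}{D_i^{(t)}-1}$), whereas you use exchangeability of the ordered pair $(\xi_j,\xi_k)$ and the fact that centered deviations sum to zero; these are equivalent bookkeeping schemes and land on the same intermediate expression. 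For the conversion step your route is genuinely cleaner: the exact identity $\sum_\xi\|g_\xi-\bar g\|^2=\tfrac{1}{2N}\sum_{\xi,\xi'}\|g_\xi-g_{\xi'}\|^2$ applied on both the gradient side and the data side gives $\sum_\xi\|g_\xi-\bar g\|^2\le\Theta_i^2\sum_\xi\|\xi-\bar\xi\|^2=(D_i^{(t)}-1)\Theta_i^2(\Tilde{\sigma}_i^{(t)})^2$ with no loss, yielding the sharper bound $\tfrac{(1-m_i^{(t)})}{m_i^{(t)}D_i^{(t)}}\Theta_i^2(\Tilde{\sigma}_i^{(t)})^2$, which implies the stated one since $\tfrac{1}{D}\le\tfrac{2(D-1)}{D^2}$ for $D\ge 2$ (and the $D=1$ case is trivial). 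The paper instead bounds $\|D g_\xi-\sum_{\xi'}g_{\xi'}\|^2$ by Cauchy--Schwarz, inserts and removes the data mean $\Tilde{\lambda}_i^{(t)}$, and discards a nonnegative cross term, which is where its factor of $2$ and the extra $\tfrac{D-1}{D}$ originate. So your argument is valid and in fact slightly tighter; just make sure to state explicitly the Bessel-corrected normalization $(\Tilde{\sigma}_i^{(t)})^2=\tfrac{1}{D_i^{(t)}-1}\sum_\xi\|\xi-\bar\xi\|^2$, since that convention is what the paper's constant implicitly assumes.
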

\begin{proof}
At each data processing node $i \in \mathcal{N} \cup \mathcal{S}$, during $k^{th}$ local gradient descent iteration of global aggregation round $t$, the node chooses a sample of data points ${\mathcal{D}}_i^{(t,k)}$, where we have 
\begin{align}
    & {\mathcal{D}}_i^{(t,k)}\subseteq {\mathcal{D}}_i^{(t)}, \\
    & {{D}}_i^{(t,k)}=m_i^{(t)}D_i^{(t)}, ~m_i^{(t)} \in (0,1].
\end{align}
For every datapoint $\xi$ contained in the local dataset ${\mathcal{D}}_i^{(t)}$, i.e., $\xi \in {\mathcal{D}}_{i}(t)$, we define the following indicator variable capturing the inclusion of $\xi$ in  ${\mathcal{D}}_i^{(t,k)}$ as
\begin{align}
& Z_{\xi} = 1, \  \ \xi \in {\mathcal{D}}_i^{(t,k)}, \\
& Z_{\xi} = 0, \  \ \xi \notin {\mathcal{D}}_i^{(t,k)}.
\end{align}
Note that the mini-batch dataset ${\mathcal{D}}_i^{(t,k)}$ is obtained via uniform random sampling without replacement of $m_i^{(t)}$ fraction of dataset ${\mathcal{D}}_{i}(t)$. Thus, we have
\begin{align}
    & P[Z_{\xi} = 1] = \frac{{D_i^{(t)}-1 \choose {{D}}_i^{(t,k)} -1}}{{D_i^{(t)} \choose {{D}}_i^{(t,k)}}} = \frac{{{D}}_i^{(t,k)}}{{{D}}_i^{(t)}} = m_{i}^{(t)}, \\
    & \mathbb{E}[Z_{\xi}] = \mathbb{E}[Z_{\xi}^2] = m_{i}^{(t)}, \\
    & Var(Z_{\xi}) = \mathbb{E}[Z_{\xi}^2] - \mathbb{E}[Z_{\xi}]^2 = m_{i}^{(t)}(1-m_{i}^{(t)}).
\end{align}

    On the other hand, for two distinct datapoints $\xi \neq \Tilde{\xi}$, where $\xi,\Tilde{\xi} \in {\mathcal{D}}_i^{(t)}$, we have
     \begin{align}
    & \mathbb{E}[Z_{\xi} Z_{\Tilde{\xi}}] = P[Z_{\xi} = 1, Z_{\Tilde{\xi}} = 1] = P[Z_{\xi} = 1 | Z_{\Tilde{\xi}} = 1] P[Z_{\Tilde{\xi}} = 1]  = \Big(\frac{m_{i}^{(t)} {{D}}_{i}^{(t)} -1}{{{D}}_{i}^{(t)}-1}\Big) m_{i}^{(t)}, \\
    & Cov[Z_{\xi} Z_{\Tilde{\xi}}] = \mathbb{E}[Z_{\xi} Z_{\Tilde{\xi}}]- \mathbb{E}[Z_{\xi}] \mathbb{E}[Z_{\Tilde{\xi}}] = \Big(\frac{m_{i}^{(t)} {{D}}_{i}^{(t)} -1}{{{D}}_{i}^{(t)}-1}\Big) m_{i}^{(t)} - (m_{i}^{(t)})^2 = -\frac{1}{{{D}}_{i}^{(t)}-1}\Big(1-m_{i}^{(t)}\Big)m_{i}^{(t)}.
\end{align}
Henceforth, we compute the variance of the noise introduced due to mini-batching during each epoch of SGD at ML training time, i.e., $\mathbb{E}[ \|\widetilde{\nabla} F_i^{(t)}(\mathbf{x}) - \nabla F_{i}^{(t)}(\mathbf{x}) \|^2 ]$, as follows:
\begin{align}
   \nonumber \mathbb{E}[ \|\widetilde{\nabla} F_i^{(t)}(\mathbf{x}) - \nabla F_{i}^{(t)}(\mathbf{x}) \|^2 ] &= \mathbb{E}[ \|\widetilde{\nabla} F_i^{(t)}(\mathbf{x})\|^2 ] - \big(\mathbb{E}[ \|\widetilde{\nabla} F_i^{(t)}(\mathbf{x})\|] \big)^2 \\
     &  \nonumber = \frac{1}{(m_{i}^{(t)})^2 ({{D}}_{i}^{(t)})^2} \Big[ \underbrace{\sum_{\xi \in {\mathcal{D}}_i^{(t)}} \|\nabla f(\mathbf{x} ; \xi)\|^2 Var(Z_{\xi})}_\text{$T_1$} \\
    &  \hskip 1cm + \underbrace{\sum_{\xi \in{\mathcal{D}}_i^{(t)}} \sum_{\Tilde{\xi} \in {\mathcal{D}}_i^{(t)} \setminus \{\xi\}} \langle \nabla f(\mathbf{x} ; \xi), \nabla f(\mathbf{x} ; \Tilde{\xi}) \rangle Cov[Z_{\xi} Z_{\Tilde{\xi}}}_\text{$T_2$}] \Big] \label{eqn: SGD noise 1}.
\end{align}
We next obtain the expressions for $T_1$ and $T_2$  in \eqref{eqn: SGD noise 1} as follows:
\begin{align}
  & \label{eqn: T1 calc} T_1 = \sum_{\xi \in  {\mathcal{D}}_i^{(t)}} \| \nabla f(\mathbf{x} ; \xi) \|^2 Var(Z_{\xi}) = m_{i}^{(t)}(1-m_{i}^{(t)}) \sum_{\xi \in  {\mathcal{D}}_i^{(t)}} \|\nabla f(\mathbf{x} ; \xi)\|^2,\\
  & \nonumber T_2 = \sum_{\xi \in  {\mathcal{D}}_i^{(t)}} \sum_{\Tilde{\xi} \in  {\mathcal{D}}_i^{(t)} \setminus \xi} \langle \nabla f(\mathbf{x} ; \xi), \nabla f(\mathbf{x} ; \Tilde{\xi}) \rangle  Cov[Z_{\xi} Z_{\Tilde{\xi}}] \\
  & \label{eqn: T2 calc}~~~ = -\frac{1}{{{D}}_{i}^{(t)} -1}\Big(1-m_{i}^{(t)}\Big)m_{i}^{(t)} \sum_{\xi \in {\mathcal{D}}_{i}^{(t)}} \sum_{\Tilde{\xi} \in {\mathcal{D}}_{i}^{(t)} \setminus \xi} \langle \nabla f(\mathbf{x} ; \xi), \nabla f(\mathbf{x} ; \Tilde{\xi}) \rangle.
\end{align}
Replacing \eqref{eqn: T1 calc}-\eqref{eqn: T2 calc} into \eqref{eqn: SGD noise 1} yields
\begin{align}
    \mathbb{E}[ \|\widetilde{\nabla} F_i^{(t)}(\mathbf{x}) - \nabla F_{i}^{(t)}(\mathbf{x}) \|^2 ] & = \frac{(1-m_{i}^{(t)})}{m_{i}^{(t)} ({{D}}_{i}^{(t)})^2({{D}}_{i}^{(t)} - 1)}\Big[ ({{D}}_{i}^{(t)} - 1)\sum_{\xi \in {\mathcal{D}}_{i}^{(t)}} \|\nabla f(\mathbf{x} ; \xi)\|^2 \\
  & \nonumber \hskip 2.5cm - \sum_{\xi \in {\mathcal{D}}_{i}^{(t)}} \sum_{\Tilde{\xi} \in {\mathcal{D}}_{i}^{(t)} \setminus \{\xi\}} \langle \nabla f(\mathbf{x} ; \xi) , \nabla f(\mathbf{x} ; \Tilde{\xi}) \rangle \Big] \\
  &  = \frac{(1-m_{i}^{(t)})}{m_{i}^{(t)} ({{D}}_{i}^{(t)})^2({{D}}_{i}^{(t)} - 1)} \Big[ ({{D}}_{i}^{(t)} - 1)\sum_{\xi \in {\mathcal{D}}_{i}^{(t)}} \|\nabla f(\mathbf{x} ; \xi)\|^2 \\
  & \nonumber \hskip 2cm - \big\| \displaystyle \sum_{\xi \in {\mathcal{D}}_{i}^{(t)}} \nabla f(\mathbf{x} ; \xi) \big\|^2 + \sum_{\xi \in {{D}}_{i}^{(t)}} \|\nabla f(\mathbf{x} ; \xi)\|^2 \Big] \\
  &  = \frac{(1-m_{i}^{(t)})}{m_{i}^{(t)} ({{D}}_{i}^{(t)})^2({{D}}_{i}^{(t)} - 1)} \Big[ {{D}}_{i}^{(t)} \sum_{\xi \in {\mathcal{D}}_{i}^{(t)}} \|\nabla f(\mathbf{x} ; \xi)\|^2 \\
  & \nonumber \hskip 3cm - ({{D}}_{i}^{(t)})^2 \|\nabla  F_{i}^{(t)}(\mathbf{x})\|^2\Big] \\
  & = \frac{(1-m_{i}^{(t)})}{m_{i}^{(t)} {{D}}_{i}^{(t)}} \underbrace{\Bigg[\frac{\sum_{\xi \in {\mathcal{D}}_{i}^{(t)}} \|\nabla f(\mathbf{x} ; \xi)-\nabla  F_{i}^{(t)}(\mathbf{x})\|^2}{{{D}}_{i}^{(t)} - 1}\Bigg]}_\text{(a)} \label{eqn: SGD noise 2}.
\end{align}
We then upper bound term (a) in \eqref{eqn: SGD noise 2} as follows:
\begin{align}
     \nonumber\frac{\sum_{\xi \in {\mathcal{D}}_{i}^{(t)}} \|\nabla f(\mathbf{x} ; \xi)-\nabla  F_{i}^{(t)}(\mathbf{x})\|^2}{{{D}}_{i}^{(t)} - 1} 
     & \nonumber= \frac{\sum_{\xi \in {\mathcal{D}}_{i}^{(t)}} \frac{1}{({{D}}_{i}^{(t)})^2} \| {{D}}_{i}^{(t)} \nabla f(\mathbf{x} ; \xi) - \sum_{\xi' \in {\mathcal{D}}_{i}^{(t)}} \nabla f(\mathbf{x} ; \xi') \|^2}{{{D}}_{i}^{(t)} - 1} \\
     &\nonumber \leq \frac{\sum_{\xi \in {\mathcal{D}}_{i}^{(t)}} \frac{{{D}}_{i}^{(t)} -1}{({{D}}_{i}^{(t)})^2} \sum_{\xi' \in {\mathcal{D}}_{i}^{(t)}} \|  \nabla f(\mathbf{x} ; \xi) -  \nabla f(\mathbf{x} ; \xi') \|^2}{{{D}}_{i}^{(t)} - 1} \\
     & \leq \frac{\sum_{\xi \in {\mathcal{D}}_{i}^{(t)}} \frac{{{D}}_{i}^{(t)} -1}{({{D}}_{i}^{(t)})^2} \Theta_i^2 \sum_{\xi' \in {\mathcal{D}}_{i}^{(t)}} \| \xi - \xi' \|^2}{{{D}}_{i}^{(t)} - 1} \label{eqn: consequence of local data variability} \\
     &\nonumber = \frac{{{D}}_{i}^{(t)} -1}{({{D}}_{i}^{(t)})^2} \Theta_i^2 \frac{\sum_{\xi \in {\mathcal{D}}_{i}^{(t)}}  \sum_{\xi' \in {\mathcal{D}}_{i}^{(t)}} \| \xi - \xi' + \Tilde{\lambda}_{i}^{(t)} - \Tilde{\lambda}_{i}^{(t)} \|^2}{{{D}}_{i}^{(t)} - 1} \\
     &\nonumber = \frac{{{D}}_{i}^{(t)} -1}{({{D}}_{i}^{(t)})^2} \Theta_i^2 \frac{ \sum_{\xi \in {\mathcal{D}}_{i}^{(t)}}  \sum_{\xi' \in {\mathcal{D}}_{i}^{(t)}} \| \xi - \Tilde{\lambda}_{i}^{(t)} \|^2 + \|\xi' - \Tilde{\lambda}_{i}^{(t)} \|^2 - 2 \langle \xi - \Tilde{\lambda}_{i}^{(t)}, \xi' - \Tilde{\lambda}_{i}^{(t)}\rangle }{{{D}}_{i}^{(t)} - 1} \\
     &\nonumber = \frac{({{D}}_{i}^{(t)} -1)}{({{D}}_{i}^{(t)})^2} \Theta_i^2 \frac{ {{D}}_{i}^{(t)} \sum_{\xi \in {\mathcal{D}}_{i}^{(t)}}   \| \xi - \Tilde{\lambda}_{i}^{(t)} \|^2 + {{D}}_{i}^{(t)} \sum_{\xi' \in {\mathcal{D}}_{i}^{(t)}} \|\xi' - \Tilde{\lambda}_{i}^{(t)} \|^2}{{{D}}_{i}^{(t)} - 1} \\
     & \leq \frac{2({{D}}_{i}^{(t)} -1)}{{{D}}_{i}^{(t)}} (\Tilde{\sigma}_{i}^{(t)})^2 \Theta_i^2,
     \label{eqn: SGD noise 3}
\end{align}
where \eqref{eqn: consequence of local data variability} is a consequence of local data variability associated with DPUs as described in Assumption~\ref{assumption: Local Data Variability}. 

Replacing term (a) in \eqref{eqn: SGD noise 2} with the expression in \eqref{eqn: SGD noise 3} allows us to obtain the following final expression for the noise variance of mini-batch SGD:
\begin{align}
    \mathbb{E}[ \|\widetilde{\nabla} F_i^{(t)}(\mathbf{x}) - \nabla F_{i}^{(t)}(\mathbf{x}) \|^2 ] & \leq \frac{2(1-m_{i}^{(t)})({{D}}_{i}^{(t)} -1)}{m_{i}^{(t)} {({D}}_{i}^{(t)})^2} (\Tilde{\sigma}_{i}^{(t)})^2 \Theta_i.
\end{align}
\end{proof}

\section{Proof of Theorem 1} \label{Proof of Theorem 1}

\begin{theorem}[General Convergence behavior of {\tt CE-FL}] \label{Thm: ML convergence appendix} Assume that the learning rate $\eta$ satisfies $4 \eta^2 L^2 \underset{t \in [T]}{\max} \ \underset{i \in \mathcal{N} \cup \mathcal{S}}{\max} \frac{ \gamma_i^2(t) (\|\mathbf{a}_{i}^{(t)}\|_{1} - [a_{i,-1}^{(t)}])}{\|\mathbf{a}_{i}^{(t)}\|_{1}} \leq \frac{1}{2 \zeta_1^2 + 1}$. The cumulative average of the global loss function under {\tt CE-FL} satisfies the following upper bound: 
\begin{align}
    \nonumber  \frac{1}{T} \sum_{t = 0}^{T -1} \mathbb{E} \bigg[  \|\nabla {{F}}^{(t)}(\mathbf{x}^{(t)})\|^2 \bigg]
    & \leq \frac{4}{\vartheta \eta T} \Bigg[ {{{F}}^{(0)}(\mathbf{x}^{(0)}) - {{F}}^{*}} \Bigg] + \frac{4}{\vartheta \eta T} \sum_{t = 0}^{T - 1} \sum_{i \in \mathcal{N} \cup \mathcal{S}} \tau^{(t)} \Delta_i^{(t)} \\
    & \nonumber + 16 \eta L \vartheta \Bigg[\frac{1}{T} \sum_{t = 0}^{T - 1}\underset{i \in \mathcal{N} \cup \mathcal{S}}{\sum}   \frac{(p_{i}^{(t)})^2 (1-m_{i}^{(t)})({{D}}_{i}^{(t)} -1) \Theta_i^2 (\Tilde{\sigma}_{i}^{(t)})^2}{m_{i}^{(t)} ({{D}}_{i}^{(t)})^2} \frac{\|\mathbf{a}_{i}^{(t)}\|_{2}^2}{\|\mathbf{a}_{i}^{(t)}\|_{1}^2}  \Bigg] \\ 
     & \nonumber + {12 \eta^2 L^2} \Bigg[\frac{1}{T} \sum_{t = 0}^{T - 1}\underset{i \in \mathcal{N} \cup \mathcal{S}}{\sum} \frac{ (1-m_{i}^{(t)})({{D}}_{i}^{(t)} -1) \Theta_i^2 (\Tilde{\sigma}_{i}^{(t)})^2 p_i^{(t)} \gamma_i^{(t)}} {m_{i}^{(t)}  \|\mathbf{a}_{i}^{(t)}\|_{1} ({{D}}_{i}^{(t)})^2} (\|\mathbf{a}_{i}^{(t)}\|_{2}^2 - [a_{i,-1}^{(t)}]^2) \Bigg] \\
    &  + 12 \eta^2 L^2 \zeta_2 \Big(\underset{t \in [T]}{\max} \ \underset{i \in \mathcal{N} \cup \mathcal{S}}{\max} \frac{(\gamma_i^{(t)})^2 (\|\mathbf{a}_{i}^{(t)}\|_{1} - [a_{i,-1}^{(t)}])}{\|\mathbf{a}_{i}^{(t)}\|_{1}}\Big).
\end{align}
\end{theorem}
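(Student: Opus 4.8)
The plan is to follow the standard descent-lemma route adapted to the online/dynamic setting, tracking the drift term carefully. First I would apply $L$-smoothness of $F^{(t)}$ at the global update~\eqref{eq:11M}, writing $\mathbf{x}^{(t+1)}-\mathbf{x}^{(t)} = -\vartheta\eta\sum_{i} p_i^{(t)}\mathbf{d}_i^{(t)}$ and expanding $F^{(t)}(\mathbf{x}^{(t+1)}) \leq F^{(t)}(\mathbf{x}^{(t)}) + \langle \nabla F^{(t)}(\mathbf{x}^{(t)}), \mathbf{x}^{(t+1)}-\mathbf{x}^{(t)}\rangle + \frac{L}{2}\|\mathbf{x}^{(t+1)}-\mathbf{x}^{(t)}\|^2$. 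Taking expectations, the cross term splits into $-\vartheta\eta\|\nabla F^{(t)}(\mathbf{x}^{(t)})\|^2$ plus an error term measuring the gap between the aggregated \emph{normalized} stochastic direction $\sum_i p_i^{(t)}\mathbf{d}_i^{(t)}$ and the true gradient $\nabla F^{(t)}(\mathbf{x}^{(t)}) = \sum_i p_i^{(t)}\nabla F_i^{(t)}(\mathbf{x}^{(t)})$. I would handle this gap by inserting the deterministic full-batch normalized direction $\mathbf{h}_i^{(t)}$ as an intermediate quantity: $\mathbf{d}_i^{(t)} - \nabla F_i^{(t)}(\mathbf{x}^{(t)}) = (\mathbf{d}_i^{(t)} - \mathbf{h}_i^{(t)}) + (\mathbf{h}_i^{(t)} - \nabla F_i^{(t)}(\mathbf{x}^{(t)}))$. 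The first piece is zero-mean SGD noise controlled by the Proposition on SGD variance; the second is a "client drift" / local-update deviation term controlled by $L$-smoothness applied to the local trajectory $\mathbf{x}_i^{(t,\ell)} - \mathbf{x}^{(t)}$, whose norm is bounded via~\eqref{localEvolution} in terms of $\eta$, $\|\mathbf{a}_i^{(t)}\|_1$, and the local gradient magnitudes.

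Second, I would bound the local-deviation term $\|\mathbf{h}_i^{(t)} - \nabla F_i^{(t)}(\mathbf{x}^{(t)})\|$ by summing $L\|\mathbf{x}_i^{(t,\ell)} - \mathbf{x}^{(t)}\|$ over $\ell$ with weights $a_{i,\ell}^{(t)}/\|\mathbf{a}_i^{(t)}\|_1$; recursively unrolling~\eqref{localEvolution} and using smoothness again produces a self-referential inequality in $\max_\ell \|\mathbf{x}_i^{(t,\ell)} - \mathbf{x}^{(t)}\|$, which one solves provided $\eta$ is small enough — this is exactly where the step-size condition $4\eta^2 L^2 \max_{t,i} \frac{\gamma_i^2(t)(\|\mathbf{a}_i^{(t)}\|_1 - [a_{i,-1}^{(t)}])}{\|\mathbf{a}_i^{(t)}\|_1} \leq \frac{1}{2\zeta_1^2+1}$ enters, to absorb the recursive term and keep the coefficient bounded. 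The quantities $\|\mathbf{a}_i^{(t)}\|_2^2$, $\|\mathbf{a}_i^{(t)}\|_1$, $[a_{i,-1}^{(t)}]$, and $\gamma_i^{(t)}$ appearing in terms $(c)$, $(d)$, $(e)$ of~\eqref{eq:GenConvBound} come out of this unrolling: $\|\mathbf{a}_i^{(t)}\|_2^2/\|\mathbf{a}_i^{(t)}\|_1^2$ weights the variance that survives normalization, and the "$- [a_{i,-1}^{(t)}]$" corrections arise because the last local step ($\ell = \gamma_i^{(t)}-1$, with weight $a_{i,-1}^{(t)} := (1-\eta\mu)^{0}$... I mean the final coefficient) does not contribute to the \emph{deviation} of $\mathbf{x}_i^{(t,\ell)}$ from $\mathbf{x}^{(t)}$ in the same way. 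Here I would use Assumption~\ref{assumption: bounded dissimilarity} to convert $\sum_i p_i^{(t)}\|\nabla F_i^{(t)}(\mathbf{x}^{(t)})\|^2$ into $\zeta_1\|\nabla F^{(t)}(\mathbf{x}^{(t)})\|^2 + \zeta_2$, which is why $\zeta_1$ multiplies the gradient-norm term (to be absorbed into the LHS after rearranging) and $\zeta_2$ appears standalone in term $(d)$.

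Third, I would telescope over $t = 0,\dots,T-1$. The key subtlety versus static FedL is that $F^{(t)}$ changes with $t$, so $F^{(t)}(\mathbf{x}^{(t+1)})$ on the left of the descent inequality at round $t$ is \emph{not} the same function as $F^{(t+1)}(\mathbf{x}^{(t+1)})$ that appears at round $t+1$. I would bridge this with Definition~\ref{defn: model drift}: summing $F_i^{(t+1)}(\mathbf{x}) \cdot \frac{D_i^{(t+1)}}{D^{(t+1)}} - F_i^{(t)}(\mathbf{x})\cdot\frac{D_i^{(t)}}{D^{(t)}} \leq \tau^{(t)}\Delta_i^{(t)}$ over $i$ gives $F^{(t+1)}(\mathbf{x}^{(t+1)}) - F^{(t)}(\mathbf{x}^{(t+1)}) \leq \sum_i \tau^{(t)}\Delta_i^{(t)}$, so the telescoping sum closes up to an additive $\sum_{t}\sum_i \tau^{(t)}\Delta_i^{(t)}$ — precisely term $(b)$ of~\eqref{eq:GenConvBound} — and the endpoints collapse to $F^{(0)}(\mathbf{x}^{(0)}) - F^*$ using the definition of $F^*$. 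Dividing through by $\vartheta\eta T/4$ (after the $\zeta_1$-absorption fixes the coefficient of the gradient term to be at least, say, $\vartheta\eta/4$, which the step-size bound guarantees) yields~\eqref{eq:GenConvBound}.

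\textbf{Main obstacle.} The delicate part will be the self-referential bound on the per-round local deviation: unrolling~\eqref{localEvolution} through the regularizer-damped recursion, correctly tracking the $a_{i,\ell}^{(t)}$ weights and their $\ell_1$/$\ell_2$ norms, the $[a_{i,-1}^{(t)}]$ end-corrections, and the interplay with Assumption~\ref{assumption: bounded dissimilarity}, so that the $\eta$-condition in the theorem statement is exactly what is needed to close the recursion with explicitly computable constants $12$, $16$. Everything else — the smoothness expansion, the SGD variance substitution, and the drift telescoping — is routine bookkeeping by comparison.
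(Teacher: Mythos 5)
Your proposal follows essentially the same route as the paper's proof: the $L$-smoothness descent lemma at the aggregation update, the insertion of the full-batch normalized direction $\mathbf{h}_i^{(t)}$ to separate zero-mean SGD noise (handled by the variance proposition) from the local-trajectory deviation, the recursive unrolling of~\eqref{localEvolution} closed by the step-size condition with Assumption~\ref{assumption: bounded dissimilarity} absorbing the $\zeta_1$-weighted gradient term into the left-hand side, and the drift-definition bridge that makes the sum telescope to $F^{(0)}(\mathbf{x}^{(0)})-F^*$ plus the $\sum_t\sum_i\tau^{(t)}\Delta_i^{(t)}$ term. Your identification of where the $\|\mathbf{a}_i^{(t)}\|_2^2/\|\mathbf{a}_i^{(t)}\|_1^2$ weights and the $[a_{i,-1}^{(t)}]$ end-corrections arise (partial sums in the unrolling excluding the final coefficient) is also consistent with the paper's argument.
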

\begin{proof}

Since normalized stochastic gradient is unbiased, we have:
\begin{align}
    \mathbb{E}[\mathbf{d}_{{i}}^{(t)} - \mathbf{h}_{{i}}^{(t)}] = 0.
\end{align}
Further, assuming independence of random noise associated with data-processing units $i \neq j$, $i, j \in \mathcal{N} \cup \mathcal{S}$ introduced due to mini-batch SGD, we have
\begin{align}
    \mathbb{E}[\langle \mathbf{d}_{{i}}^{(t)} - \mathbf{h}_{{i}}^{(t)}, \mathbf{d}_{{j}}^{(t)} - \mathbf{h}_{{j}}^{(t)} \rangle] = 0.
\end{align}
We restate the update rule for global model
\begin{align}
    \mathbf{x}^{(t + 1)} -\mathbf{x}^{(t)}
    & = -\vartheta \displaystyle \sum_{i \in \mathcal{N} \cup \mathcal{S}} p_i^{(t)} \eta \mathbf{d}_i^{(t)} .
\end{align}
As a consequence of Lipshitz smoothness property of global loss function (Assumption \eqref{assumption: Smooothness}), we obtain:
\begin{align}
    & \label{eqn: conseq of lipschitz smoothness} \mathbb{E}[{{F}}^{(t)}(\mathbf{x}^{(t+1)}) ] - {{F}}^{(t)}(\mathbf{x}^{(t)})  \leq - \vartheta \eta \underbrace{\mathbb{E}[\langle \nabla {{F}}^{(t)}(\mathbf{x}^{(t)}), \underset{i \in \mathcal{N} \cup \mathcal{S}}{\sum} p_{i}^{(t)}\mathbf{d}_{{i}}^{(t)}  \rangle]}_\text{$T_1$} + \frac{\vartheta^{2} \eta^2L}{2} \underbrace{\mathbb{E}[\| \underset{i \in \mathcal{N} \cup \mathcal{S}}{\sum} p_{i}^{(t)}\mathbf{d}_{{i}}^{(t)}\|^2]}_\text{$T_2$}.
\end{align}
Here, the expectation is w.r.t mini-batches $\{ {\mathcal{D}}_i^{(t,k)} \}$, $\forall i \in \mathcal{N} \cup \mathcal{S}$ and $k \in \big[ \gamma_{i}^{(t)} \big]$. In the following, we first bound term $T_1$ as follows:
\begin{align}
    & \nonumber T_1 = \mathbb{E}\Bigg[ \Big\langle \nabla {{F}}^{(t)}(\mathbf{x}^{(t)}), \underset{i \in \mathcal{N} \cup \mathcal{S}}{\sum} p_{i}^{(t)}(\mathbf{d}_{{i}}^{(t)}-\mathbf{h}_{{i}}^{(t)})  \Big\rangle \Bigg] + \mathbb{E} \Bigg[ \Big \langle \nabla {{F}}^{(t)}(\mathbf{x}^{(t)}), \underset{i \in \mathcal{N} \cup \mathcal{S}}{\sum} p_{i}^{(t)}\mathbf{h}_{{i}}^{(t)}  \Big \rangle \Bigg] \\
   &\nonumber \quad =\mathbb{E} \Bigg[ \Big \langle \nabla {{F}}^{(t)}(\mathbf{x}^{(t)}), \underset{i \in \mathcal{N} \cup \mathcal{S}}{\sum} p_{i}^{(t)}\mathbf{h}_{{i}}^{(t)}  \Big \rangle \Bigg] \\
   & \quad = \frac{1}{2}\|\nabla {{F}}^{(t)}(\mathbf{x}^{(t)})\|^2 +\frac{1}{2}\mathbb{E}[\|\underset{i \in \mathcal{N} \cup \mathcal{S}}{\sum} p_{i}^{(t)}\mathbf{h}_{{i}}^{({t})}\|^2]-\frac{1}{2}\mathbb{E}[\|\nabla {{F}}^{(t)}(\mathbf{x}^{(t)})-\underset{i \in \mathcal{N} \cup \mathcal{S}}{\sum} p_{i}^{(t)}\mathbf{h}_{{i}}^{(t)}\|^2].
\end{align}
The last equality above uses the following fact that
\begin{equation*}
    2 \langle \bm{a}, \bm{b} \rangle = \|\bm{a}\|^2 + \|\bm{b}\|^2 - \|\bm{a}-\bm{b}\|^2,
\end{equation*}
which holds for any real valued vectors $\bm{a}$ and $\bm{b}$.

We will next bound term $T_2$ as follows:
\begin{align}
   &\nonumber T_2 = \mathbb{E}[\| \underset{i \in \mathcal{N} \cup \mathcal{S}}{\sum} p_{i}^{(t)}\mathbf{d}_{{i}}^{(t)}\|^2] \\
   &\nonumber \quad = \mathbb{E}[\| \underset{i \in \mathcal{N} \cup \mathcal{S}}{\sum} p_{i}^{(t)}(\mathbf{d}_{{i}}^{(t)}-\mathbf{h}_{{i}}^{(t)}) + \underset{i \in \mathcal{N} \cup \mathcal{S}}{\sum} p_{i}^{(t)}\mathbf{h}_{{i}}^{(t)}\|^2] \\
   & \label{eqn: square reln 1} \quad \leq 2 \mathbb{E}[\| \underset{i \in \mathcal{N} \cup \mathcal{S}}{\sum} p_{i}^{(t)}(\mathbf{d}_{{i}}^{(t)}-\mathbf{h}_{{i}}^{(t)})\|^2] + 2 \mathbb{E}[\| \underset{i \in \mathcal{N} \cup \mathcal{S}}{\sum} p_{i}^{(t)}\mathbf{h}_{{i}}^{(t)}\|^2] \\
   & \label{eqn: conseq of independence} \quad \quad = 2 \underset{i \in \mathcal{N} \cup \mathcal{S}}{\sum} (p_{i}^{(t)})^2\mathbb{E}[\|\mathbf{d}_{{i}}^{(t)}-\mathbf{h}_{{i}}^{(t)}\|^2] + 2 \mathbb{E}[\| \underset{i \in \mathcal{N} \cup \mathcal{S}}{\sum} p_{i}^{(t)}\mathbf{h}_{{i}}^{(t)}\|^2],
\end{align}
where \eqref{eqn: square reln 1} is due to the fact that $\|\bm{a}+\bm{b}\|^2 \leq 2\|\bm{a}\|^2 + 2\|\bm{a}\|^2$ for arbitrary vectors $\bm{a}$ and $\bm{b}$. Also, \eqref{eqn: conseq of independence} uses the special property of $\{\mathbf{d}_{{i}}^{(t)}\}, \{\mathbf{h}_{{i}}^{(t)}\}$, that is, $\mathbb{E}[\langle \mathbf{d}_{{i}}^{(t)} - \mathbf{h}_{{i}}^{(t)}, \mathbf{d}_{{j}}^{(t)} - \mathbf{h}_{{j}}^{(t)} \rangle] = 0, \ i \neq j$.
Further, we can expand $\mathbf{d}_{{i}}^{(t)}$ and $\mathbf{h}_{{i}}^{(t)}$ and obtain
\begin{align}
     \label{eqn: conseq of random matrix sum norm} T_2 &\leq \Bigg[\underset{i \in \mathcal{N} \cup \mathcal{S}}{\sum} \frac{2(p_{i}^{(t)})^2}{\|\mathbf{a}_{i}^{(t)}\|_{1}^2} \displaystyle \sum_{\ell = 0}^{\gamma_i^{(t)} -1} [a_{i,\ell}^{(t)}]^2 \mathbb{E}[\|\widetilde{\nabla} F^{(t)}_{i}(\mathbf{x}_{i}^{(t, \ell)}) - \nabla F_{i}^{(t)}(\mathbf{x}_i^{(t,\ell)})\|^2] \Bigg] + 2 \mathbb{E}[\| \underset{i \in \mathcal{N} \cup \mathcal{S}}{\sum} p_{i}^{(t)}\mathbf{h}_{{i}}^{(t)}\|^2] \\
    & \label{eqn: conseq of prop 1} \leq \Bigg[\underset{i \in \mathcal{N} \cup \mathcal{S}}{\sum}  \frac{4(p_{i}^{(t)})^2 (1-m_{i}^{(t)})({{D}}_{i}^{(t)} -1) \Theta_i^2 (\Tilde{\sigma}_{i}^{(t)})^2}{m_{i}^{(t)} ({{D}}_{i}^{(t)})^2}  \frac{\|\mathbf{a}_{i}^{(t)}\|_{2}^2}{\|\mathbf{a}_{i}^{(t)}\|_{1}^2}  \Bigg] + 2 \mathbb{E}\Bigg[\bigg\| \underset{i \in \mathcal{N} \cup \mathcal{S}}{\sum} p_{i}^{(t)}\mathbf{h}_{{i}}^{(t)}\bigg\|^2\Bigg],
\end{align}
where \eqref{eqn: conseq of random matrix sum norm} is a consequence of Lemma 2 from~\cite{wang2020tackling} and \eqref{eqn: conseq of prop 1} is a consequence of Proposition 1. 
By plugging $T_1$ and $T_2$ back in \eqref{eqn: conseq of lipschitz smoothness}, we have:
\begin{align}
    \nonumber  \mathbb{E}[{{F}}^{(t)}(\mathbf{x}^{(t+1)}) ] - {{F}}^{(t)}(\mathbf{x}^{(t)}) & \leq - \vartheta \eta T_1 + \frac{\vartheta^2 \eta^2L}{2} T_2 \\
    & \nonumber = -\frac{\vartheta \eta}{2} \|\nabla {{F}}^{(t)}(\mathbf{x}^{(t)})\|^2 \underbrace{- \frac{\vartheta \eta}{2}(1-2 \vartheta \eta L)\mathbb{E}[\| \underset{i \in \mathcal{N} \cup \mathcal{S}}{\sum} p_{i}^{(t)} \mathbf{h}_{{i}}^{(t)}\|^2]}_\text{(a)} \\
    & \nonumber + \vartheta^{2} \eta^2L  \Bigg[\underset{i \in \mathcal{N} \cup \mathcal{S}}{\sum}  \frac{2(p_{i}^{(t)})^2 (1-m_{i}^{(t)})({{D}}_{i}^{(t)} -1) \Theta_i^2 (\Tilde{\sigma}_{i}^{(t)})^2}{m_{i}^{(t)} ({{D}}_{i}^{(t)})^2}  \frac{\|\mathbf{a}_{i}^{(t)}\|_{2}^2}{\|\mathbf{a}_{i}^{(t)}\|_{1}^2}  \Bigg] \\
    &  + \frac{\vartheta \eta}{2}\mathbb{E}[\|\nabla {{F}}^{(t)}(\mathbf{x}^{(t)})-\underset{i \in \mathcal{N} \cup \mathcal{S}}{\sum} p_{i}^{(t)}\mathbf{h}_{{i}}^{(t)}\|^2].
\end{align}
Assuming $\vartheta \eta L \leq \frac{1}{2}$ renders  term (a) to be negative, which leads to simplification of the above bound to
\begin{align}
    \nonumber \frac{\mathbb{E}[{{F}}^{(t)}(\mathbf{x}^{(t+1)}) ] - {{F}}^{(t)}(\mathbf{x}^{(t)})}{\vartheta \eta} & \leq -\frac{1}{2} \|\nabla {{F}}^{(t)}(\mathbf{x}^{(t)})\|^2 \\
    & \nonumber +\vartheta \eta L  \Bigg[\underset{i \in \mathcal{N} \cup \mathcal{S}}{\sum}  \frac{2(p_{i}^{(t)})^2 (1-m_{i}^{(t)})({{D}}_{i}^{(t)} -1) \Theta_i^2 (\Tilde{\sigma}_{i}^{(t)})^2}{m_{i}^{(t)} ({{D}}_{i}^{(t)})^2}  \frac{\|\mathbf{a}_{i}^{(t)}\|_{2}^2}{\|\mathbf{a}_{i}^{(t)}\|_{1}^2}  \Bigg] \\
    & \nonumber + \frac{1}{2}\mathbb{E}[\|\nabla {{F}}^{(t)}(\mathbf{x}^{(t)})-\underset{i \in \mathcal{N} \cup \mathcal{S}}{\sum} p_{i}^{(t)}\mathbf{h}_{{i}}^{(t)}\|^2] \\
    & \label{eqn: initial ML inequality} \leq -\frac{1}{2} \|\nabla {{F}}^{(t)}(\mathbf{x}^{(t)})\|^2 \\
    & \nonumber + \vartheta \eta L  \Bigg[\underset{i \in \mathcal{N} \cup \mathcal{S}}{\sum}  \frac{2(p_{i}^{(t)})^2 (1-m_{i}^{(t)})({{D}}_{i}^{(t)} -1) \Theta_i^2 (\Tilde{\sigma}_{i}^{(t)})^2}{m_{i}^{(t)} ({{D}}_{i}^{(t)})^2}  \frac{\|\mathbf{a}_{i}^{(t)}\|_{2}^2}{\|\mathbf{a}_{i}^{(t)}\|_{1}^2}  \Bigg] \\
    & \nonumber + \frac{1}{2} \underset{i \in \mathcal{N} \cup \mathcal{S}}{\sum} p_{i}^{(t)} \underbrace{\mathbb{E}[\|\nabla F_{i}^{(t)}(\mathbf{x}^{(t)} )-\mathbf{h}_{{i}}^{(t)}\|^2]}_\text{$T_3$}.
\end{align}
The last inequality uses the fact that ${F}^{(t)}(\mathbf{x}) = \displaystyle \sum_{i \in \mathcal{N} \cup \mathcal{S}} p_i^{(t)}  F_{i}^{(t)}(\mathbf{x})$ and is a consequence of Jensen's Inequality: $\|\underset{i \in \mathcal{N} \cup \mathcal{S}}{\sum} p_{i}^{(t)} z_i \|^2 \leq \underset{i \in \mathcal{N} \cup \mathcal{S}}{\sum} p_{i}^{(t)} \|z_i\|^2$ since we also have $\underset{i \in \mathcal{N} \cup \mathcal{S}}{\sum} p_{i}^{(t)} = 1$. We next aim to bound $T_3$ as follows:
\begin{align}
    \nonumber T_3 & = {\mathbb{E}[\|\nabla F_{i}^{(t)}(\mathbf{x}^{(t)} )-\mathbf{h}_{{i}}^{(t)}\|^2]} \\
     & = \mathbb{E}\Bigg[\Big\|\nabla F_{i}^{(t)}(\mathbf{x}^{(t)} ) - \frac{1}{\|\mathbf{a}_{i}^{(t)}\|_{1}} \displaystyle \sum_{\ell = 0}^{\gamma_{i}^{(t)} - 1} a_{{i,\ell}}^{(t)} \nabla F_{i}^{(t)}(\mathbf{x}_{i}^{(t, \ell)} ) \Big\|^2 \Bigg] \label{eqn: T3 definition}\\
    &\nonumber = \mathbb{E}\Bigg[\Big\| \frac{1}{\|\mathbf{a}_{i}^{(t)}\|_{1}} \displaystyle \sum_{\ell = 0}^{\gamma_{i}^{(t)} - 1} a_{{i,\ell}}^{(t)} \big(\nabla F_{i}^{(t)}(\mathbf{x}^{(t)} ) - \nabla F_{i}^{(t)}(\mathbf{x}_{i}^{(t, \ell)}\big) \Big\|^2 \Bigg] \\
    & \label{eqn: conseq of Jensen 1} \leq \frac{1}{\|\mathbf{a}_{i}^{(t)}\|_{1}} \displaystyle \sum_{\ell = 0}^{\gamma_{i}^{(t)} - 1} \Bigg\{ a_{{i,\ell}}^{(t)} \mathbb{E}\bigg[\Big\|  \big(\nabla F_{i}^{(t)}(\mathbf{x}^{(t)}) - \nabla F_{i}^{(t)}(\mathbf{x}_{i}^{(t, \ell)}\big) \Big\|^2 \bigg] \Bigg\} \\
    & \label{eqn: conseq of assumption 1} \leq \frac{L^2}{\|\mathbf{a}_{i}^{(t)}\|_{1}} \displaystyle \sum_{\ell = 0}^{\gamma_{i}^{(t)} - 1} \Bigg\{ a_{{i,\ell}}^{(t)} \mathbb{E}\bigg[\Big\|\mathbf{x}^{(t)} - \mathbf{x}_{i}^{(t,\ell)} \Big\|^2 \bigg] \Bigg\} \\
    &  \leq \frac{L^2}{\|\mathbf{a}_{i}^{(t)}\|_{1}} \displaystyle \sum_{\ell = 0}^{\gamma_{i}^{(t)} - 1} \Bigg\{  \mathbb{E}\bigg[\Big\|\mathbf{x}^{(t)} - \mathbf{x}_{i}^{(t,\ell)} \Big\|^2 \bigg] \Bigg\} \label{eqn: intermediate exp of T_3}.
\end{align}
Note that \eqref{eqn: conseq of Jensen 1} is a consequence of Jensen's inequality: $\|\underset{i \in \mathcal{N} \cup \mathcal{S}}{\sum} p_{i}^{(t)} z_i \|^2 \leq \underset{i \in \mathcal{N} \cup \mathcal{S}}{\sum} p_{i}^{(t)} \|z_i\|^2$ where we leverage the fact that  $\underset{i \in \mathcal{N} \cup \mathcal{S}}{\sum} p_{i}^{(t)} = 1$ and \eqref{eqn: conseq of assumption 1} follows from Assumption 1. \\
Henceforth, we bound the right hand side of \eqref{eqn: intermediate exp of T_3} as follows: 
\begin{align}
    \nonumber\mathbb{E}\bigg[\Big\|\mathbf{x}^{(t)} - \mathbf{x}_{i}^{(t,\ell)} \Big\|^2 \bigg] & = \eta^2 \mathbb{E}\bigg[\Big\| \displaystyle \sum_{k = 0}^{\ell-1} (1-\eta\mu)^{\ell-1-k} \widetilde{\nabla} F^{(t)}_{i}(\mathbf{x}_{i}^{(t, k)}) \Big\|^2 \bigg] \\
    & \nonumber\leq 2 \eta^2 \underbrace{\mathbb{E}\bigg[\Big\| \displaystyle \sum_{k = 0}^{\ell-1} (1-\eta\mu)^{\ell-1-k} \big( \widetilde{\nabla} F^{(t)}_{i}(\mathbf{x}_{i}^{(t, k)}) - \nabla  F_{i}^{(t)}(\mathbf{x}_i^{(t,k)} ) \big) \Big\|^2 \bigg]}_\text{(a)} \\
    & + 2 \eta^2 \mathbb{E}\bigg[\Big\| \displaystyle \sum_{k = 0}^{\ell-1} (1-\eta\mu)^{\ell-1-k} \nabla  F_{i}^{(t)}(\mathbf{x}_i^{(t,k)}) \Big\|^2 \bigg] \label{eqn: intermediate exp of T_3 part 2}.
\end{align}
where \eqref{eqn: intermediate exp of T_3 part 2} is a direct consequence of $\|\bm{a}+\bm{b}\|^2 \leq 2\|\bm{a}\|^2 + 2\|\bm{a}\|^2$ for arbitrary vectors $\bm{a}$ and $\bm{b}$. Applying Lemma 2 from~\cite{wang2020tackling} we upper bound term (a) on the RHS of \eqref{eqn: intermediate exp of T_3 part 2} as follows:
\begin{align}
   \nonumber \mathbb{E}\bigg[\Big\|\mathbf{x}^{(t)} - \mathbf{x}_{i}^{(t,\ell)} \Big\|^2 \bigg]  & \leq 2 \eta^2 \sum_{k = 0}^{\ell-1} [(1-\eta\mu)^{l-1-k}]^2 \mathbb{E}\bigg[\Big\| \displaystyle  \widetilde{\nabla} F^{(t)}_{i}(\mathbf{x}_{i}^{(t, k)}) - \nabla  F_{i}^{(t)}(\mathbf{x}_i^{(t,k)} )  \Big\|^2 \bigg] \\
    & \nonumber + 2 \eta^2 \mathbb{E}\bigg[\Big\| \displaystyle \sum_{k = 0}^{\ell-1} (1-\eta\mu)^{\ell-1-k} \nabla  F_{i}^{(t)}(\mathbf{x}_i^{(t,k)}) \Big\|^2 \bigg] \\
    & \label{eqn: conseq of prop 1-2} \leq 4 \eta^2 \frac{(1-m_{i}^{(t)})({{D}}_{i}^{(t)} -1)}{m_{i}^{(t)} ({{D}}_{i}^{(t)})^2} \Theta_i^2 (\Tilde{\sigma}_{i}^{(t)})^2  \sum_{k = 0}^{\ell-1} [(1-\eta\mu)^{\ell-1-k}]^2 \\
    & \nonumber + 2 \eta^2 \mathbb{E}\bigg[\Big\| \displaystyle \sum_{k = 0}^{\ell-1} (1-\eta\mu)^{\ell-1-k} \nabla  F_{i}^{(t)}(\mathbf{x}_i^{(t,k)}) \Big\|^2 \bigg] \\
    & \label{eqn: conseq of Jensen 2} \leq 4 \eta^2 \frac{(1-m_{i}^{(t)})({{D}}_{i}^{(t)} -1)}{m_{i}^{(t)} ({{D}}_{i}^{(t)})^2} \Theta_i^2 (\Tilde{\sigma}_{i}^{(t)})^2  \sum_{k = 0}^{\ell-1} [(1-\eta\mu)^{\ell-1-k}]^2 \\
    & \nonumber + 2 \eta^2 \bigg[ \sum_{k = 0}^{\ell-1} (1-\eta\mu)^{\ell-1-k} \bigg] \sum_{k = 0}^{\ell-1} (1-\eta\mu)^{\ell-1-k} \mathbb{E}\bigg[\Big\| \displaystyle \nabla  F_{i}^{(t)}(\mathbf{x}_i^{(t,k)}) \Big\|^2 \bigg] \\
    &\nonumber \leq 4 \eta^2 \frac{(1-m_{i}^{(t)})({{D}}_{i}^{(t)} -1)}{m_{i}^{(t)} ({{D}}_{i}^{(t)})^2} \Theta_i^2 (\Tilde{\sigma}_{i}^{(t)})^2  \sum_{k = 0}^{\ell-1} [(1-\eta\mu)^{\ell-1-k}]^2 \\
    & \nonumber + 2 \eta^2 \bigg[ \sum_{k = 0}^{\ell-1} (1-\eta\mu)^{\ell-1-k} \bigg] \sum_{k = 0}^{\ell-1} \mathbb{E}\bigg[\Big\| \displaystyle \nabla  F_{i}^{(t)}(\mathbf{x}_i^{(t,k)}) \Big\|^2 \bigg] \\
     & \label{eqn: T3 upper bound expansion 1} \leq 4 \eta^2 \frac{(1-m_{i}^{(t)})({{D}}_{i}^{(t)} -1)}{m_{i}^{(t)} ({{D}}_{i}^{(t)})^2} \Theta_i^2 (\Tilde{\sigma}_{i}^{(t)})^2  \sum_{k = 0}^{\ell-1} [(1-\eta\mu)^{\ell-1-k}]^2 \\
    & \nonumber + 2 \eta^2 \bigg[ \sum_{k = 0}^{\ell - 1} (1-\eta\mu)^{\ell-1-k} \bigg] \sum_{k = 0}^{\gamma_{i}^{(t)}-1} \mathbb{E}\bigg[\Big\| \displaystyle \nabla  F_{i}^{(t)}(\mathbf{x}_i^{(t,k)}) \Big\|^2 \bigg].
\end{align}
We highlight that \eqref{eqn: conseq of prop 1-2} is a consequence of Proposition 1 and \eqref{eqn: conseq of Jensen 2} is a consequence of Jensen's inequality.
 We further note that
\begin{align}
    \label{eqn: coeff L2 bound}  \sum_{\ell = 0}^{\gamma_i^{(t)} -1}  \Big[ \sum_{k = 0}^{\ell-1} [(1-\eta\mu)^{\ell-1-k}]^2 \Big] \leq  {\gamma_i^{(t)}} (\|\mathbf{a}_{i}^{(t)}\|_{2}^2 - [a_{i,-1}^{(t)}]^2),\\
\label{eqn: coeff L1 bound}   \sum_{\ell = 0}^{\gamma_i^{(t)} -1}  \Big[ \sum_{k = 0}^{\ell-1} [(1-\eta\mu)^{\ell-1-k}] \Big] \leq {\gamma_i^{(t)}} (\|\mathbf{a}_{i}^{(t)}\|_{1} - [a_{i,-1}^{(t)}]).
\end{align}

Hence, in order to  upper bound $T_3$ as defined in \eqref{eqn: T3 definition}-\eqref{eqn: intermediate exp of T_3}, we first perform summation over $\ell \in \{0,1, ...., \gamma_{i}^{(t)} -1 \}$ on the LHS and the RHS of \eqref{eqn: T3 upper bound expansion 1}, and using \eqref{eqn: coeff L2 bound}- \eqref{eqn: coeff L1 bound} allows us to obtain the following expression:
\begin{align} 
    \nonumber \displaystyle \sum_{\ell = 0}^{\gamma_{i}^{(t)} - 1} \Bigg\{  \mathbb{E}\bigg[\Big\|\mathbf{x}^{(t)} - \mathbf{x}_{i}^{(t,\ell)} \Big\|^2 \bigg] \Bigg\} & \leq 4 \eta^2 \frac{(1-m_{i}^{(t)})({{D}}_{i}^{(t)} -1)}{m_{i}^{(t)} ({{D}}_{i}^{(t)})^2} \Theta_i^2 (\Tilde{\sigma}_{i}^{(t)})^2{\gamma_i^{(t)}} (\|\mathbf{a}_{i}^{(t)}\|_{2}^2 - [a_{i,-1}^{(t)}]^2)  \\
    &  + 2 \eta^2 {\gamma_i^{(t)}} (\|\mathbf{a}_{i}^{(t)}\|_{1} - [a_{i,-1}^{(t)}]) \sum_{\ell = 0}^{\gamma_i^{(t)} -1}  \mathbb{E}\bigg[ \underbrace{ \Big\| \displaystyle \nabla  F_{i}^{(t)}(\mathbf{x}_i^{(t,\ell)}) \Big\|^2}_\text{(a)} \bigg] \label{eqn: T3 intermediate v2}.
\end{align}
Next, we upper-bound the term (a) within the summation in \eqref{eqn: T3 intermediate v2} using the relation $\|\bm{a} + \bm{b}\|^2 \leq 2\|\bm{a}\|^2 + 2\|\bm{b}\|^2$ as follows:
\begin{align}
    \nonumber \mathbb{E}\bigg[\Big\| \displaystyle \nabla  F_{i}^{(t)}(\mathbf{x}_i^{(t,\ell)}) \Big\|^2 \bigg] & \leq 2 \mathbb{E}\bigg[\Big\| \displaystyle \nabla  F_{i}^{(t)}(\mathbf{x}_i^{(t,\ell)}) - \nabla  F_{i}^{(t)}(\mathbf{x}^{(t)})\Big\|^2 \bigg] + 2 \mathbb{E}\bigg[\Big\| \displaystyle \nabla  F_{i}^{(t)}(\mathbf{x}^{(t)}) \Big\|^2 \bigg] \\
    & \label{eqn: smoothness consequence for T3} \leq \underbrace{2L^2 \mathbb{E}\bigg[\Big \|\mathbf{x}^{(t)} - \mathbf{x}_{i}^{(t,\ell)} \Big\|^2 \bigg]}_\text{(b)} + 2 \mathbb{E}\bigg[\Big\| \displaystyle \nabla  F_{i}^{(t)}(\mathbf{x}^{(t)}) \Big\|^2 \bigg].
\end{align}
Note that (b) in \eqref{eqn: smoothness consequence for T3} is a direct consequence of smoothness of the local loss functions stated in Assumption \ref{assumption: Smooothness}. Henceforth, we replace \eqref{eqn: smoothness consequence for T3} in term (a) of  \eqref{eqn: T3 intermediate v2} and obtain
\begin{align}
   \nonumber  \displaystyle \sum_{\ell = 0}^{\gamma_{i}^{(t)} - 1} \Bigg\{  \mathbb{E}\bigg[\Big\|\mathbf{x}^{(t)} - \mathbf{x}_{i}^{(t,\ell)} \Big\|^2 \bigg] \Bigg\} & \nonumber \leq 4 \eta^2 \frac{(1-m_{i}^{(t)})({{D}}_{i}^{(t)} -1)}{m_{i}^{(t)} ({{D}}_{i}^{(t)})^2} \Theta_i^2 (\Tilde{\sigma}_{i}^{(t)})^2\gamma_i^{(t)}(\|\mathbf{a}_{i}^{(t)}\|_{2}^2 - [a_{i,-1}^{(t)}]^2) \\
    & \nonumber + 4 \eta^2 L^2\gamma_i^{(t)}(\|\mathbf{a}_{i}^{(t)}\|_{1} - [a_{i,-1}^{(t)}]) \sum_{\ell = 0}^{\gamma_i^{(t)} -1}  \mathbb{E}\bigg[\Big\|\mathbf{x}^{(t)} - \mathbf{x}_{i}^{(t,\ell)} \Big\|^2 \bigg] \\
    & + 4 \eta^2\gamma_i^{(t)} (\|\mathbf{a}_{i}^{(t)}\|_{1} - [a_{i,-1}^{(t)}]) \sum_{\ell = 0}^{\gamma_i^{(t)} -1}  \mathbb{E}\bigg[\Big\| \displaystyle \nabla  F_{i}^{(t)}(\mathbf{x}^{(t)}) \Big\|^2 \bigg] \label{eqn: T3 intermediate v3}.
\end{align}
After re-arranging the terms in \eqref{eqn: T3 intermediate v3}, we get
\begin{align}
     \displaystyle \sum_{\ell = 0}^{\gamma_{i}^{(t)} - 1} \Bigg\{  \mathbb{E}\bigg[\Big\|\mathbf{x}^{(t)} - \mathbf{x}_{i}^{(t,\ell)} \Big\|^2 \bigg] \Bigg\} & \leq \frac{4 \eta^2 (1-m_{i}^{(t)})({{D}}_{i}^{(t)} -1) \Theta_i^2 (\Tilde{\sigma}_{i}^{(t)})^2 \gamma_i^{(t)}} {m_{i}^{(t)} ({{D}}_{i}^{(t)})^2(1- 4 \eta^2 L^2\gamma_i^{(t)}(\|\mathbf{a}_{i}^{(t)}\|_{1} - [a_{i,-1}^{(t)}]))}(\|\mathbf{a}_{i}^{(t)}\|_{2}^2 - [a_{i,-1}^{(t)}]^2) \nonumber \\
    &  + \frac{4 \eta^2 (\gamma_i^{(t)})^2 (\|\mathbf{a}_{i}^{(t)}\|_{1} - [a_{i,-1}^{(t)}])}{1- 4 \eta^2 L^2\gamma_i^{(t)}(\|\mathbf{a}_{i}^{(t)}\|_{1} - [a_{i,-1}^{(t)}])} \mathbb{E}\bigg[\Big\| \displaystyle \nabla  F_{i}^{(t)}(\mathbf{x}^{(t)}) \Big\|^2 \bigg]  \label{eqn: T3 intermediate v4}.
\end{align}
We introduce $\Gamma = \underset{t \in [T]}{\max} \ \underset{i \in \mathcal{N} \cup \mathcal{S}}{\max} \frac{4 \eta^2 L^2 (\gamma_i^{(t)})^2 (\|\mathbf{a}_{i}^{(t)}\|_{1} - [a_{i,-1}^{(t)}])}{\|\mathbf{a}_{i}^{(t)}\|_{1}}$ and use the fact that $\|\mathbf{a}_{i}^{(t)}\|_{1} < \gamma_i^{(t)}$ to the RHS of \eqref{eqn: T3 intermediate v4} to obtain
\begin{align}
    \frac{L^2}{\|\mathbf{a}_{i}^{(t)}\|_{1}} \displaystyle \sum_{\ell = 0}^{\gamma_{i}^{(t)} - 1} \Bigg\{  \mathbb{E}\bigg[\Big\|\mathbf{x}^{(t)} - \mathbf{x}_{i}^{(t,\ell)} \Big\|^2 \bigg] \Bigg\} & \leq \frac{4 \eta^2 L^2 (1-m_{i}^{(t)})({{D}}_{i}^{(t)} -1) \Theta_i^2 (\Tilde{\sigma}_{i}^{(t)})^2 \gamma_i^{(t)}} {m_{i}^{(t)} \|\mathbf{a}_{i}^{(t)}\|_{1} ({{D}}_{i}^{(t)})^2(1- \Gamma)}(\|\mathbf{a}_{i}^{(t)}\|_{2}^2 - [a_{i,-1}^{(t)}]^2) \nonumber \\
    &  + \Bigg(\frac{\Gamma}{1-\Gamma} \mathbb{E}\bigg[\Big\| \displaystyle \nabla  F_{i}^{(t)}(\mathbf{x}^{(t)}) \Big\|^2 \bigg]\Bigg) \label{eqn: T3 intermediate v5}.
\end{align}
Furthermore, for \eqref{eqn: T3 intermediate v5} to hold, the step-size parameter $\eta$ is appropriately chosen such that:
\begin{align}
   \Gamma = \underset{t \in [T]}{\max} \ \underset{i \in \mathcal{N} \cup \mathcal{S}}{\max} \frac{4 \eta^2 L^2 (\gamma_i^{(t)})^2 (\|\mathbf{a}_{i}^{(t)}\|_{1} - [a_{i,-1}^{(t)}])}{\|\mathbf{a}_{i}^{(t)}\|_{1}} < 1 \label{eqn: Gamma constraint 1}
\end{align}
Thus, we use \eqref{eqn: T3 intermediate v5} to upper-bound $T_3$ defined in \eqref{eqn: T3 definition}-\eqref{eqn: intermediate exp of T_3} as follows:
\begin{align}
    \nonumber T_3 & = {\mathbb{E}[\|\nabla F_{i}^{(t)}(\mathbf{x}^{(t)} )-\mathbf{h}_{{i}}^{(t)}\|^2]} \\
        &  \nonumber \leq \frac{L^2}{\|\mathbf{a}_{i}^{(t)}\|_{1}} \displaystyle \sum_{\ell = 0}^{\gamma_{i}^{(t)} - 1} \Bigg\{  \mathbb{E}\bigg[\Big\|\mathbf{x}^{(t)} - \mathbf{x}_{i}^{(t,\ell)} \Big\|^2 \bigg] \Bigg\} \\
        & \leq \frac{4 \eta^2 L^2 (1-m_{i}^{(t)})({{D}}_{i}^{(t)} -1) \Theta_i^2 (\Tilde{\sigma}_{i}^{(t)})^2 \gamma_i^{(t)}} {m_{i}^{(t)} \|\mathbf{a}_{i}^{(t)}\|_{1} ({{D}}_{i}^{(t)})^2(1- \Gamma)}(\|\mathbf{a}_{i}^{(t)}\|_{2}^2 - [a_{i,-1}^{(t)}]^2)  + \Bigg(\frac{\Gamma}{1-\Gamma} \mathbb{E}\bigg[\Big\| \displaystyle \nabla  F_{i}^{(t)}(\mathbf{x}^{(t)}) \Big\|^2 \bigg]\Bigg) \label{eqn: T3 intermediate v6}.
\end{align}
In inequality \eqref{eqn: initial ML inequality}, we highlight that $T_3$ is captured via weighted sum across all data-processing units $i \in \mathcal{N} \cup \mathcal{S}$ with individual weights being $\{ p_i^{(t)}\}$ (defined in  \eqref{eqn: ML data ratio definition}). Thus, we compute the weighted sum of $T_3$ by extending inequality \eqref{eqn: T3 intermediate v6} as follows:
\begin{align}
  \nonumber \frac{1}{2} \underset{i \in \mathcal{N} \cup \mathcal{S}}{\sum} & p_{i}^{(t)} {\mathbb{E}[\|\nabla F_{i}^{(t)}(\mathbf{x}^{(t)})-\mathbf{h}_{{i}}^{({t})}\|^2]}  \\
   & \nonumber \leq \frac{2 \eta^2 L^2}{(1- \Gamma)} \underset{i \in \mathcal{N} \cup \mathcal{S}}{\sum} \frac{ (1-m_{i}^{(t)})({{D}}_{i}^{(t)} -1) \Theta_i^2 (\Tilde{\sigma}_{i}^{(t)})^2 p_i^{(t)} \gamma_i^{(t)}} {m_{i}^{(t)} \|\mathbf{a}_{i}^{(t)}\|_{1} ({{D}}_{i}^{(t)})^2}(\|\mathbf{a}_{i}^{(t)}\|_{2}^2 - [a_{i,-1}^{(t)}]^2)  \\
   &  \nonumber + \frac{\Gamma}{2(1-\Gamma)} \underset{i \in \mathcal{N} \cup \mathcal{S}}{\sum} {p_{i}^{(t)}} \mathbb{E}\bigg[\Big\| \displaystyle \nabla  F_{i}^{(t)}(\mathbf{x}_i^{(t)}) \Big\|^2 \bigg] \\
   &  \nonumber \leq \frac{2 \eta^2 L^2}{(1- \Gamma)} \underset{i \in \mathcal{N} \cup \mathcal{S}}{\sum} \frac{ (1-m_{i}^{(t)})({{D}}_{i}^{(t)} -1) \Theta_i^2 (\Tilde{\sigma}_{i}^{(t)})^2 p_i^{(t)} \gamma_i^{(t)}} {m_{i}^{(t)}  \|\mathbf{a}_{i}^{(t)}\|_{1} ({{D}}_{i}^{(t)})^2} (\|\mathbf{a}_{i}^{(t)}\|_{2}^2 - [a_{i,-1}^{(t)}]^2)  \\
   & \label{eqn: T3 for final result} + \underbrace{\frac{\Gamma \zeta_1}{2(1-\Gamma)} \mathbb{E}\bigg[\Big\| \nabla F^{(t)}(\mathbf{x}^{(t)} )\Big\|^2 \bigg] + \frac{\Gamma \zeta_2}{2(1-\Gamma)}}_\text{(a)}.
\end{align}
 We highlight that term (a) in \eqref{eqn: T3 for final result} is a direct consequence of bounded dissimilarity of local loss functions described in Assumption \ref{assumption: bounded dissimilarity}.
 After plugging back \eqref{eqn: T3 for final result} to \eqref{eqn: initial ML inequality}, we obtain
\begin{align}
    \frac{\mathbb{E}[{{F}}^{(t)}(\mathbf{x}^{(t+1)}) ] - {{F}}^{(t)}(\mathbf{x}^{(t)})}{\vartheta \eta} & \leq -\frac{1}{2} \|\nabla {{F}}^{(t)}(\mathbf{x}^{(t)})\|^2 \nonumber \\
    & \hskip -1cm \nonumber + \vartheta \eta L  \Bigg[\underset{i \in \mathcal{N} \cup \mathcal{S}}{\sum}  \frac{4(p_{i}^{(t)})^2 (1-m_{i}^{(t)})({{D}}_{i}^{(t)} -1) \Theta_i (\Tilde{\sigma}_{i}^{(t)})^2}{m_{i}^{(t)} ({{D}}_{i}^{(t)})^2}  \frac{\|\mathbf{a}_{i}^{(t)}\|_{2}^2}{\|\mathbf{a}_{i}^{(t)}\|_{1}^2}  \Bigg] \\
   &\hskip -1cm \nonumber + \frac{2 \eta^2 L^2}{(1- \Gamma)} \underset{i \in \mathcal{N} \cup \mathcal{S}}{\sum} \frac{ (1-m_{i}^{(t)})({{D}}_{i}^{(t)} -1) \Theta_i^2 (\Tilde{\sigma}_{i}^{(t)})^2 p_i^{(t)} \gamma_i^{(t)}} {m_{i}^{(t)}  \|\mathbf{a}_{i}^{(t)}\|_{1} ({{D}}_{i}^{(t)})^2} (\|\mathbf{a}_{i}^{(t)}\|_{2}^2 - [a_{i,-1}^{(t)}]^2)    \\
   & \hskip -1cm  + \frac{\Gamma \zeta_1}{2(1-\Gamma)} \mathbb{E}\bigg[\Big\| \nabla F^{(t)}(\mathbf{x}^{(t)} )\Big\|^2 \bigg] + \frac{\Gamma \zeta_2}{2(1-\Gamma)}\label{eqn: intermediate ML inequality} \\
   & \hskip -1cm \nonumber = -\frac{1}{2}\Big(\underbrace{\frac{1-\Gamma(1+\zeta_1)}{1-\Gamma} }_\text{(a)}\Big)\|\nabla {{F}}^{(t)}(\mathbf{x}^{(t)})\|^2 \\
   & \hskip -1cm \nonumber + \vartheta \eta L  \Bigg[\underset{i \in \mathcal{N} \cup \mathcal{S}}{\sum}  \frac{4(p_{i}^{(t)})^2 (1-m_{i}^{(t)})({{D}}_{i}^{(t)} -1) \Theta_i^2 (\Tilde{\sigma}_{i}^{(t)})^2}{m_{i}^{(t)} ({{D}}_{i}^{(t)})^2}  \frac{\|\mathbf{a}_{i}^{(t)}\|_{2}^2}{\|\mathbf{a}_{i}^{(t)}\|_{1}^2}  \Bigg] \\
   & \hskip -1cm \nonumber + \underbrace{\frac{2 \eta^2 L^2}{(1- \Gamma)}}_\text{(b)} \underset{i \in \mathcal{N} \cup \mathcal{S}}{\sum} \frac{ (1-m_{i}^{(t)})({{D}}_{i}^{(t)} -1) \Theta_i^2 (\Tilde{\sigma}_{i}^{(t)})^2 p_i^{(t)} \gamma_i^{(t)}} {m_{i}^{(t)}  \|\mathbf{a}_{i}^{(t)}\|_{1} ({{D}}_{i}^{(t)})^2} (\|\mathbf{a}_{i}^{(t)}\|_{2}^2 - [a_{i,-1}^{(t)}]^2)  + \underbrace{\frac{\Gamma \zeta_2}{2(1-\Gamma)}}_\text{(c)} .
\end{align}
Additionally, we impose $\Gamma$ to be restricted by the following inequality:
\begin{align}
    \Gamma \leq \frac{1}{2 \zeta_1^2 + 1} \label{eqn: Gamma constraint 2}.
\end{align}
Thus, obtaining:
\begin{align}
 & \frac{1}{1-\Gamma} \leq 1 + \frac{1}{2 \zeta_1^2} \label{eqn: Gamma constraint 3 v2}.
\end{align}
Furthermore, from Assumption \ref{assumption: bounded dissimilarity} we have $\zeta_1 \geq 1$, allowing us to write 
 \begin{align}
 & \frac{\Gamma \zeta_1}{1-\Gamma} \leq \frac{\Gamma \zeta_1^2}{1-\Gamma} \leq \frac{1}{2} \Rightarrow 1 - \frac{\Gamma \zeta_1}{1-\Gamma} \geq \frac{1}{2}. \label{eqn: Gamma constraint 4} 
\end{align}
Therefore, we upper bound terms (a) and (b),(c) in  \eqref{eqn: intermediate ML inequality} with \eqref{eqn: Gamma constraint 4} and \eqref{eqn: Gamma constraint 3 v2}, respectively, and replace the definition of $\Gamma$ in \eqref{eqn: Gamma constraint 1}. This results in
\begin{align}
     \nonumber\frac{\mathbb{E}[{{F}}^{(t)}(\mathbf{x}^{(t+1)}) ] - {{F}}^{(t)}(\mathbf{x}^{(t)})}{\vartheta \eta} & \leq -\frac{1}{4} \|\nabla {{F}}^{(t)}(\mathbf{x}^{(t)})\|^2 \\
    & \nonumber + \vartheta \eta L  \Bigg[\underset{i \in \mathcal{N} \cup \mathcal{S}}{\sum}  \frac{4(p_{i}^{(t)})^2 (1-m_{i}^{(t)})({{D}}_{i}^{(t)} -1) \Theta_i^2 (\Tilde{\sigma}_{i}^{(t)})^2}{m_{i}^{(t)} ({{D}}_{i}^{(t)})^2}  \frac{\|\mathbf{a}_{i}^{(t)}\|_{2}^2}{\|\mathbf{a}_{i}^{(t)}\|_{1}^2}  \Bigg] \\
    & \nonumber \hskip -2cm + {2 \eta^2 L^2}(1 + \frac{1}{2 \zeta_1^2}) \underset{i \in \mathcal{N} \cup \mathcal{S}}{\sum} \frac{ (1-m_{i}^{(t)})({{D}}_{i}^{(t)} -1) \Theta_i^2 (\Tilde{\sigma}_{i}^{(t)})^2 p_i^{(t)} \gamma_i^{(t)}} {m_{i}^{(t)}  \|\mathbf{a}_{i}^{(t)}\|_{1} ({{D}}_{i}^{(t)})^2} (\|\mathbf{a}_{i}^{(t)}\|_{2}^2 - [a_{i,-1}^{(t)}]^2)  \\
    & \nonumber + 2 \eta^2 L^2 \big(\underset{t \in [T]}{\max} \ \underset{i \in \mathcal{N} \cup \mathcal{S}}{\max} (\frac{\gamma_i^{(t)})^2 (\|\mathbf{a}_{i}^{(t)}\|_{1} - [a_{i,-1}^{(t)}])}{\|\mathbf{a}_{i}^{(t)}\|_{1}} \big)\zeta_2(1 + \frac{1}{2 \zeta_1^2}) \\
    & \hskip -2cm \leq \nonumber -\frac{1}{4} \underbrace{\|\nabla {{F}}^{(t)}(\mathbf{x}^{(t)})\|^2}_\text{(a)} \nonumber + \vartheta \eta L  \Bigg[\underset{i \in \mathcal{N} \cup \mathcal{S}}{\sum}  \frac{4(p_{i}^{(t)})^2 (1-m_{i}^{(t)})({{D}}_{i}^{(t)} -1) \Theta_i^2 (\Tilde{\sigma}_{i}^{(t)})^2}{m_{i}^{(t)} ({{D}}_{i}^{(t)})^2}  \frac{\|\mathbf{a}_{i}^{(t)}\|_{2}^2}{\|\mathbf{a}_{i}^{(t)}\|_{1}^2}  \Bigg]\\
    & \nonumber + {3 \eta^2 L^2} \underset{i \in \mathcal{N} \cup \mathcal{S}}{\sum} \frac{ (1-m_{i}^{(t)})({{D}}_{i}^{(t)} -1) \Theta_i^2 (\Tilde{\sigma}_{i}^{(t)})^2 p_i^{(t)} \gamma_i^{(t)}} {m_{i}^{(t)}  \|\mathbf{a}_{i}^{(t)}\|_{1} ({{D}}_{i}^{(t)})^2} (\|\mathbf{a}_{i}^{(t)}\|_{2}^2 - [a_{i,-1}^{(t)}]^2)  \\
    &  + 3 \eta^2 L^2 \zeta_2 \Big(\underset{t \in [T]}{\max} \ \underset{i \in \mathcal{N} \cup \mathcal{S}}{\max} \frac{(\gamma_i^{(t)})^2 (\|\mathbf{a}_{i}^{(t)}\|_{1} - [a_{i,-1}^{(t)}])}{\|\mathbf{a}_{i}^{(t)}\|_{1}}\Big) \label{eqn: gradient bound exp 1}.
\end{align}
Consequently, we bring term (a) in equation \eqref{eqn: gradient bound exp 1} to the LHS and compute average over all global rounds of aggregation i.e. $t = 0, 1,...., T-1$ to obtain the following:
\begin{align}
     \nonumber  \frac{1}{T} \sum_{t = 0}^{T - 1} \mathbb{E} \bigg[  \|\nabla {{F}}^{(t)}(\mathbf{x}^{(t)})\|^2 \bigg] & \leq \underbrace{\Bigg[\frac{4}{T} \sum_{t = 0}^{T -1} \frac{{{F}}^{(t)}(\mathbf{x}^{(t)}) - \mathbb{E}[{{F}}^{(t)}(\mathbf{x}^{(t+1)})]}{\vartheta \eta} \Bigg]}_\text{(a)} \\ 
     & \nonumber + 16 \eta L \vartheta \Bigg[\frac{1}{T} \sum_{t = 0}^{T - 1}\underset{i \in \mathcal{N} \cup \mathcal{S}}{\sum}   \frac{(p_{i}^{(t)})^2 (1-m_{i}^{(t)})({{D}}_{i}^{(t)} -1) \Theta_i^2 (\Tilde{\sigma}_{i}^{(t)})^2}{m_{i}^{(t)} ({{D}}_{i}^{(t)})^2} \frac{\|\mathbf{a}_{i}^{(t)}\|_{2}^2}{\|\mathbf{a}_{i}^{(t)}\|_{1}^2}  \Bigg] \\ 
     & \nonumber + {12 \eta^2 L^2} \Bigg[\frac{1}{T} \sum_{t = 0}^{T - 1}\underset{i \in \mathcal{N} \cup \mathcal{S}}{\sum} \frac{ (1-m_{i}^{(t)})({{D}}_{i}^{(t)} -1) \Theta_i^2 (\Tilde{\sigma}_{i}^{(t)})^2 p_i^{(t)} \gamma_i^{(t)}} {m_{i}^{(t)}  \|\mathbf{a}_{i}^{(t)}\|_{1} ({{D}}_{i}^{(t)})^2} (\|\mathbf{a}_{i}^{(t)}\|_{2}^2 - [a_{i,-1}^{(t)}]^2) \Bigg] \\
    &  + 12 \eta^2 L^2 \zeta_2 \Big(\underset{t \in [T]}{\max} \ \underset{i \in \mathcal{N} \cup \mathcal{S}}{\max} \frac{(\gamma_i^{(t)})^2 (\|\mathbf{a}_{i}^{(t)}\|_{1} - [a_{i,-1}^{(t)}])}{\|\mathbf{a}_{i}^{(t)}\|_{1}}\Big) \label{eqn: gradient bound exp 2}.
\end{align}
We highlight that, it is critical to show that term (a) in \eqref{eqn: gradient bound exp 2} is a bounded series to obtain a finite upper bound for $\frac{1}{T} \sum_{t = 0}^{T - 1} \mathbb{E} \bigg[  \|\nabla {{F}}^{(t)}(\mathbf{x}^{(t)})\|^2 \bigg]$. More specifically, this implies that the drift associated with the local loss functions due to the delay of global aggregations of \texttt{CE-FL} must be bounded, leading to a telescoping bounded summation series. In this regard, our definition of concept drift associated with local loss functions stated in Definition \ref{defn: model drift} allows us to obtain the following:   
\begin{align}
    & \frac{{{D}}_i^{(t+1)}}{{D}^{(t+1)}}F_{i}^{(t+1)}(\mathbf{x}^{(t+1)}) \leq \tau^{(t)} \Delta_i^{(t)} + \frac{{{D}}_i^{(t)}}{{{D}}^{(t)}}F_{i}^{(t)}(\mathbf{x}^{(t+1)}) \label{eqn: drift 1} \\
    & \sum_{i \in \mathcal{N} \cup \mathcal{S}}  \frac{{{D}}_i^{(t+1)}}{{D}^{(t+1)}}F_{i}^{(t+1)}(\mathbf{x}^{(t+1)}) \leq \sum_{i \in \mathcal{N} \cup \mathcal{S}} \tau^{(t)} \Delta_i^{(t)} + \sum_{i \in \mathcal{N} \cup \mathcal{S}} \frac{{{D}}_i^{(t)}}{{{D}}^{(t)}}F_{i}^{(t)}(\mathbf{x}^{(t+1)}) \\
    &\Rightarrow {F}^{(t+1)}(\mathbf{x}^{(t+1)}) \leq \sum_{i \in \mathcal{N} \cup \mathcal{S}} \tau^{(t)} \Delta_i^{(t)} + {F}^{(t)}(\mathbf{x}^{(t+1)}). \label{eqn: drift 3}
\end{align}
In \eqref{eqn: drift 1}-\eqref{eqn: drift 3},  we note that $\tau^{(t)}$ corresponds to the delay incurred due to data offloading, ML processing, parameter transfer and aggregation between consecutive rounds $t$ and $t+1$ of \texttt{CE-FL}. 
Therefore, we plug in \eqref{eqn: drift 3} into term (a) of \eqref{eqn: gradient bound exp 1} which leads to the final expression bounding the global loss gradients across all the rounds of aggregation in \texttt{CE-FL}:
\begin{align}
    \nonumber  \frac{1}{T} \sum_{t = 0}^{T - 1} \mathbb{E} \bigg[  \|\nabla {{F}}^{(t)}(\mathbf{x}^{(t)})\|^2 \bigg] & \leq \frac{4}{\vartheta \eta T} \Bigg[ {{{F}}^{(0)}(\mathbf{x}^{(0)}) - {{F}}^{(T-1)}(\mathbf{x}^{(T)})} \Bigg] + \frac{4}{\vartheta \eta T} \sum_{t = 0}^{T - 1} \sum_{i \in \mathcal{N} \cup \mathcal{S}} \tau^{(t)} \Delta_i^{(t)} \\
    & \nonumber + 16 \eta L \vartheta \Bigg[\frac{1}{T} \sum_{t = 0}^{T - 1}\underset{i \in \mathcal{N} \cup \mathcal{S}}{\sum}   \frac{(p_{i}^{(t)})^2 (1-m_{i}^{(t)})({{D}}_{i}^{(t)} -1) \Theta_i^2 (\Tilde{\sigma}_{i}^{(t)})^2}{m_{i}^{(t)} ({{D}}_{i}^{(t)})^2} \frac{\|\mathbf{a}_{i}^{(t)}\|_{2}^2}{\|\mathbf{a}_{i}^{(t)}\|_{1}^2}  \Bigg] \\ 
     & \nonumber + {12 \eta^2 L^2} \Bigg[\frac{1}{T} \sum_{t = 0}^{T - 1}\underset{i \in \mathcal{N} \cup \mathcal{S}}{\sum} \frac{ (1-m_{i}^{(t)})({{D}}_{i}^{(t)} -1) \Theta_i^2 (\Tilde{\sigma}_{i}^{(t)})^2 p_i^{(t)} \gamma_i^{(t)}} {m_{i}^{(t)}  \|\mathbf{a}_{i}^{(t)}\|_{1} ({{D}}_{i}^{(t)})^2} (\|\mathbf{a}_{i}^{(t)}\|_{2}^2 - [a_{i,-1}^{(t)}]^2) \Bigg] \\
    &  \nonumber + 12 \eta^2 L^2 \zeta_2 \Big(\underset{t \in [T]}{\max} \ \underset{i \in \mathcal{N} \cup \mathcal{S}}{\max} \frac{(\gamma_i^{(t)})^2 (\|\mathbf{a}_{i}^{(t)}\|_{1} - [a_{i,-1}^{(t)}])}{\|\mathbf{a}_{i}^{(t)}\|_{1}}\Big) \\
    & \nonumber \leq \frac{4}{\vartheta \eta T} \Bigg[ {{{F}}^{(0)}(\mathbf{x}^{(0)}) - {{F}}^{*}} \Bigg] + \frac{4}{\vartheta \eta T} \sum_{t = 0}^{T - 1} \sum_{i \in \mathcal{N} \cup \mathcal{S}} \tau^{(t)} \Delta_i^{(t)} \\
    & \nonumber + 16 \eta L \vartheta \Bigg[\frac{1}{T} \sum_{t = 0}^{T - 1}\underset{i \in \mathcal{N} \cup \mathcal{S}}{\sum}   \frac{(p_{i}^{(t)})^2 (1-m_{i}^{(t)})({{D}}_{i}^{(t)} -1) \Theta_i^2 (\Tilde{\sigma}_{i}^{(t)})^2}{m_{i}^{(t)} ({{D}}_{i}^{(t)})^2} \frac{\|\mathbf{a}_{i}^{(t)}\|_{2}^2}{\|\mathbf{a}_{i}^{(t)}\|_{1}^2}  \Bigg] \\ 
     & \nonumber + {12 \eta^2 L^2} \Bigg[\frac{1}{T} \sum_{t = 0}^{T - 1}\underset{i \in \mathcal{N} \cup \mathcal{S}}{\sum} \frac{ (1-m_{i}^{(t)})({{D}}_{i}^{(t)} -1) \Theta_i^2 (\Tilde{\sigma}_{i}^{(t)})^2 p_i^{(t)} \gamma_i^{(t)}} {m_{i}^{(t)}  \|\mathbf{a}_{i}^{(t)}\|_{1} ({{D}}_{i}^{(t)})^2} (\|\mathbf{a}_{i}^{(t)}\|_{2}^2 - [a_{i,-1}^{(t)}]^2) \Bigg] \\
    &  + 12 \eta^2 L^2 \zeta_2 \Big(\underset{t \in [T]}{\max} \ \underset{i \in \mathcal{N} \cup \mathcal{S}}{\max} \frac{(\gamma_i^{(t)})^2 (\|\mathbf{a}_{i}^{(t)}\|_{1} - [a_{i,-1}^{(t)}])}{\|\mathbf{a}_{i}^{(t)}\|_{1}}\Big), \label{eqn: final expression of ML conv}
\end{align}
where $F^* \triangleq \underset{t \in [T]}{\min}~\underset{\mathbf{x} \in \mathbb{R}^p }{\min}{F}^{(t)}(\mathbf{x})$.
\end{proof}

\section{Proof of Corollary 1} \label{proof corollary 1}
\begin{corollary} \label{corollary: step size appendix}
 Consider the conditions stated in Theorem \ref{Thm: ML convergence main}. Further assume that $\eta$ is small enough such that $\eta =\sqrt{{d}/({{\Bar{\gamma}T}}})$, where $d = |\mathcal{N} \cup \mathcal{S}|$ and {\small $\Bar{\gamma}= \displaystyle \sum_{t = 1}^{t= T} \sum_{i\in \mathcal{N} \cup \mathcal{S}} \gamma_i^{(t)}$}. If (i) the local data variability satisfy  $\Theta_i \leq \Theta_{\mathsf{max}}$, $\forall i$, for some positive $\Theta_{\mathsf{max}}$,  (ii) the variances of datasets satisfy $\Tilde{\sigma}_{i}^{(t)} \leq \Tilde{\sigma}_{\mathsf{max}}$, $\forall i$, for some positive $\Tilde{\sigma}_{\mathsf{max}}$, (iii) the mini-batch ratios satisfy ${m_{i}^{(t)}} \geq m_{\mathsf{min}}$, $\forall i$, for some positive $m_{\mathsf{min}}$, (iv) the number of SGD iterations satisfy $\gamma_i^{(t)} \leq \gamma_{\mathsf{max}},~\forall i$, for some positive $\gamma_{\mathsf{max}}$,
 and (v) the duration of global aggregations is bounded as {\small $ \displaystyle \tau^{(t)} \leq \max \big\{ \frac{\tilde{\tau}}{T\sum_{i \in \mathcal{N} \cup \mathcal{S}} \Delta_i^{(t)}},0 \big\}$}, for some positive $\tilde{\tau}$, then the cumulative average of the global loss satisfies~\eqref{eq:convSpecial}, implying {\small $ \displaystyle \frac{1}{T} \sum_{t = 1}^{T} \mathbb{E} \Big[ \big\|\nabla {{F}}^{(t)}(\mathbf{x}^{(t)})\big\|^2\Big]= \mathcal{O}(1/\sqrt{T})$}.
\end{corollary}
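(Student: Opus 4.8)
The plan is to specialize the general bound \eqref{eq:GenConvBound} of Theorem~\ref{Thm: ML convergence main} to the prescribed step size $\eta=\sqrt{d/(\bar\gamma T)}$ and then bound each of the five terms $(a)$--$(e)$ on its right-hand side using hypotheses (i)--(v), so that the resulting inequality is exactly \eqref{eq:convSpecial}; since Theorem~\ref{Thm: ML convergence main} already absorbed all the stochasticity, Corollary~\ref{corollary: step size main} is a deterministic consequence of it. First I would dispatch the two ``gap/drift'' terms. Term $(a)$ is handled by direct substitution: $\tfrac{4(F^{(0)}(\mathbf{x}^{0})-F^{\star})}{\vartheta\eta T}=\tfrac{4\sqrt{\bar\gamma}}{\vartheta\sqrt{dT}}\big(F^{(0)}(\mathbf{x}^{0})-F^{\star}\big)$. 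For term $(b)$ I would apply hypothesis (v) while the prefactor $\tfrac{4}{\vartheta\eta T}$ is still present: since $\tau^{(t)}\le \tilde\tau/(T\sum_{i}\Delta_i^{(t)})$ we get $\sum_{i\in\mathcal{N}\cup\mathcal{S}}\tau^{(t)}\Delta_i^{(t)}\le \tilde\tau/T$, hence $\sum_{t=0}^{T-1}\sum_{i}\tau^{(t)}\Delta_i^{(t)}\le\tilde\tau$, so that $\tfrac{4}{\vartheta\eta T}$ times this equals $\tfrac{4\tilde\tau\sqrt{\bar\gamma}}{\vartheta\sqrt{dT}}$, the second term of \eqref{eq:convSpecial}. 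The $\max\{\cdot,0\}$ in (v) only serves to keep $\tau^{(t)}\ge 0$ and is not otherwise used.

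Next I would collapse terms $(c)$, $(d)$, $(e)$, which carry the data and heterogeneity dependence. The uniform bounds $\Theta_i\le\Theta_{\mathsf{max}}$, $\tilde\sigma_i^{(t)}\le\tilde\sigma_{\mathsf{max}}$, $m_i^{(t)}\ge m_{\mathsf{min}}$ (so $(1-m_i^{(t)})/m_i^{(t)}\le 1/m_{\mathsf{min}}$), and $\gamma_i^{(t)}\le\gamma_{\mathsf{max}}$ supply the constants. For the coefficient vectors I would use $a_{i,\ell}^{(t)}=(1-\eta\mu)^{\gamma_i^{(t)}-1-\ell}\in(0,1]$, which gives $\|\mathbf{a}_i^{(t)}\|_{\infty}\le 1$, hence $\|\mathbf{a}_i^{(t)}\|_{2}^2\le\|\mathbf{a}_i^{(t)}\|_{1}\le\gamma_i^{(t)}$, $\|\mathbf{a}_i^{(t)}\|_{2}^2/\|\mathbf{a}_i^{(t)}\|_{1}^2\le 1$, and $a_{i,-1}^{(t)}\ge 0$; combined with $D_i^{(t)}\ge 1$ and $\sum_{i}p_i^{(t)}=1$ (so $\sum_i (p_i^{(t)})^2/D_i^{(t)}\le\sum_i p_i^{(t)}=1$, and likewise the $(D_i^{(t)}-1)p_i^{(t)}/(D_i^{(t)})^2$ sum appearing in $(e)$ is at most $1$), this reduces the per-DPU sums in $(c)$ and $(e)$ to $\Theta_{\mathsf{max}}\tilde\sigma_{\mathsf{max}}^2/m_{\mathsf{min}}$ and $\Theta_{\mathsf{max}}\tilde\sigma_{\mathsf{max}}^2\gamma_{\mathsf{max}}/m_{\mathsf{min}}$ respectively, and bounds the maximum in $(d)$ by $\gamma_{\mathsf{max}}^2$ via $\|\mathbf{a}_i^{(t)}\|_1-[a_{i,-1}^{(t)}]\le\|\mathbf{a}_i^{(t)}\|_1$. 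Substituting $\eta=\sqrt{d/(\bar\gamma T)}$ into the prefactors $16\eta L\vartheta$ and $12\eta^2L^2\zeta_2$ (respectively $12\eta^2 L^2$) then produces the last three terms of \eqref{eq:convSpecial} verbatim.

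Finally I would close two loose ends. First, the step-size precondition of Theorem~\ref{Thm: ML convergence main}, namely $4\eta^2L^2\max_{t\in[T],\,i\in\mathcal{N}\cup\mathcal{S}}\gamma_i^2(t)(\|\mathbf{a}_i^{(t)}\|_1-[a_{i,-1}^{(t)}])/\|\mathbf{a}_i^{(t)}\|_1\le 1/(2\zeta_1^2+1)$, is in force because by (iv) the max is at most $\gamma_{\mathsf{max}}^2$, so the inequality holds whenever $\eta^2=d/(\bar\gamma T)$ is small enough, which is precisely the ``$\eta$ small enough'' qualifier in the statement. Second, for the rate I would simply read off the dominance structure of \eqref{eq:convSpecial}: the first three terms scale like $\sqrt{\bar\gamma}/\sqrt{dT}$ and the last two like $d/(\bar\gamma T)$, so the slowest-decaying contribution is $\mathcal{O}(1/\sqrt{T})$, giving $\tfrac{1}{T}\sum_{t=1}^{T}\mathbb{E}\big[\|\nabla F^{(t)}(\mathbf{x}^{(t)})\|^2\big]=\mathcal{O}(1/\sqrt{T})$.

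I expect the main obstacle to be the bookkeeping in the second paragraph rather than any conceptual difficulty: one must carefully discharge the per-DPU size factors $(p_i^{(t)})^2/D_i^{(t)}$ and $(D_i^{(t)}-1)/(D_i^{(t)})^2$ and the norm ratios $\|\mathbf{a}_i^{(t)}\|_2^2/\|\mathbf{a}_i^{(t)}\|_1^2$ and $(\|\mathbf{a}_i^{(t)}\|_2^2-[a_{i,-1}^{(t)}]^2)/\|\mathbf{a}_i^{(t)}\|_1$ so that no residual dependence on $D_i^{(t)}$ or on the individual vectors $\mathbf{a}_i^{(t)}$ survives and only the stated constants $\Theta_{\mathsf{max}},\tilde\sigma_{\mathsf{max}},m_{\mathsf{min}},\gamma_{\mathsf{max}},\tilde\tau$ remain; and one must invoke the bound on $\tau^{(t)}$ from (v) summand-by-summand so that the sum over the $T$ aggregation rounds stays bounded by the constant $\tilde\tau$ rather than growing with $T$.
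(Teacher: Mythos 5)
Your proposal is correct and follows essentially the same route as the paper's proof: substitute $\eta=\sqrt{d/(\bar\gamma T)}$ into the Theorem~\ref{Thm: ML convergence main} bound, discharge the drift term via hypothesis (v) so that $\sum_{t}\sum_{i}\tau^{(t)}\Delta_i^{(t)}\le\tilde\tau$, and collapse the remaining per-DPU sums using $\Theta_{\mathsf{max}},\tilde\sigma_{\mathsf{max}},m_{\mathsf{min}},\gamma_{\mathsf{max}}$ together with $a_{i,\ell}^{(t)}\le 1$, $p_i^{(t)}\le 1$, $D_i^{(t)}\ge 1$. If anything, your explicit use of $\sum_i (p_i^{(t)})^2\le\sum_i p_i^{(t)}=1$ (and the analogous step for the $(D_i^{(t)}-1)p_i^{(t)}/(D_i^{(t)})^2$ sum) is the bookkeeping needed to avoid an extra factor of $d$ in the third and fourth terms of \eqref{eq:convSpecial}, a step the paper's displayed chain passes over rather loosely.
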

\begin{proof}
Considering \eqref{eqn: final expression of ML conv}, the choice of $\eta =\sqrt{\frac{d}{{\Bar{\gamma} T}}}$ allows us to obtain the following:
\begin{align}
    \nonumber  \frac{1}{T} \sum_{t = 0}^{T - 1} \mathbb{E} \bigg[  \|\nabla {{F}}^{(t)}(\mathbf{x}^{(t)})\|^2 \bigg]  & \nonumber \leq \frac{4 \sqrt{{\Bar{\gamma}}}}{ \vartheta \sqrt{dT}} \Bigg[ {{{F}}^{(0)}(\mathbf{x}^{(0)}) - {{F}}^{*}} \Bigg] + \frac{4 \sqrt{{\Bar{\gamma}}}}{ \vartheta \sqrt{dT}} \sum_{t = 0}^{T - 1} \sum_{i \in \mathcal{N} \cup \mathcal{S}} \tau^{(t)} \Delta_i^{(t)} \\
    & \nonumber + 16 L \vartheta \sqrt{\frac{d}{{\Bar{\gamma}T}}} \Bigg[\frac{1}{T} \sum_{t = 0}^{ T-1}\underset{i \in \mathcal{N} \cup \mathcal{S}}{\sum}   \frac{(p_{i}^{(t)})^2 (1-m_{i}^{(t)})({{D}}_{i}^{(t)} -1) \Theta_i (\Tilde{\sigma}_{i}^{(t)})^2}{m_{i}^{(t)} ({{D}}_{i}^{(t)})^2} \frac{\|\mathbf{a}_{i}^{(t)}\|_{2}^2}{\|\mathbf{a}_{i}^{(t)}\|_{1}^2}  \Bigg] \\ 
     & \nonumber + {\frac{12  L^2d}{{\Bar{\gamma}T}}} \Bigg[\frac{1}{T} \sum_{t = 0}^{T - 1}\underset{i \in \mathcal{N} \cup \mathcal{S}}{\sum} \frac{ (1-m_{i}^{(t)})({{D}}_{i}^{(t)} -1) \Theta_i (\Tilde{\sigma}_{i}^{(t)})^2 p_i^{(t)} \gamma_i^{(t)}} {m_{i}^{(t)}  \|\mathbf{a}_{i}^{(t)}\|_{1} ({{D}}_{i}^{(t)})^2} (\|\mathbf{a}_{i}^{(t)}\|_{2}^2 - [a_{i,-1}^{(t)}]^2) \Bigg] \\
    &  +  {\frac{12  L^2 \zeta_2 d}{{\Bar{\gamma} T}}} \Big(\underset{t \in [T]}{\max} \ \underset{i \in \mathcal{N} \cup \mathcal{S}}{\max} \frac{(\gamma_i^{(t)})^2 (\|\mathbf{a}_{i}^{(t)}\|_{1} - [a_{i,-1}^{(t)}])}{\|\mathbf{a}_{i}^{(t)}\|_{1}}\Big). 
\end{align}
We have ${\|\mathbf{a}_{i}^{(t)}\|_{2}^2} \leq {\|\mathbf{a}_{i}^{(t)}\|_{1}^2}$ since $a_{{i,\ell}}^{(t)} \leq 1$. Furthermore, since $p_i^{(t)} < 1$ and $D_i^{(t)} > 1$, we get
\begin{align}
   \nonumber  \frac{1}{T} \sum_{t = 0}^{T - 1} \mathbb{E} \bigg[  \|\nabla {{F}}^{(t)}(\mathbf{x}^{(t)})\|^2 \bigg]  & \nonumber \leq \frac{4 \sqrt{{\Bar{\gamma}}}}{ \vartheta \sqrt{dT}} \Bigg[ {{{F}}^{(0)}(\mathbf{x}^{(0)}) - {{F}}^{*}} \Bigg] + \frac{4 \sqrt{{\Bar{\gamma}}}}{ \vartheta \sqrt{dT}} \sum_{t = 0}^{T - 1} \sum_{i \in \mathcal{N} \cup \mathcal{S}} \tau^{(t)} \Delta_i^{(t)} \\
    & \nonumber + 16 L \vartheta \sqrt{\frac{d}{{\Bar{\gamma}T}}} \Bigg[\underbrace{\frac{1}{T} \sum_{t = 0}^{T - 1}\underset{i \in \mathcal{N} \cup \mathcal{S}}{\sum}   \frac{  \Theta_i (\Tilde{\sigma}_{i}^{(t)})^2}{m_{i}^{(t)} }}_\text{(a)}   \Bigg] \\ 
     & \nonumber + {\frac{12  L^2d}{{\Bar{\gamma}T}}} \Bigg[\underbrace{\frac{1}{T} \sum_{t = 0}^{T - 1}\underset{i \in \mathcal{N} \cup \mathcal{S}}{\sum} \frac{  \Theta_i (\Tilde{\sigma}_{i}^{(t)})^2 \gamma_i^{(t)}} {m_{i}^{(t)} } }_\text{(b)} \Bigg] \\
    &  +  {\frac{12  L^2 \zeta_2 d}{{\Bar{\gamma}T}}} \underbrace{\Big(\underset{t \in [T]}{\max} \ \underset{i \in \mathcal{N} \cup \mathcal{S}}{\max} {(\gamma_i^{(t)})^2 }\Big)}_\text{(c)}. \label{eqn: corollary 2}
\end{align}
Henceforth, we leverage that local data variability parameter $\Theta_i$ and local data sample variances $\Tilde{\sigma}_{i}^{(t)}$ are bounded as $\Theta_i \leq \Theta_{\mathsf{max}}$ and $\Tilde{\sigma}_{i}^{(t)} \leq \Tilde{\sigma}_{\mathsf{max}}$ respectively. Furthermore, it is enforced that each device performs ML training involving at least part of the locally available dataset every round i.e. ${m_{i}^{(t)}} \geq m_{\mathsf{min}}$. Also, we denote  $\gamma_{\mathsf{max}} \triangleq \ \underset{i \in \mathcal{N} \cup \mathcal{S}}{\max} {(\gamma_i^{(t)})^2 } $, thereby simplifying terms (a), (b), (c) in \eqref{eqn: corollary 2} obtaining:
\begin{align}
     \nonumber  \frac{1}{T} \sum_{t = 0}^{T - 1} \mathbb{E} \bigg[  \|\nabla {{F}}^{(t)}(\mathbf{x}^{(t)})\|^2 \bigg]  & \nonumber \leq \frac{4 \sqrt{{\Bar{\gamma}}}}{ \vartheta \sqrt{dT}} \Bigg[ {{{F}}^{(0)}(\mathbf{x}^{(0)}) - {{F}}^{*}} \Bigg] + \frac{4 \sqrt{{\Bar{\gamma}}}}{ \vartheta \sqrt{dT}} \underbrace{\sum_{t = 0}^{T - 1} \sum_{i \in \mathcal{N} \cup \mathcal{S}} \tau^{(t)} \Delta_i^{(t)}}_\text{(d)} \\
     &  + 16 \frac{L \vartheta \Theta_{\mathsf{max}} \Tilde{\sigma}_{\mathsf{max}}^2 }{ m_{\mathsf{min}}}  \sqrt{\frac{d}{{\Bar{\gamma} T}}} + {\frac{12  L^2d \Theta_{\mathsf{max}} \Tilde{\sigma}_{\mathsf{max}}^2 \gamma_{\mathsf{max}}}{\Bar{\gamma} m_{\mathsf{min}} {T}}} + {\frac{12  L^2 \zeta_2 d \gamma_{\mathsf{max}}^2}{{\Bar{\gamma}T}}}. \label{eqn: corollary 3}
\end{align}
Now, our assumption regarding bounded delay between aggregation rounds $t$ and $t+1$ i.e. $ \displaystyle \tau^{(t)} \leq \frac{\tilde{\tau}}{T\sum_{i \in \mathcal{N} \cup \mathcal{S}} \Delta_i^{(t)}}$ allows us to bound term (d) in \eqref{eqn: corollary 3}, and thus obtaining the following bound:
\begin{align}
    \nonumber  \frac{1}{T} \sum_{t = 0}^{T - 1} \mathbb{E} \bigg[  \|\nabla {{F}}^{(t)}(\mathbf{x}^{(t)})\|^2 \bigg]  & \nonumber \leq \frac{4 \sqrt{{\Bar{\gamma}}}}{ \vartheta \sqrt{dT}} \Bigg[ {{{F}}^{(0)}(\mathbf{x}^{(0)}) - {{F}}^{*}} \Bigg] + \frac{4 \tilde{\tau} \sqrt{{\Bar{\gamma}}}}{ \vartheta \sqrt{dT}} + 16 \frac{L \vartheta \Theta_{\mathsf{max}} \Tilde{\sigma}_{\mathsf{max}}^2 }{ m_{\mathsf{min}}}  \sqrt{\frac{d}{{\Bar{\gamma} T}}} \\
     & \nonumber + {\frac{12  L^2d \Theta_{\mathsf{max}} \Tilde{\sigma}_{\mathsf{max}}^2 \gamma_{\mathsf{max}}}{\Bar{\gamma} m_{\mathsf{min}} {T}}} + {\frac{12  L^2 \zeta_2 d \gamma_{\mathsf{max}}^2}{{\Bar{\gamma}T}}} \\
     & = \mathcal{O}(1/\sqrt{T}),
\end{align}
which concludes the proof.
\end{proof}
\section{Decentralized Distributed Optimization Solution Convergence} \label{NOVA stationary proof}
\begin{theorem} If $J\rightarrow \infty$ (see Algorithm~\ref{alg:dist_opt_consensus_algo}), the sequence $\{\bm{w}^{(\ell)}\}$ generated by Algorithm \ref{alg:dist_NOVA main} is feasible for $\bm{\mathcal{P}}$ and non-increasing, which asymptotically reaches a stationary solution of $\bm{\mathcal{P}}$.
\end{theorem}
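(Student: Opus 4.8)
I would prove the statement in two nested layers that mirror the two-level structure of the solver: an \emph{inner} layer showing that the distributed primal--dual scheme with decentralized consensus (Algorithms~\ref{alg:dist_opt_algo main}--\ref{alg:dist_opt_consensus_algo}) solves the convex surrogate $\widehat{\bm{\mathcal{P}}}_{\bm{w}^{(\ell)}}$ \emph{exactly} as $J\to\infty$ and the primal--dual loop runs to convergence, and an \emph{outer} layer that invokes the standard successive-convex-approximation (SCA) argument of~\cite{paralleldistbook} to conclude feasibility, monotonicity of the objective, and convergence to a stationary point of $\bm{\mathcal{P}}$. Throughout, ``stationary solution'' is understood as a point satisfying the KKT conditions of the relaxed-but-binary-forced problem $\widehat{\bm{\mathcal{P}}}$, which by \eqref{con1add}--\eqref{con21add} coincides with $\bm{\mathcal{P}}$.

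\textbf{Inner layer (exact solution of the surrogate).} First I would show that the consensus matrix $\bm{W}$ with constant edge weights $W_{d,d}=1-z\,\mathrm{degree}(d)$, $W_{d,d'}=z$ is symmetric and doubly stochastic, and, since $\mathcal{H}$ is connected and $z<1/\max_d \mathrm{degree}(d)$ makes it primitive, $\bm{W}^{J}\to \tfrac{1}{|\mathcal{V}|}\mathbf{1}\mathbf{1}^\top$ with a geometric rate governed by its spectral gap~\cite{xiao2004fast}. Hence, as $J\to\infty$, step~\eqref{eqn: combined_consensus_update} computes exactly the network average of the local dual increments, so the local updates \eqref{eqn: Lambda update Local}--\eqref{eqn: Omega update Local} followed by consensus reproduce the centralized dual-ascent updates \eqref{eqn: Lambda update}--\eqref{eqn: Omega update}. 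Next I would verify the convexity/regularity premises of \eqref{eqn:minmax opt problem}: the surrogate objective $\widetilde{\mathcal{J}}(\cdot;\bm{w}^{(\ell)})$ is $\lambda_1$-strongly convex, $\widetilde{\bm{\mathcal{C}}}$ is convex (built from the descent lemma with constant $L_{\nabla\bm{\mathcal{C}}}$), the sets $\bm{\mathcal{D}}_d$ are box/polyhedral hence closed convex, and $\bm{\mathcal{G}}$ is affine; with a Slater point for $\widehat{\bm{\mathcal{P}}}_{\bm{w}^{(\ell)}}$ (which exists because $\bm{w}^{(\ell)}$ is feasible and the constraints can be strictly satisfied by a small perturbation, treating the affine $\bm{\mathcal{G}}$ separately), strong duality holds and the max--min problem is equivalent to $\widehat{\bm{\mathcal{P}}}_{\bm{w}^{(\ell)}}$. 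Finally, the primal step \eqref{eqn: Primal Problem} is exact projection-based minimization of a strongly convex function over a closed convex set (GPA~\cite{GPAarticle}), and the dual ascent with sufficiently small $\kappa,\varepsilon$ converges by standard Arrow--Hurwicz--Uzawa analysis for strongly convex problems; therefore $\hat{\bm{w}}(\bm{w}^{(\ell)})$ is the unique minimizer of $\widehat{\bm{\mathcal{P}}}_{\bm{w}^{(\ell)}}$.

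\textbf{Outer layer (SCA convergence).} I would check the SCA hypotheses directly from \eqref{eqn:prox grad obj 1}--\eqref{eqn:prox grad non-convex constr 2}: the proximal-linearized surrogate is strongly convex and gradient-consistent, $\nabla_{\bm{w}_d}\widetilde{\mathcal{J}}_d(\bm{w}^{(\ell)};\bm{w}^{(\ell)})=\nabla_{\bm{w}_d}\mathcal{J}(\bm{w}^{(\ell)})$, and the constraint surrogate satisfies $\widetilde{\bm{\mathcal{C}}}(\bm{w};\bm{w}^{(\ell)})\ge\bm{\mathcal{C}}(\bm{w})$ with equality and matching gradient at $\bm{w}^{(\ell)}$. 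Then, by induction: (i) if $\bm{w}^{(\ell)}$ is feasible for $\bm{\mathcal{P}}$ it is feasible for $\widehat{\bm{\mathcal{P}}}_{\bm{w}^{(\ell)}}$, so is $\hat{\bm{w}}(\bm{w}^{(\ell)})$, and by convexity of the surrogate feasible set and $\zeta<1$ the update $\bm{w}^{(\ell+1)}=\bm{w}^{(\ell)}+\zeta(\hat{\bm{w}}(\bm{w}^{(\ell)})-\bm{w}^{(\ell)})$ satisfies $\bm{\mathcal{C}}(\bm{w}^{(\ell+1)})\le\widetilde{\bm{\mathcal{C}}}(\bm{w}^{(\ell+1)};\bm{w}^{(\ell)})\le\mathbf{0}$ and the remaining constraints, hence stays feasible for $\bm{\mathcal{P}}$; (ii) using the strong-convexity descent inequality for $\widetilde{\mathcal{J}}$ together with gradient consistency gives $\mathcal{J}(\bm{w}^{(\ell+1)})\le\mathcal{J}(\bm{w}^{(\ell)})-c\,\zeta\,\|\hat{\bm{w}}(\bm{w}^{(\ell)})-\bm{w}^{(\ell)}\|^2+o(\zeta)$ for some $c>0$, so $\{\mathcal{J}(\bm{w}^{(\ell)})\}$ is non-increasing and, being bounded below, convergent, which forces $\|\hat{\bm{w}}(\bm{w}^{(\ell)})-\bm{w}^{(\ell)}\|\to0$; (iii) any limit point $\bm{w}^\star$ is then a fixed point of the surrogate map, and writing the KKT conditions of the convex surrogate at $\bm{w}^\star$ and substituting the consistency identities yields the KKT conditions of $\widehat{\bm{\mathcal{P}}}$, i.e.\ of $\bm{\mathcal{P}}$, so $\bm{w}^\star$ is stationary.

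\textbf{Main obstacle.} The delicate part is the interface between the two layers, specifically the constraint qualification for the non-convex DC-type binary-forcing constraints \eqref{con1add}--\eqref{con11add}: at a (near-)binary iterate these are (nearly) active with vanishing margin, so the Slater/MFCQ premise used for strong duality and for convergence of the inner primal--dual loop must be justified carefully — I would handle this by keeping the affine equality constraints $\bm{\mathcal{G}}$ out of the Slater requirement and invoking a relative-interior / restricted-MFCQ argument on the remaining inequalities. A secondary technical point is that the theorem is stated for $J\to\infty$; a fully rigorous write-up should make explicit that each outer iterate uses the exact surrogate solution and that the finite-$J$ consensus error is absorbed by the geometric contraction of $\bm{W}$. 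All other pieces — strong convexity, gradient consistency, the descent lemma for the constraint surrogate, and the Xiao--Boyd averaging result — are routine.
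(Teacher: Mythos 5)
Your proposal is correct and follows essentially the same route as the paper's proof: the paper likewise splits the argument into (i) showing that the gradient-projection primal step plus the consensus-averaged dual ascent reproduces the centralized primal--dual updates as $J\to\infty$ (citing the Xiao--Boyd averaging result and the optimality of GPA), and (ii) invoking the standard SCA machinery (upper-bounding convexified constraints, gradient consistency, strong convexity of the surrogate) to get feasibility, monotone descent, and stationarity for $\bm{\mathcal{P}}$ — the paper simply cites Theorems 2 and 4 and Lemma 9 of the parallel-SCA reference where you re-derive the descent inequality explicitly. Your added caution about the Slater/MFCQ condition for the binary-forcing constraints \eqref{con1add}--\eqref{con11add} is a point the paper's proof does not address at all, so it is a welcome (if strictly supplementary) refinement rather than a divergence in method.
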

\begin{proof}
Problem $\bm{\mathcal{P}}$, i.e.,  the network resource allocation and server selection joint optimization  as summarized in \eqref{eqn: non-convex constraints}-\eqref{eqn: convex constraints}, is solved via successive convex solver method  described in Algorithm \ref{alg:dist_NOVA main}. During each round $\ell$ of the iterative procedure, a surrogate convex optimization problem $\widehat{\bm{\mathcal{P}}}_{\bm{{w}}^{(\ell)}}$ is formed via convex relaxation of the objective function and non-convex constraints associated with original problem $\bm{\mathcal{P}}$ at current iterate $\bm{w}^{(\ell)}$. Therefore, it is necessary to show that Algorithm \ref{alg:dist_NOVA main} solves the sequence of surrogate problems $\{\widehat{\bm{\mathcal{P}}}_{\bm{{w}}^{(\ell)}}\}$ in order to produce a non-increasing sequence of feasible solutions of the original problem $\bm{\mathcal{P}}$. To this end, we need to prove that the following statements hold \cite{scutari2016parallel, MultiTierAbubakr}:
\begin{enumerate}
    \item Given the current feasible solution ${\bm{w}}^{(\ell)}$ at each round  $\ell$ of Algorithm \ref{alg:dist_NOVA main}, Algorithm \ref{alg:dist_opt_algo main} generates the sequence of feasible solutions  $\{\hat{\bm{w}}^{[i]}({\bm{w}}^{(\ell)})\}$ converging to a stationary solution of $\widehat{\bm{\mathcal{P}}}_{\bm{{w}}^{(\ell)}}$.
    \item At each iteration $\ell$ of Algorithm \ref{alg:dist_NOVA main}, the sequence of solutions generated by Algorithm \ref{alg:dist_opt_algo main}, i.e. $\{\hat{\bm{w}}^{[i]}({\bm{w}}^{(\ell)})\}$,  are feasible for the original problem $\bm{\mathcal{P}}$.
    \item The sequence $\{{\bm{w}}^{(\ell)}\}$ generated by Algorithm \ref{alg:dist_NOVA main} via  \eqref{eqn:NOVA param update} produce a non-increasing sequence of objective values of $\bm{\mathcal{P}}$ and asymptotically converges to a stationary solution as number of iterations $J\rightarrow \infty$.
\end{enumerate}
\paragraph*{\textbf{Proof of statement 1}} Algorithm \ref{alg:dist_opt_algo main} is an alternating optimization approach consisting of two stages: primal variable update and dual variable update. At each iteration $i$, the first stage comprises of Algorithm \ref{alg:dist_opt_algo main} parallely solving the primal problem \eqref{eqn: Primal Problem} at current dual iterates $\{\bm{\Lambda}_{{d}}$, $\bm{\Omega}_{{d}} \}_{{d} \in \mathcal{N} \cup \mathcal{S} \cup \mathcal{B}}$. More specifically, each network node $d$ minimizes its convex partial Lagrangian objective, i.e., ${\mathcal{L}}(\bm{w}_{{d}}, \bm{\Lambda}^{[i-1]}_{{d}}, \bm{\Omega}^{[i-1]}_{{d}} ;{\bm{{w}}^{(\ell)}})$ with convex constraints $\bm{\mathcal{D}}_{{d}}(\bm{w}_{{d}})\leq \mathbf{0}$ via gradient projection algorithm \cite{GPAarticle}. Therefore, an optimal solution of \eqref{eqn: Primal Problem} is guaranteed to be achieved as a consequence of Theorem 1.1 in \cite{GPAarticle}.

The second stage of the alternating optimization described in Algorithm \ref{alg:dist_opt_algo main} consists of dual updates mathematically expressed via \eqref{eqn: Lambda update}-\eqref{eqn: Omega update}. However, the absence of a central entity in the network leads to a decentralized consensus based update given by \eqref{eqn: Lambda update Local}-\eqref{eqn: combined_consensus_update}. We note that this update is achieved via Algorithm \ref{alg:dist_opt_algo main} by leveraging communication among the network nodes. At each iteration $i$ of Algorithm \ref{alg:dist_opt_algo main},  \underline{Consensus \texttt{CE-FL}} algorithm performs initialization as $\bm{\Lambda}^{\{ 0 \}}_{d} = \bm{\Lambda}^{[i]}_d$, $\bm{\Omega}^{\{ 0 \}}_{d} = \bm{\Omega}^{[i]}_d$, where local copies $\bm{\Lambda}^{[i]}_d, \bm{\Gamma}^{[i]}_d$ are given by \eqref{eqn: Lambda update Local}-\eqref{eqn: Omega update Local}. Subsequently, it performs several rounds of consensus among the network nodes via \textit{consensus matrix} $\bm{{W}}=[{W}_{d,d'}]_{d,d'\in\mathcal{N}\cup\mathcal{B}\cup\mathcal{S}}$ thereby creating sequence of iterates $\{ \bm{\Lambda}^{\{ j \}}_{d}, \bm{\Omega}^{\{ j \}}_{d} \}$. We note that the construction of consensus weights and subsequently $\bm{{W}}$ as described in the text follows \textit{Assumption} 2 in \cite{johansson2008subgradient}, thereby satisfying the necessary conditions stated in Theorem 1 in \cite{xiao2004fast} which allows us to obtain the following:
\begin{align}
    & \lim_{j \rightarrow \infty} ~\bm{\Lambda}^{\{ j \}}_{d} = \frac{1}{|\mathcal{N} \cup \mathcal{S} \cup \mathcal{B}|}\sum_{d \in \mathcal{N} \cup \mathcal{S} \cup \mathcal{B}} \bm{\Lambda}^{[i]}_d \label{eqn: Lambda convergence},\\
   & \lim_{j \rightarrow \infty} ~\bm{\Omega}^{\{ j \}}_{d} = \frac{1}{|\mathcal{N} \cup \mathcal{S} \cup \mathcal{B}|}\sum_{d \in \mathcal{N} \cup \mathcal{S} \cup \mathcal{B}} \bm{\Omega}^{[i]}_d \label{eqn: Omega convergence}.
\end{align}
Therefore, at each iteration $i$ of Algorithm \ref{alg:dist_opt_algo main}, the result stated in \eqref{eqn: Lambda convergence}-\eqref{eqn: Omega convergence} implies that actual central update in \eqref{eqn: Lambda update}-\eqref{eqn: Omega update} is asymptotically achieved via sufficiently large number of iterations of Algorithm \ref{alg:dist_opt_consensus_algo} \underline{Consensus \texttt{CE-FL}}.

Hence, we have both the optimality of the solution produced by gradient projection algorithm on convex primal problem during first stage and asymptotic convergence of dual updates to corresponding global updates during second stage of alternating optimization procedure \underline{PD \texttt{CE-FL}}. Therefore, under appropriately chosen step-sizes pertaining to dual variables, Theorem 4 described in \cite{scutari2016parallel} ensures convergence to a stationary solution of $\widehat{\bm{\mathcal{P}}}_{\bm{{w}}^{(\ell)}}$ via Algorithm \ref{alg:dist_opt_algo main} during each iteration $\ell$ of Algorithm \ref{alg:dist_NOVA main}.        

\paragraph*{\textbf{Proof of statement 2}} The convexification of the constraints as performed via  \eqref{eqn:prox grad non convex constr 1}-\eqref{eqn:prox grad non-convex constr 2} is constructed via Lipschitz continuity based approximations described in Sec III (A) in \cite{scutari2016parallel}. This ensures that the convexified constraints upper-bound the original non-convex constraints while satisfying Assumption 3 in \cite{scutari2016parallel}, i.e., ${\widetilde{\bm{\mathcal{C}}}}(\bm{w};\bm{{w}}^{(\ell)}) \geq \bm{\mathcal{C}}(\bm{w})$. As a result Lemma 9 in \cite{scutari2016parallel} holds, which ensures feasibility of the solution generated by Algorithm \ref{alg:dist_opt_algo main} for the original problem $\bm{\mathcal{P}}$.
\paragraph*{\textbf{Proof of statement 3}} The objective function and the constraints associated with $\bm{\mathcal{P}}$ satisfy Assumptions 1-3 and 5 in \cite{scutari2016parallel} and the iterates generated by Algorithm \ref{alg:dist_opt_algo main} converge to a stationary point of surrogate convex problem, leading to straightforward application of Theorem 2 in \cite{scutari2016parallel} which guarantees non-increasing property of the sequence generated by Algorithm \ref{alg:dist_NOVA main} for objective values of network optimization problem $\bm{\mathcal{P}}$. Thus, as $J\rightarrow \infty$, the non-increasing sequence converges to a stationary solution of $\bm{\mathcal{P}}$. 
\end{proof}
\newpage

\section{Methodology for Measuring over CBRS and 5G Networks} \label{5G appendix}
We choose two sites based on the availability of 5G and CBRS edge testbeds to capture the effect of varying actual radio propagation and backhaul connections across operational 5G/4G networks.
We collect data on the transport layer's throughput and latency statistics at both ends, as well as cellular data from CBRS USB modems and 5G phones, as well as data on the base station's power consumption.

\subsection{CBRS and 5G Edge Testbeds}
We utilize a fully functional end-to-end 5G network testbed in the Indy 5G Zone \cite{indy5gzone} in the United States. This indoor testbed is approximately 150$m^2$ in size and is comprised of commercial 5G gNBs and 4G eNBs, a local gateway, and edge computing servers placed by AT\&T.
Additionally, we utilize a fully operating end-to-end CBRS network testbed on the Purdue University campus \cite{cbrsconvergence}. This indoor and outdoor testbed comprised of commercial 4G eNBs, an edge computing gateway, and edge computing servers deployed by SBA Communications.
4G eNBs operate at frequencies of 700MHz, 1.9GHz in the Indy 5G Zone, and 3.5GHz on campus.
The architecture is based on 3GPP NSA option 3 (as seen in Fig. \ref{fig:5gtestbed}), in which UEs are connected to the operational core network via the current LTE/EPC control plane. Each testbed connects to its edge server via a local breakout. Local gateways are placed on-site to connect cellular base stations to the carrier's centralized LTE core. Due to the fact that the local aggregation point routes traffic to the edge computing servers located directly alongside the 5G gNB and 4G eNB, this network deployment option enables us to experiment with an edge computing scenario. This testbed is used to measure throughput and latency at the transport layer in the context of 5G edge computing.

\begin{figure}[h]
  \includegraphics[width=0.8\textwidth]{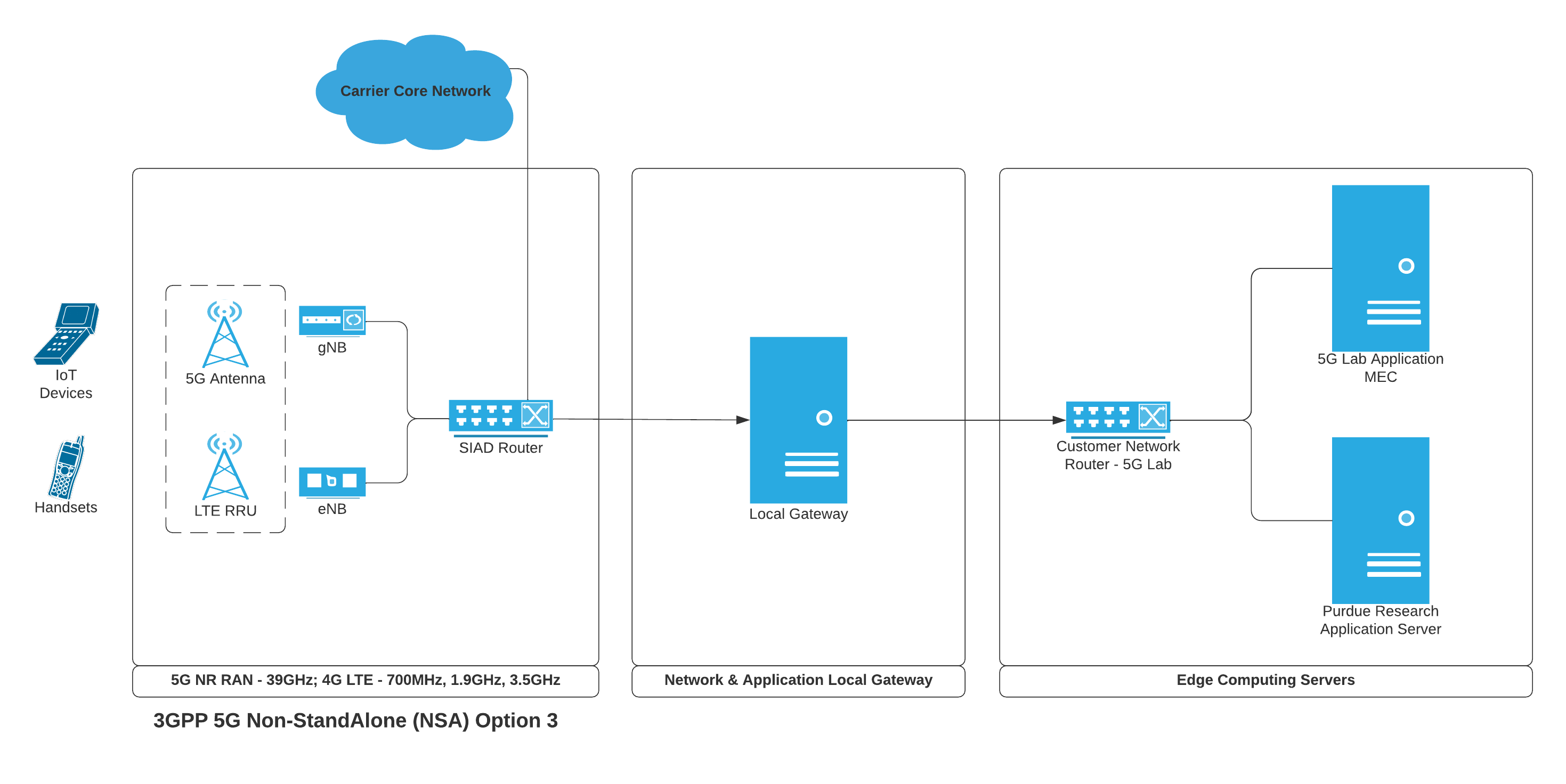}
  \centering
  \caption{Architecture of CBRS/5G network edge testbeds.}
  \label{fig:5gtestbed}
\end{figure}

\subsection{Measurement Setup}
In terms of server setup, we use the edge server at the Indy 5G Zone, the edge server at the CBRS network, and servers from CloudLab \cite{cloudlab}, a provider of high-performance computing and networking services throughout various states in the United States. Between our 5G phone and edge/cloud servers in the United States, we measured an average latency of between 5ms and 75ms, depending on server location. We utilized servers distributed across several locations to represent the spread of commercial cloud computing resources and the predicted performance range.

In terms of client setup, we use a Samsung Galaxy S20 Ultra (Qualcomm Snapdragon 865 \cite{snapdragon865}) and a MultiConnect® microCell LTE USB Cellular Modem as user equipments (UEs). The phones and USB modems are  operating at sub-6GHz frequencies. Android 12 is pre-installed on all phones, along with Linux kernel 4.14.19. However, we employ a TCP congestion control algorithm implemented in the Linux kernel on the servers to measure throughput and latency at the transport layer across CBRS networks, and we use phones and USB modems as receivers.

\subsection{Data Collection}
We generate TCP/UDP traffic using iperf3. To access extensive TCP information, we altered \texttt{tcp\_info.c} and cross-compiled iperf3. To gather more internal TCP variables at a finer resolution, we modified the \texttt{tcp\_probe} module's \texttt{tcp\_probe.c} file.

We gather data on achieved throughput and response time, as well as their variances 1) between devices and edge/Wisconsin/Clemson/Utah servers; 2) between the CBRS eNB and edge/Wisconsin/Clemson/Utah servers; and 3) between the Indy 5G edge server and Wisconsin/Clemson/Utah servers.
We omit the first ten seconds of each experiment from the average throughput and latency calculations to eliminate starting effects.
Additionally, we gather data on CBRS eNB radio unit's transmission power.


\subsection{5G testbed data processing} \label{5G scaled up dataset creation}
We further process the data obtained from the 5G testbed to create a network consisting of 20 UEs, 10 BSs and the original 5 edge servers. For each UE-BS, BS-DC and DC-DC network edge, we construct Normal distributions for both uplink and downlink transfer rates by leveraging the emperical mean and variance of the collected data pertaining to the corresponding link. Subsequently, we populate the rate characteristics of each  edge of the expanded network using i.i.d sampling from the obtained normal distributions.

\section{\texttt{CE-FL} network setup and Datasets for experiments} \label{Network Settings for experiments}
\subsection{ML datasets for Experiments}
For our ML experiments, we have used Fashion-MNIST \cite{xiao2017/online} and CIFAR-10 \cite{krizhevsky2009learning} multi-class image classification datasets. Fashion-MNIST comprises of 60k training samples and CIFAR-10 contains 50k training samples, whereas both the datasets consist of 10k testing samples. In both the datasets, each image belongs to 10 classification labels. Furthermore, each example in Fashion-MNIST is a 28 $\times$ 28 grayscale image. In CIFAR-10, each example is 32 $\times$ 32 RGB image. During each round of global aggregation, UEs acquire datasets
with sizes distributed according to normal distribution with mean 2000 and variance 200 containing only 5 out of the 10 classification labels to resemble a non-iid data distribution.  

\subsection{Description of default network settings for conducted experiments}
For the purpose of our simulations, the default network consists of 20 UE devices, 10 Base Stations and 5 edge servers. Furthermore, we have used Fashion-MNIST dataset for all our proof-of-concept experiments. In Table \ref{table:cefl_params}, we summarize all the network/system characteristics and ML training parameter values that were used in our experimental evaluations:
\begin{table}[ht]
\centering
\caption{Network and system characteristic parameter values}
\begin{tabular}[t]{c c | c c | c c}

\toprule[.2em]
\textbf{Param} & \textbf{Value} & \textbf{Param} & \textbf{Value} & \textbf{Param} & \textbf{Value} \\
\hline
$\eta$ & 0.001 & $\beta^{\mathsf{M}}$ & 6272  & $\Delta_{i}$ &  0.3 \\
$\mu$ & 0.01 & $\beta^{\mathsf{D}}$ & $4 \times 10^7$  & $C_s$ & $5 \times 10^6 $ \\
$\vartheta$ & 0.01 &  $f_{n}^{\mathsf{min}}$ & 100KHz  & $\varrho$ & 0.4 \\
$\kappa$ & 0.001   &  $f_{n}^{\mathsf{max}}$ & 2.3GHz  & $ P_s $ & 200W \\ 
$\epsilon$ &  0.001   & $\alpha_n$ & 2e-16  &  $ M_s $ & 700 \\ 
$\zeta$ & 0.01  & $c_n$ & 300 &  $z$ & 0.02 \\  
\hline
\end{tabular}
\label{table:cefl_params}
\end{table}%
\subsection{Network graph construction for decentralized consensus}
For our decentralized consensus mechanism used in Sec.~\ref{sec:distSolv}, we simulate a communication graph by considering all possible UE-BS, BS-DC and DC-DC edges and include each edge in the graph with probability $p=0.3$. Furthermore, in order to ensure that the network nodes are strongly connected, the randomized graph construction procedure also enforces that each UE is connected to atleast one BS : $ \exists b \in \mathcal{B}: ~{{A}}_{n, b} = 1,~\forall n\in\mathcal{N}$. Similarly, each BS is connected to at least one DC:
$\exists s \in \mathcal{S}:  {{A}}_{b, s} = 1,~\forall b\in\mathcal{B}$; and, each DC is at least connected to another DC: $
   \exists s'\in\mathcal{S}: {{A}}_{s, s'} = 1, ~\forall s \in \mathcal{S}$.
\clearpage
\section{\texttt{CE-FL} ML Parameter Estimation } \label{App:param_est}

{\color{black}
 We note that Assumptions 1-3 introduce parameters $L, \{\Theta_i\}, \{\zeta_1, \zeta_2\}$ to characterize smoothness, local variability, and inter-node dissimilarity, respectively, for the ML convergence characterization. We highlight that finding the \textit{exact} values for this parameters is impractical in real-world systems as it will require measuring the loss/gradient over all realizations of ML model parameters over all the datapoints across all the devices. Nevertheless, we can \textit{estimate} these parameters using practical methods suitable for real-world systems and use them in our algorithms. To estimate these parameters, we introduce two parameter estimation procedures that can be potentially deployed in our network setting. The first procedure is one shot estimation and the second is dynamic estimation. We later show that one shot estimation is enough in our network setting as it leads to reliable estimation results.
\\

\noindent \underline{\textit{1)} One-Shot Pre-training Parameter Estimation Methodology:} In this procedure, the parameters $L, \{\Theta_i\}, \{\zeta_1, \zeta_2\}$ are estimated before the start of machine learning model training procedure. These parameters are estimated at the data processing units (DPUs), i.e., UEs and DCs. More specifically, at an arbitrary data processing unit $i \in \mathcal{N} \cup \mathcal{S}$, a small representative dataset denoted by $\Tilde{\mathcal{D}}_i$ is separately held to estimate the aforementioned ML smoothness and variability parameters. It is worth highlighting that $L, \{\zeta_1, \zeta_2\}$ are global (i.e., cross-device) parameters defined across all the devices at the network. Whereas, $\{\Theta_i\}$ are device-specific and only determine data variability locally at the DPUs. Furthermore, note that these parameters are always defined in the upper bounds in Assumptions 1-3, which implies that once a certain range for them is obtained experimentally, in implementation we can use slightly larger values to ensure the validity of upper bounds. For example, if the value obtained for $\zeta_1$ is $10$, we can use $15$ in the algorithm and it will still satisfy Eq. (40). Below, we will summarize how each of these parameters are estimated.

\textit{(i) Estimation of $\{\Theta_i\}$:} We first describe how local data variability parameter $\Theta_i$ is estimated at each individual DPUs $i \in \mathcal{N} \cup \mathcal{S}$. According to Assumption 2, we have:

\begin{align}\label{eq:def:datavar}
   \hspace{-2.1mm} \|\nabla {f}(\mathbf{x} ; \xi) \hspace{-.2mm}- \hspace{-.2mm} \nabla {f}(\mathbf{x} ; \xi')\| \hspace{-.2mm}\leq\hspace{-.2mm} \Theta_{i} \|\xi - \xi'\|, \hspace{-.1mm} \forall \xi,\hspace{-.2mm}\xi' \hspace{-1mm}\in \mathcal{D}_{i}^{(t)},\forall t.\hspace{-2mm}
\end{align}

To estimate this parameter, we use the Monte-Carlo based parameter estimation procedure detailed in Alg.~\ref{alg:local_theta_est}, which estimates the local data variability based on the above definition. Note that in practice, we can use the maximum estimated value of $\{\hat{\Theta}_i\}$ obtained in the algorithm as ${\Theta}_i$ for each DPU $i$ since that will satisfy the upper bound definition. Also, note that the sampled dataset used in the estimation is obtained from ${\mathcal{D}}^{(0)}_i$ (i.e., the initial dataset) at each DPU since the parameter is estimated before the execution of model training. Thus, to ensure that \eqref{eq:def:datavar} holds for ${\mathcal{D}}^{(t)}_i$, for the duration of the machine learning model training $t\in[0,T]$, we will scale the obtained value and use a slightly larger value in our algorithm execution (we scale the parameter by $1.5$ as discussed in the subsequent plots). Note that for the rest of the parameters the same method is applied. Our algorithm has also shown its adaptability to this choice of practical parameter estimation as can be seen by its performance gains obtained in the simulation section of the manuscript.

\begin{algorithm}[h]
	\caption{Monte-Carlo-based estimation of $\Theta_i$ at DPU $i$} \label{alg:local_theta_est}
    \begin{algorithmic}[1]
          \STATE  \textbf{Input:} Total number of iterations $J$, sampled local dataset $\Tilde{\mathcal{D}}_i$ used for parameter estimation, datapoints of which are chosen uniformly at random from $\mathcal{D}^{(0)}_i$. 
          \STATE \textbf{Output:} Final averaged estimate $\hat{\Theta}_i$.
          \STATE Iteration count $j \leftarrow 1$ 
          \WHILE{$j \leq J$}
          \STATE Randomly generate a model parameter weight vector $\mathbf{x}$
          \STATE Compute $\Theta_i^j = \frac{1}{|\Tilde{\mathcal{D}}_i|(|\Tilde{\mathcal{D}}_i| - 1)} \sum_{\xi, \xi' \in \Tilde{\mathcal{D}}_i} \frac{ \|\nabla {f}(\mathbf{x} ; \xi)- \nabla {f}(\mathbf{x} ; \xi')\|}{ \|\xi - \xi'\|}$
          \STATE $j \longleftarrow j + 1$
          \ENDWHILE
          \STATE $\hat{\Theta}_i = \max_{1\leq j \leq J} \{ \Theta^j_i\}$
    \end{algorithmic}
\end{algorithm}
\vspace{3mm}

\textit{(ii) Estimation of $L$:} Next, we focus on how the smoothness parameter $L$ is estimated across the DPUs. Assumption 1 defines the mathematical relationship between the loss functions and smoothness parameter:
\begin{equation}   \label{eqn:L_est}
 \| \nabla F_{i}^{(t)}(\mathbf{x}) - \nabla F_{i}^{(t)}(\mathbf{y}) \| \leq L \|\mathbf{x}-\mathbf{y}\|,
\end{equation}
where $\mathbf{x}$ and $\mathbf{y}$ are arbitrary machine learning model parameters.
Hence, using the locally held sampled datasets for parameter estimation $\{\Tilde{\mathcal{D}}_i\}$,  we use a Monte-Carlo-based local estimation followed by a global consensus to obtain a common smoothness parameter approximation $\hat{L}$ across the DPUs. Our proposed procedure is summarized in in Alg.~\ref{alg:local_smoothness_est}. In this algorithm, each DPU $i$ first computes the local estimate of $L$, denoted by $\hat{L}_i$ using \eqref{eqn:L_est}. Then, all the local estimates are sent to a predetermined DC $s^{\mathsf{est}}\in\mathcal{S}$ used for parameter estimation that takes the maximum over the local estimates to obtain the global estimate of smoothness parameter $\hat{L}$. Finally, we use slighter larger value for $\hat{L}$ (scaled by $1.5$) in our algorithm to make sure that the upper bound in \eqref{eqn:L_est} holds during the execution of the machine learning model training. 


\begin{algorithm}[h]
	\caption{Monte-Carlo-based estimation of ${L}$ across DPUs} \label{alg:local_smoothness_est}
    \begin{algorithmic}[1]
          \STATE  \textbf{Input:} Total number of iterations $J$, sampled local dataset $\{\Tilde{\mathcal{D}}_i\}_{i \in \mathcal{N}}$ used for parameter estimation, datapoints of which are chosen uniformly at random from $\mathcal{D}^{(0)}_i$.  
          \STATE \textbf{Output:} Final global estimate $\hat{L}$.
          \FOR{each device $i \in \mathcal{N}$}
          \STATE Iteration count $j \leftarrow 1$, 
          \WHILE{$j \leq J$}
          \STATE Randomly sample a pair of machine learning model vectors $\mathbf{x}_{j,1}, \mathbf{x}_{j,2}$
          \STATE Compute $L_i^j = \frac{\| \nabla F_{i}^{(t)}(\mathbf{x}_{j,1}) - \nabla F_{i}^{(t)}(\mathbf{x}_{j,2}) \|}{\|\mathbf{x}_{j,1}-\mathbf{x}_{j,2}\|}$
          \STATE $j \longleftarrow j + 1$
          \ENDWHILE
          \STATE $\hat{L}_i = \max_{1\leq j \leq J} \{ L^j_i\}$
          \STATE Share local estimate $\hat{L}_i$ to DC $s^{\mathsf{est}}$
          \ENDFOR
          \STATE Broadcast $\hat{L} = \max_{1\leq i \leq |\mathcal{N}|} \{\hat{L}_i\}$ from $s^{\mathsf{est}}$ to all the DPUs.
    \end{algorithmic}
\end{algorithm}


\textit{(iii) Estimation of $\{\zeta_1, \zeta_2\}$:} The bounded dissimilarity relationship across the local loss functions given in Assumption 3 is as follows:
\begin{align}\label{eq:zetas}
   \underset{i \in \mathcal{N} \cup \mathcal{S}}{\sum}  p_i \big\|\nabla F_{i}^{(t)}(\mathbf{x})\big\|^2 \leq \zeta_1 \Big\| \underset{i \in \mathcal{N} \cup \mathcal{S}}{\sum} p_i\nabla F_{i}^{(t)}(\mathbf{x})\Big\|^2 + \zeta_2, \forall t,
\end{align}
where $\mathbf{x}$ is an arbitrary machine learning model parameter.
We subsequently present a Monte-Carlo-based parameter estimation algorithm for obtaining the global estimates of $\{\hat{\zeta}_1, \hat{\zeta}_2\}$ in Alg. \ref{alg:local_zeta_est}. Similar to the above procedure for estimating $L$, the algorithm uses a predetermined DC $s^{\mathsf{est}}\in\mathcal{S}$ for parameter estimation. In particular, the algorithm estimates $\{\hat{\zeta}_1, \hat{\zeta}_2\}$ via an iterative procedure. In each iteration $j$, DC $s^{\mathsf{est}}$ first generates a random machine learning model parameter $\mathbf{x}_j$ and shares with all the DPUs $i \in \mathcal{N}$. Using $\mathbf{x}_j$, each DPU $i$ computes the gradient vector $ F^{(0)}_{i}(\mathbf{x}_j)$, given by (note that since the parameter estimation is run before the machine learning model training procedure, the initial dataset $\mathcal{D}^{(0)}_i$ is used for gradient computation)
 \begin{align}
 \label{eq:localLossAfter_R17}
    \nabla F_{i}^{(0)}(\mathbf{x}_j) =  \frac{1}{|\mathcal{D}^{(0)}_i|} \underset{\xi \in \mathcal{D}^{(0)}_i}{\sum} \nabla {f}(\mathbf{x}_j ; \xi).
\end{align}
After obtaining the local gradient, DPU $i$ shares the pair $\big\{ |\mathcal{D}^{(0)}_i|, \nabla F_{i}^{(0)}(\mathbf{x}_j) \big\}$ with DC $s^{\mathsf{est}}$.

DC $s^{\mathsf{est}}$ subsequently computes the pair $\Big\{ \underset{i \in \mathcal{N} }{\sum}  {p}_i \big\|\nabla F_{i}^{(0)}(\mathbf{x}_j)\big\|^2, \big\|\underset{i \in \mathcal{N}}{\sum} {p}_i\nabla F_{i}^{(0)}(\mathbf{x}_j)\Big\|^2 \Big\}$, where $p_i$ is given by
\begin{align}
   \label{eqn:dataset_weight} {p}_i = \frac{|\mathcal{D}^{(0)}_i|}{\underset{i \in \mathcal{N}}{\sum}{|\mathcal{D}^{(0)}_i|}}.
\end{align}
Finally, considering the relationship in~\eqref{eq:zetas}, at the end of $J$ iterations, $s^{\mathsf{est}}$ solves the following conventional \texttt{Linear Regression} optimization problem to obtain final global estimates $\{\hat{\zeta}_1, \hat{\zeta}_2\}$:
\begin{align}
   \label{eqn:LinRegr_zeta} \{\hat{\zeta}_1, \hat{\zeta}_2\} = \argmin_{\zeta_1, \zeta_2} \underset{j \in J}{\sum} \Big( \underset{i \in \mathcal{N} }{\sum}  {p}_i \big\|\nabla F_{i}^{(0)}(\mathbf{x}_j)\big\|^2 - \zeta_1 \big\|\underset{i \in \mathcal{N}}{\sum} {p}_i\nabla F_{i}^{(0)}(\mathbf{x}_j)\Big\|^2 - \zeta_2\Big)^2.
\end{align}
Finally, to ensure that \eqref{eq:zetas} holds for the duration of the machine learning model training $t\in[0,T]$, we will scale the obtained values (by $1.5$) and use a slightly larger value in our algorithm execution.

\begin{algorithm}[h]
	\caption{Monte-Carlo-based estimation procedure of $\{{\zeta}_1, {\zeta}_2\}$ across DPUs} \label{alg:local_zeta_est}
    \begin{algorithmic}[1]
          \STATE  \textbf{Input:} Total number of iterations $J$.
          \STATE \textbf{Output:} Final estimates $\{\hat{\zeta}_1, \hat{\zeta}_2\}$.
          \STATE Iteration count $j \leftarrow 1$, 
          \WHILE{$j \leq J$}
          \STATE Randomly generate a machine learning model parameter vector $\mathbf{x}_{j}$ at the DC  $s^{\mathsf{est}}$.
          \STATE Communicate $\mathbf{x}_{j}$ to each DPU $i \in \mathcal{N}$
          \FOR{each device $i \in \mathcal{N}$}
          \STATE DPU $i$ computes local gradient on locally held dataset ${\mathcal{D}}^{(0)}_i$ using  \eqref{eq:localLossAfter_R17} and sends back $\big\{ |\mathcal{D}^{(0)}_i|, \nabla F_{i}^{(0)}(\mathbf{x}_j) \big\}$ to $s^{\mathsf{est}}$
          \ENDFOR
          \STATE $s^{\mathsf{est}}$ locally computes and collects $\Big\{ \underset{i \in \mathcal{N} }{\sum}  {p}_i \big\|\nabla F_{i}^{(0)}(\mathbf{x}_j)\big\|^2, \big\|\underset{i \in \mathcal{N}}{\sum} {p}_i\nabla F_{i}^{(0)}(\mathbf{x}_j)\Big\|^2 \Big\}$, wherein $\{{p}_i\}$ is given by \eqref{eqn:dataset_weight}
          \STATE $j \longleftarrow j + 1$
          \ENDWHILE
          \STATE $\{\hat{\zeta}_1, \hat{\zeta}_2\}$ are calculated using  \eqref{eqn:LinRegr_zeta}
          \STATE $s^{\mathsf{est}}$ broadcasts $\{\hat{\zeta}_1, \hat{\zeta}_2\}$ across the DPUs
    \end{algorithmic}
\end{algorithm}

\noindent \underline{\textit{2)} Dynamic Parameter Estimation Methodology:} Note that the parameter estimation discussed above is executed only once before the machine learning model training. However, the estimation of the parameters can also be dynamically updated at the beginning of each global model training round. Let us denote the estimated parameters at round $t$ as $\{\tilde{\Theta}_i^{(t)}\}, \tilde{L}^{(t)},\{\tilde{\zeta}_1^{(t)}, \tilde{\zeta}_2^{(t)}\}$. To obtain these estimates, we introduce Alg. \ref{alg:dynamic_est_algo}, which dynamically executes Alg. \ref{alg:local_theta_est}, \ref{alg:local_smoothness_est}, \ref{alg:local_zeta_est} and subsequently post-processes the corresponding obtained estimates to obtain the final estimates for $\{\tilde{\Theta}_i^{(t)}\}, \tilde{L}^{(t)},\{\tilde{\zeta}_1^{(t)}, \tilde{\zeta}_2^{(t)}\}$. The post-processing step is a simple element-wise max operation between estimates at $t-1$ round and $t$ round given the fact that all these estimates are defined in upper bounds.  
\begin{algorithm}[h]
	\caption{Monte-Carlo-based dynamic parameter estimation procedure} \label{alg:dynamic_est_algo}
    \begin{algorithmic}[1]
          \STATE  \textbf{Input:} Total number of estimation iterations $J$, local datasets $\{{\mathcal{D}}_i^{(t)}\}_{i \in \mathcal{N}}$ at time $t$.
          \STATE \textbf{Output:}  Estimates of  $\{\tilde{\zeta}_1^{(t)}, \tilde{\zeta}_2^{(t)}\}, \tilde{L}^{(t)}, \{\tilde{\Theta}_i^{(t)}\}$ at round $t$.
          \FOR{$t = 0,2,..., T$}
          \FOR{$i \in \mathcal{N}$}
          \STATE Run \text{Alg. \ref{alg:local_theta_est}} with
          sampled local dataset $\{\Tilde{\mathcal{D}}_i\}_{i \in \mathcal{N}}$ used for parameter estimation, datapoints of which are chosen uniformly at random from $\mathcal{D}^{(t)}_i$, to obtain  $\hat{\Theta}_i^{(t)}$
          \STATE Obtain $\tilde{\Theta}_i^{(t)} \leftarrow \max_{0\leq t'\leq t}\{\hat{\Theta}_i^{(t')}\}$
          \ENDFOR
          \STATE Run Alg.~\ref{alg:local_smoothness_est} with
          sampled local dataset $\{\Tilde{\mathcal{D}}_i\}_{i \in \mathcal{N}}$ used for parameter estimation, datapoints of which are chosen uniformly at random from $\mathcal{D}^{(t)}_i$, to obtain  $\hat{L}^{(t)}$ at DC $s^{\mathsf{est}}$
          \STATE Obtain $\tilde{L}^{(t)} \leftarrow \max_{0\leq t'\leq t}\{\hat{L}^{(t')}\}$ at DC $s^{\mathsf{est}}$
          \STATE Run \text{Alg. \ref{alg:local_zeta_est}} with current datasets of the devices $\{\mathcal{D}^{(t)}_i\}$ to obtain $\{\hat{\zeta}_1^{(t)}, \hat{\zeta}_2^{(t)}\}$ at DC $s^{\mathsf{est}}$
          \STATE Obtain $\{\hat{\zeta}_1^{(t)}, \hat{\zeta}_2^{(t)}\} \leftarrow \max_{0\leq t'\leq t}\{\hat{\zeta}_1^{(t')}, \hat{\zeta}_2^{(t')}\} $ at DC $s^{\mathsf{est}}$
         \STATE Broadcast $\tilde{L}^{(t)}$,  $\{\tilde{\zeta}_1^{(t)}, \tilde{\zeta}_2^{(t)}\}$ from DC $s^{\mathsf{est}}$ to the DPUs.
          \ENDFOR

    \end{algorithmic}
\end{algorithm}



\newpage

We next demonstrate that the results obtained via our one-shot parameter estimation are reliable and execution of dynamic parameter estimation, which will require re-running the distributed optimization solver at each global aggregation index with the newest estimated parameters, is not necessary.
We first compare the results obtained via one shot pre-training parameter estimation and dynamic parameter estimation respectively in Fig. \ref{fig:Lest}-\ref{fig:Zeta2est}. In both one-shot pre-training and dynamic parameter estimation, we use $J = 10$ iterations. In all the plots, the curve labeled with ``Simulation Value" is the scaled value (by $1.5\textrm{x}$) of the estimation of one-shot pretraining method, which is used in our simulations in Sec. VI.

As can be observed from Fig. \ref{fig:Lest}-\ref{fig:Zeta2est},  scaling the value of the pre-training estimation for each parameter (i.e., Simulation Value) produces a reliable value that can be used in practice.
We also observe that dynamic procedure does not significantly improve the estimation over one-shot pre-training algorithm. 



\begin{figure}[h]
    \centering
    \includegraphics[width=0.4\textwidth]{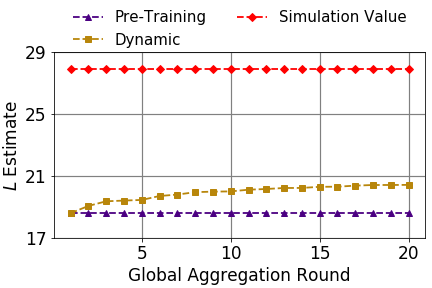} 
    \caption{Lipschitz constant $L$ estimation comparison.}
    \label{fig:Lest}
\end{figure}
\begin{figure}[h]
    \centering
    \includegraphics[width=0.4\textwidth]{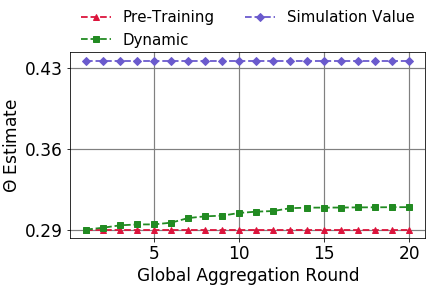} 
    \caption{Local data variability constant $\Theta$ estimation comparison.}
    \label{fig:Thetaest}
\end{figure}

\begin{figure}[h]
    \centering
    \includegraphics[width=0.4\textwidth]{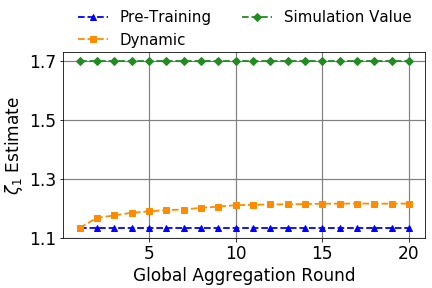} 
    \caption{Bounded dissimilarity constant $\zeta_1$ estimation comparison.}
    \label{fig:Zeta1est}
\end{figure}
\begin{figure}[h]
    \centering
    \includegraphics[width=0.4\textwidth]{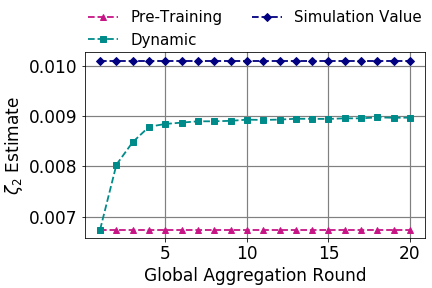} 
    \caption{Bounded dissimilarity constant $\zeta_2$ estimation comparison.}
    \label{fig:Zeta2est}
\end{figure}

}

\clearpage

\section{\texttt{CE-FL} Complexity Analysis} \label{App:complexity}
{\color{black}
We  first elaborate on the space complexity analysis of the problem in terms of the number of parameters associated with the \textit{network optimization} as well as the \textit{ML model training} frameworks. We summarize below the variables computed at each node of the network pertaining to network optimization:
\begin{enumerate}[leftmargin=4.2mm]
    \item Each UE $n$: 
    {\small $ \bm{w}^{\mathsf{Local}}_{n} =  \big\{ [f_n^{(t)}] ,[m_n^{(t)}], [\gamma_n^{(t)}], {[I_{n,b}^{(t)}]_{b\in\mathcal{B}}} \big\}_{t=1}^{T} , \\
       \bm{w}^\mathsf{Shared}_{n} = \big\{ [{\rho^{(t), n}_{n,b}}]_{b\in\mathcal{B}}, [I_s^{ (t), n}]_{s\in\mathcal{S}}, [{\delta}^{\mathsf{A},(t), n}]\big\}_{t=1}^{T}, \label{eqn: UE_local_vars}
    $}
    where {\small $\rho^{(t), n}_{n,b},I_s^{ (t), n},{\delta}^{\mathsf{A},(t), n}$} denote the local copies of the shared variables (i.e., {\small $\rho^{(t)}_{n,b},I_s^{ (t)},{\delta}^{\mathsf{A},(t)}$}) at node $n$. Therefore, each UE $n$ locally optimizes for $T(4 + 2|\mathcal{B}| + |\mathcal{S}|)$ variables.
    \item Each BS $b$: 
    {\small
    $ \bm{w}^{\mathsf{Local}}_{b} = \big\{ [I_{b,n}^{(t)}]_{n\in\mathcal{N}} \big\}_{t=1}^{T} ,
      \bm{w}^\mathsf{Shared}_{b} =  \nonumber \big\{ [{\rho^{(t), b}_{n,b}}]_{n\in\mathcal{N}}, [{\rho^{(t), b}_{b,s}}]_{s\in\mathcal{S}} , [I_s^{(t), b}]_{s\in\mathcal{S}}, [{\delta}^{\mathsf{A},(t), b}], [{\delta}^{\mathsf{R},(t), b}] , [{R^{(t), b}_{b,s}}]_{s\in\mathcal{S}} \big\}_{t=1}^{T}
    $
    }
     where {\small${\rho^{(t), b}_{n,b}},{\rho^{(t), b}_{b,s}},I_s^{(t), b},{\delta}^{\mathsf{A},(t), b},{\delta}^{\mathsf{R},(t), b}, {R^{(t), b}_{b,s}} $} denote the local copies of the respective shared variables (i.e., {\small${\rho^{(t)}_{n,b}},{\rho^{(t)}_{b,s}},I_s^{(t)},{\delta}^{\mathsf{A},(t)},{\delta}^{\mathsf{R},(t)}, {R^{(t)}_{b,s}}$}) at node $b$.
      Therefore, each BS $b$ locally optimizes for $T(2 + 2|\mathcal{N}| + 3|\mathcal{S}|)$ variables.
    \item Each DC $s$: 
     {\small $\bm{w}^{\mathsf{Local}}_{s} = \big\{  [z_s^{(t)}],[\gamma_s^{(t)}],  
      [m_s^{(t)}] \big\}_{t=1}^{T}  
   , \bm{w}^\mathsf{Shared}_{s} = \big\{ [{\rho^{(t), s}_{n,b}}]_{n\in\mathcal{N},b\in\mathcal{B}}, [{\rho^{(t), s}_{b,s}}]_{b\in\mathcal{B}}, [I_s^{(t), s}]_{s\in\mathcal{S}},
      [{\delta}^{\mathsf{A},(t), s}], 
[{\delta}^{\mathsf{R},(t), s}]$, 
$[{R^{(t), s}_{b,s}}]_{b\in\mathcal{B}} \big\}_{t=1}^{T}$}
       where ${\rho^{(t), s}_{n,b}},{\rho^{(t), s}_{b,s}},{\delta}^{\mathsf{A},(t), s},I_s^{(t), s},{\delta}^{\mathsf{R},(t), s}, {R^{(t),s}_{b,s}}$ denote the local copies of the respective shared variables (i.e., ${\rho^{(t)}_{n,b}},{\rho^{(t)}_{b,s}},{\delta}^{\mathsf{A},(t)},I_s^{(t), s},{\delta}^{\mathsf{R},(t)}, {R^{(t)}_{b,s}}$) at  node $s$. Therefore, each DC $s$ locally optimizes for $T(5 + |\mathcal{N}||\mathcal{B}| + |\mathcal{S}| + 2|\mathcal{B}|)$ variables.
\end{enumerate}
Subsequently, the time complexity in terms of number of machine instructions pertaining to the network optimization solver involving $J_1$ rounds of \texttt{Successive Convex Solver} (Algorithm 1 in manuscript), $J_2$ rounds of \texttt{Iterative Primal-Dual Method} (Algorithm 2 in manuscript) and $J_3$ rounds of \texttt{Iterative Decentralized Consensus Method} (Algorithm 3 in manuscript) is ${\mathcal{O}}(J_1 J_2 J_3 |\mathcal{N}||\mathcal{B}|)$. This time complexity results is a direct consequence of the following two facts:
\begin{enumerate}
    \item The UEs in $\mathcal{N}$, BSs in $\mathcal{B}$, and DCs in $\mathcal{S}$ compute their associated network optimization parameters in parallel within the \texttt{Successive Convex Solver} (Algorithm 1 in manuscript). Therefore, the time complexity at any node is ${\mathcal{O}}(|\mathcal{N}||\mathcal{B}|)$ per iteration.
    \item The number of machine instructions involved in each iteration of \texttt{Iterative Primal-Dual Method} (Algorithm 2 in manuscript) is linearly proportional to the number of the parameters it updates. Therefore, each iteration of \texttt{Iterative Primal-Dual Method} leads to worst case time complexity ${\mathcal{O}}(J_2*\text{max number of variables to be updated across all the nodes})$. Here, $J_2$ is the total number of iterations of \texttt{Iterative Decentralized Consensus Method} (Algorithm 3 in manuscript).
\end{enumerate}

The time and storage complexity associated with ML model training involving neural network architectures are explained  in Sec. 1.2.1 of~\cite{NN_complexity}. 
}

\end{document}